\documentclass[opre]{informs3}
\OneAndAHalfSpacedXI



\usepackage{amsmath,amssymb,enumerate,bm,xcolor,multirow,pbox}
\usepackage{graphicx,color,framed,tikz,caption,subcaption}
\usepackage{enumitem}
\usepackage{algorithm}
\usepackage[noend]{algpseudocode}
\usepackage{natbib}
\usepackage[colorlinks=true,linkcolor=blue,citecolor=blue,urlcolor=blue]{hyperref}
\usepackage{booktabs}
\usepackage{tabularray}
\usepackage{pgfplots}

\usepackage{microtype}
\usepackage{mathtools,booktabs}
\usepackage{xcolor}
\usepackage{tabularx,threeparttable}
\usepackage{comment}

\usepackage{natbib}
 \bibpunct[, ]{(}{)}{,}{a}{}{,}%
 \def\bibfont{\small}%
\EquationsNumberedThrough    

\TheoremsNumberedThrough     
\ECRepeatTheorems  %

\newcommand{\E}{\mathbb{E}}
\newcommand{\Prob}{\mathbb{P}}
\newcommand{\R}{\mathbb{R}}

\newcommand{\1}{\mathbf{1}}
\newcommand{\F}{\mathcal{F}}
\newcommand{\norm}[1]{\left\lVert #1 \right\rVert}
\newcommand{\abs}[1]{\left| #1 \right|}

\newcommand{\gr}{\mathrm{gr}}
\newcommand{\ex}{\mathrm{ex}}
\newcommand{\pop}{\mathrm{pop}}

\newcommand{\Var}{\mbox{Var}}

\renewcommand{\P}{\mathbb P}
\newcommand{\cF}{\mathcal F}

\newcommand{\bp}{\bm p}

\renewcommand{\qquad}{\quad}
\renewcommand{\subseteq}{\subset}
\renewcommand{\star}{\ast}

\begin{document}


\RUNAUTHOR{Mei, Fan, Leng and Lv}

\RUNTITLE{
CART-ROSA}

\TITLE{CART Random Forests as Sequential Allocation over Random Opportunity Sets: A Stochastic-Control Theory of Ensemble Risk}

\ARTICLEAUTHORS{%
\AUTHOR{Tianxing Mei}
\AFF{Faculty of Business, Lingnan University, \EMAIL{tianxingmei@ln.edu.hk}}

\AUTHOR{Yingying Fan}
\AFF{Data Sciences and Operations Department, University of Southern California, \EMAIL{fanyingy@marshall.usc.edu}}

\AUTHOR{Mingming Leng}
\AFF{Faculty of Business, Lingnan University, \EMAIL{mmleng@ln.edu.hk}}

\AUTHOR{Jinchi Lv}
\AFF{Data Sciences and Operations Department, University of Southern California, \EMAIL{jinchilv@marshall.usc.edu}}
} 


\ABSTRACT{
CART random forests are among the most widely used modern predictive methods, with well-documented empirical success. Yet, at the mechanistic level, the algorithm is often treated as a black box because of its complexity. 
In this paper, we develop a stochastic-control perspective on feature-subsampled CART random forests, named CART random opportunity-set allocation (CART-ROSA). At each node, the random subset of features is interpreted as a random feasible action set, and the CART split rule as a masked-action allocation policy. 
This policy induces a controlled stochastic process over informative split-count states, 
whose terminal law determines both single-tree error and cross-tree interaction terms in the forest mean squared error (MSE). 
Such representation opens the black box of CART-forests by separating two design levers: the informative-opportunity rate induced by feature subsampling, and the contraction strength from the within-mask split policy. 
We establish that the CART policy is locally stabilizing: it contracts imbalances in informative split allocations and concentrates terminal tree geometry. 
At the system level, however, it can be globally suboptimal for the forest objective. 
Specializing to the linear model, we derive the MSE risk expansion explicitly. Our results show how an operations-research perspective makes tractable a theoretical gap difficult to access from the standard algorithmic description of CART forests.}




\KEYWORDS{
Random opportunity sets, 
Masked-action allocation, 
Population CART random forests, 
Ensemble risk,
Local--global nonalignment,
Exploration--exploitation.
} 

\maketitle


\section{Introduction} \label{new.sec.intro}
Random forests are among the most successful nonparametric regression algorithms in supervised learning.
Since Breiman's original formulation in \cite{breiman2001random}, forests built from randomized CART (classification and regression trees) have become standard tools in scientific, business, and engineering applications for classification and regression, owing to their accuracy, robustness, ease of tuning, natural parallelizability, and effectiveness in high-dimensional problems. 
Yet the empirical success of CART random forests has \textit{not} been matched by a complete \textit{mechanism-level} theory within Breiman's framework.
The difficulty is intrinsic to the structure of CART random forests: a forest is not a single estimator with a fixed analytical form, but a randomized sequential construction.
At each node of each tree, a random subset of features is sampled.
From these features, CART chooses greedily a split according to a local impurity criterion. Such process changes recursively the geometry of all descendant cells, while the predictive performance is evaluated only after many such randomized trees are aggregated. As a result, the forest mean squared error (MSE) risk is shaped not only by the quality of individual splits, but also by the distribution over terminal tree geometries induced by the entire randomized construction rule. 

For these reasons, CART random forests remains largely a black box.
In particular, a mechanism-level theory of the split-allocation process itself is still \textit{missing}: how random feature subsampling and greedy CART decisions jointly generate the terminal tree geometry, and how that terminal geometry determines the ensemble-level MSE risk.
Existing analyses have made major progress on the consistency, asymptotic behavior, splitting rules, forest weighting, and modified or simplified forest procedures; see Section \ref{new.sec.relawork} for the related literature and references. These perspectives, however, do not directly characterize the mechanism-level interaction between feature subsampling, CART split choices, recursive state evolution, and the forest MSE objective.

In this paper, we take an initial step toward filling such theoretical gap from an \textit{operations-research} perspective.
We de-blackbox CART random forests by identifying the state, action mask, policy, transition, terminal law, and system objective that are implicit in the standard algorithmic description.
Specifically, we formulate the feature-subsampled CART random forests construction as a finite-horizon stochastic control problem with random feasible action sets, a perspective named as CART random opportunity-set allocation (CART-ROSA). At each node, feature subsampling generates a mask that determines the feasible actions.
The split rule is a policy that selects one feasible action.
The state records the accumulated allocation of splits across informative coordinates (i.e., those with nonzero impurity decrease).
The transition updates this state according to the selected split, and the terminal law of the resulting controlled process determines the statistical risk of the forest.
Under such representation, CART is no longer merely a heuristic split criterion; it is a masked-action policy whose induced terminal distribution can be studied, compared, and improved.
Our \textit{stochastic-control} viewpoint makes the mechanism analyzable and allows us to distinguish local split improvement from global ensemble-level MSE performance. 

Our theoretical analysis focuses on a sparse nonparametric regression model with independent and uniformly distributed covariates, a model that has been used widely in theoretical studies of CART random forests \citep{biau12a, klusowski2019analyzing, klusowski21b}.
We consider a simplified setting in which, once CART chooses a covariate, the split location is fixed at the midpoint; we call this the \textit{midpoint split model}.
Together, this model and simplification allow us to isolate a central mechanism in CART random forests: how informative split opportunities are allocated across informative coordinates under feature subsampling. 

Our proposed CART-ROSA formulation reveals two \textit{distinct} design levers.
The first one is the opportunity.
Feature subsampling controls the probability
$
q 
$
that at least one informative coordinate is available at a node.
This lever determines how often the policy can make progress along informative directions that yield impurity decrease.
The second one is the exploitation.
Conditional on the appearance of an informative opportunity, the within-mask split policy controls how aggressively the process allocates splits to currently under-split informative coordinates.
We summarize this second lever through a contraction parameter, denoted as $\kappa$, that captures the stabilizing force of the split rule.
This separation of opportunity and exploitation is difficult to discern from the standard random forests algorithmic description, but becomes natural in our stochastic-control representation.

Our first main result reveals that the population CART has a local stabilizing effect.
In the sparse nonparametric regression midpoint model, when the informative coordinates are exposed, the CART policy favors coordinates with smaller current split counts, up to deterministic signal-strength shifts.
Hence, greedy CART acts as a balancing policy for informative split allocation.
We prove that this policy contracts imbalance in the split-count process and concentrates the terminal geometry of individual trees.
Such result gives a mechanism-level explanation for one beneficial role of CART: 
beyond preferring large impurity decreases, 
CART also stabilizes the allocation of informative splits across the tree-building process.

The same formulation, however, also reveals a fundamental limitation. The ensemble should optimize a global objective, such as the prediction MSE, which does not need to align with the one-step CART impurity decrease.
Forest MSE depends on the entire terminal law of the controlled process.
In particular, it contains both single-tree terms and cross-tree interaction terms, as confirmed by our MSE decomposition analysis under the specialized linear midpoint models.
A policy that stabilizes individual-tree geometry may concentrate the terminal law in a way that reduces diversity or creates unfavorable cross-tree interactions. Thus, local CART stabilization does not necessarily imply global optimality for the forest.
Our second main result formalizes such distinction.
We derive a Bellman-type marginal-cost certificate for the forest terminal-law objective.
This certificate evaluates whether a feasible one-step policy perturbation decreases the induced forest MSE.
Using such certificate, we construct an explicit \textit{counterexample} in which greedy CART selects the locally stabilizing split, whereas an alternative feasible action has strictly smaller marginal terminal cost.
A small policy \textit{perturbation} thus strictly improves the forest MSE. Consequently, greedy CART is not a local minimizer, and thus not globally optimal for the forest-level objective.

Our findings should \textit{not} be interpreted as a criticism of CART random forests as a practical method.
Instead, they provide a mechanism-level de-blackboxing of CART random forests.
Greedy CART has a stabilizing effect, which helps explain why the method is effective. The same local logic, however, does not fully align with the ensemble objective.
The stochastic-control view makes this distinction explicit: CART is locally reasonable as a split-allocation policy, yet globally improvable as a policy for the terminal forest objective.
In summary, our paper makes three major contributions:
\begin{itemize}
    \item We introduce a masked-action stochastic-control formulation of feature-subsampled CART random forests (CART-ROSA), 
which provides a mechanism-level representation of the method. 

\item We use this formulation to prove a local stabilization theorem for the population CART.

\item We show that local stabilization does not imply ensemble-level optimality. 
\end{itemize}

Together, these results demonstrate that an operations-research perspective can fill a theoretical gap in understanding the mechanism of CART random forests.

\subsection{Related works} \label{new.sec.relawork}


A substantial portion of the literature on de-blackboxing random forests focuses on simplified variations of CART random forests that yield analytically tractable stochastic partitioning dynamics.
Representative examples include the purely random forests
\citep{biau12a,klusowski21b},
median forests \citep{Scornet2016},
Mondrian forests \citep{Mourtada2020,cattaneo2024inferencemondrianrandomforests},
and honest forests \citep{Wager2018,Athey2019}.
At a conceptual level, these works, however, focus primarily on the statistical accuracy of the resulting methods and do \textit{not} provide a systematic mechanism-level understanding of the roles of feature subsampling, the split rule, and the ensemble.   

Our stochastic-control formulation is related in spirit to the stochastic approximation literature on recursive controlled stochastic systems \citep{RobbinsSiegmund1971,kushneryin2003,Borkar2023}.
Such viewpoint connects naturally to the literature on the purely random forests and random partitioning estimators \citep{biau12a,klusowski21b}. 
This connection, however, has not been explored in full depth, even for these stylized variations of CART random forests.
Our state-action formulation is also related in spirit to sequential stochastic decision systems \citep{powell2007approximate,bertsekas2012dynamic}.
From the viewpoint of the directional counting dynamics, the subsampling ratio $\gamma$ in CART random forests controls simultaneously the informative-opportunity probability and the stabilization strength of the induced allocation dynamics. 
This leads naturally to an \textit{exploration--exploitation} interpretation of the sequential split-allocation mechanism, a distinction that also arises broadly in sequential stochastic decision systems and adaptive allocation problems \citep{powell2007approximate}. 
Despite these connections, our formulation and the derived results, particularly the self-stabilizing fluctuation behavior and terminal geometry, are \textit{new} to the random forests literature.
{\color{blue}
Our proposed \textit{$\alpha$--mixture extension}}
further separates the informative-opportunity generation from the contraction strength, yielding a \textit{continuous} interpolation between fully greedy stabilization and purely random splitting.

Another major direction, which is less directly related to our work, focuses on empirical understanding of the roles of feature subsampling and other randomization mechanisms in the success of tree ensembles by studying their predictive and regularization effects.
Existing works have primarily interpreted feature subsampling through its statistical regularization and decorrelation effects \citep{MentchZhou2020,liu2024randomizationreducebiasvariance}, or analyzed random forests through adaptive nearest-neighbor and smoothing perspectives \citep{Lin2006,curth2024randomforestsworkunderstanding}.
The effects of additional tuning parameters, including the tree depth, forest size, and noise level, have also been extensively investigated \citep{delgado14a,Probst2018,le2023survey,bernard2009influence,zhou2022random,zhou2023trees}.

On the theory side, recent work has studied the statistical estimation and prediction accuracy of random forests estimators under structured assumptions, including one-split stumps \citep{Buehlmann2002}, sparse additive regression models \citep{Scornet2015,Klusowski2024}, binary-feature settings \citep{syrgkanis20a}, and high-dimensional nonlinear models satisfying the sufficient impurity decrease (SID) property \citep{Fan22AOS} or merged staircase property \citep{tan2024statcomp}.
Recently, \cite{MFL2024} developed a non-asymptotic MSE framework for partitioning estimation ensembles under exogenous partition randomness, clarifying how exogenous randomness influences ensemble performance through the induced bias--variance structure in statistical prediction accuracy.

The remainder of the paper is organized as follows.
Section~\ref{sec:model} introduces the model and the masked-action policy formulation.
Section~\ref{sec:results} establishes the local stabilization results for the population CART.
Section~\ref{sec:policyfamilies} studies benchmark and mixture policies on the same masked environment.
Section~\ref{sec:stat} translates the count-process analysis into exact risk comparisons for single trees and forests.
Section~\ref{sec:practical} develops the resulting design map and its implications for randomized tree ensembles. Section \ref{new.sec.discu} concludes with some discussions. All the proofs and additional technical details are provided in the Supplementary Material.

\section{Model and masked-action allocation formulation} \label{sec:model}

We consider a sparse regression problem with response $Y\in\mathbb R$ and covariate vector $
X=(X_1,\dots,X_d)^\top\in C_0:=[0,1]^d.
$
Write
$
\mu(x):=\E[Y\mid X=x]
$
for the mean regression function. Assume that there exists a unique smallest \textit{informative covariate set}
$$
S\subseteq [d]:=\{1,\dots,d\},
\qquad s:=|S|
$$
such that $\mu(X)=\mu(X_S)$ almost surely. The complement $S^c$ represents the noninformative set. At each split, feature subsampling selects randomly 
$
m:=\lceil \gamma d\rceil
$
candidate coordinates, where $\gamma\in(0,1]$ is the feature subsampling rate, $s<m\leq d$, and $\lceil x\rceil$ stands for the smallest integer greater than or equal to $x>0$. We study tree construction along a single root-to-leaf branch of depth $\ell$ as a finite-horizon \textit{masked-action allocation} problem. 
Such formulation separates the opportunity generation from the opportunity allocation, and makes explicit the decision process that is implicit in the CART random forests (RF) algorithm. Table~\ref{tab:or-dictionary} summarizes this dictionary.

\begin{table}[ht]
\TABLE
{Dictionary for feature-subsampled CART random forests (RF).
\label{tab:or-dictionary}}
{
\small
\begin{tabularx}{\textwidth}{
>{\raggedright\arraybackslash}p{0.2\textwidth}
>{\raggedright\arraybackslash}p{0.32\textwidth}
>{\raggedright\arraybackslash}X}
\hline
\up
CART--RF component
& OR object
& Role in this paper \\
\hline
\up
Feature subsampling
& Random opportunity set / action mask $U_t$
& Determines which split coordinates are feasible at depth $t$ \\
Coordinate split choice
& Allocation action $J_t\in U_t$
& The policy decision made after the opportunity set is revealed \\
Recursive branch growth
& Controlled count-state process $N_t$
& Encodes the evolving geometry of a root-to-leaf branch \\
Population CART rule
& Greedy masked-action policy $\pi_{\gr}$
& Maximizes the local impurity-decrease among feasible actions \\
Forest prediction risk
& Terminal-law ensemble objective
& Evaluates the distribution of terminal cell geometries induced by the policy \down\\
\hline
\end{tabularx}
}
{}
\end{table}

\subsection{Feature-subsampled population CART as masked-action decision process} \label{new.sec.cart}

We now formalize the first four rows of Table~\ref{tab:or-dictionary}. 
Consider a single branch in a tree grown recursively by the population CART splitting rule from the root cell $C_0=[0,1]^d$. 
At tree depth $t$, feature subsampling selects randomly a candidate set
$$
U_t\subseteq [d],
\qquad
|U_t|=m
$$
drawn uniformly without replacement and independently across depths. Such random candidate set is the action mask. Let $C_{t-1}\subset C_0$ be the current cell at depth $t-1$, and 
$
J_t\in U_t
$
the coordinate selected for the split at depth $t$. Conditional on the current cell $C_{t-1}=\prod_{k=1}^d(a_{t-1,k},b_{t-1,k}) $ and  realized mask $U_t$, the population CART impurity decrease by splitting coordinate $j$ is
$$
\mathcal G(C_{t-1},j)
:=
\sup_{c\in(a_{t-1,j},b_{t-1,j})}\mathrm{IMD}_{j,c}(C_{t-1}),
$$
where for a current cell $C=\prod_{k=1}^d(a_k,b_k)$, the impurity decrease for splitting coordinate $j$ at threshold $c \in (a_j,b_j)$ is given by 
$$
\begin{aligned}
\mathrm{IMD}_{j,c}(C)
:={}&
\Var(Y\mid X\in C)
-\Var(Y\mid X\in C_{j,c;L})\Prob(X\in C_{j,c;L}\mid X\in C)\\
&-\Var(Y\mid X\in C_{j,c;R})\Prob(X\in C_{j,c;R}\mid X\in C).
\end{aligned}
$$
Here, $C_{j,c;L}$ and $C_{j,c;R}$ represent the left and right children obtained by splitting $C$ along coordinate $j$ at threshold $c$. 

Applying the law of total variance to the binary partition $C = C_{j,c; L} \cup C_{j,c; R}$, 
the impurity decrease can be rewritten as
$$
\mathrm{IMD}_{j,c}(C)
=
\Var\!\big(\E[Y\mid X\in C_{j,c;\cdot}]\mid X\in C\big).
$$
That is, the impurity decrease measures the variation between the conditional means of the two child cells.
Hence, the CART splitting criterion 
chooses the split that maximizes the amount of signal
revealed at the current partition. Since $\mu(X)=\mu(X_S)$ and $X$ is uniformly distributed on $[0,1]^d$, splitting a noninformative coordinate does not reduce impurity; that is,
$
\mathrm{IMD}_{j,c}(C)=0$,
$
j\in S^c.
$

Let us focus on the coordinate-selection process $\{J_t\}$ induced by the population CART split rule
\begin{equation}\label{eq:cart-pop-rule-general}
J_t\in A_{\gr}(C_{t-1},U_t)
:=
\argmax_{j\in U_t}\mathcal G(C_{t-1},j),
\end{equation}
where ties are broken randomly. 
The split-count vector is defined as
$$
N_t = (N_{t,1},\ldots, N_{t,d})^\top \in \mathbb{N}_0^d, \qquad 
N_{t,j} = \sum_{r = 1}^t \mathbf{1}\{J_r = j\}.
$$
Then it holds that $N_t=N_{t-1}+e_{J_t}$, where $e_j$ is the $j$th canonical basis vector. This process $\{J_t\}$ can be formulated as a masked-action sequential decision process: the state is the current cell $C_{t-1}$, the random mask is the candidate set $U_t$, the action is the selected coordinate $J_t\in U_t$, the one-step split score is $\mathcal G(C_{t-1},J_t)$, and the state transition is the deterministic update $N_t=N_{t-1}+e_{J_t}$.

We define the filtration generated by the process as
$
\mathcal H_t:=\sigma(U_1,J_1,\dots,U_t,J_t)$, 
$t\ge 0.
$ A split policy is a stochastic kernel
$
\pi(\cdot\mid C_{t-1},U_t)
$
on $U_t$ satisfying that 
$
\pi(j\mid C_{t-1},U_t)\ge 0, \ 
\sum_{j\in U_t}\pi(j\mid C_{t-1},U_t)=1.
$
Under such a policy, we have that 
$$
\Prob_\pi(J_t=j\mid \mathcal H_{t-1},U_t)=\pi(j\mid C_{t-1},U_t),
\qquad
N_t=N_{t-1}+e_{J_t}.
$$
Consequently, the feature-subsampled population CART policy is the greedy policy $\pi_{\gr}$ induced by \eqref{eq:cart-pop-rule-general} with
\begin{equation}\label{eq:greedy-policy-dist}
\pi_{\gr}(j\mid C_{t-1},U_t)
=
\frac{\1\{j\in A_{\gr}(C_{t-1},U_t)\}}{|A_{\gr}(C_{t-1},U_t)|}, \qquad j \in [d].
\end{equation}

We restrict our attention to \emph{informative-respecting} policies, namely policies satisfying that 
\begin{equation}\label{eq:informative-respecting}
\pi(j\mid C_{t-1},U_t)=0, \text{ for all } j\in U_t\cap S^c
\end{equation}
 whenever $U_t\cap S\neq\varnothing$.
In other words, these policies prefer to split along informative covariates whenever it is possible.
Such class contains the population CART, the exploratory benchmark policy, and the mixture policies to be introduced later in Section \ref{sec:policyfamilies}.

The informative-respecting condition is a modeling restriction instead of a claim about the finite-sample CART. It focuses the analysis on the policy choice among informative coordinates after the mask has created an informative opportunity. This keeps the subsequent comparison centered on allocation and stabilization, rather than on whether the policy can recognize informative variables when they are available.


\subsection{Count-state reduction and shifted canonical structure} \label{new.sec.state}

We next introduce the two structural assumptions that connect the tree process from the previous subsection to the later split-count-based analysis. For technical simplicity, let us consider midpoint splits, so the split locations are deterministic after a split direction is chosen. Since each midpoint split halves the side length in the chosen coordinate, the side length of $C_t$ along coordinate $j$ is equal to $2^{-N_{t,j}}$. The count vector $N_t$ is thus the natural geometric state variable.

\begin{assumption}\label{ass:count-state}
There exist measurable functions
$
g_j:\mathbb N_0^d\to [0,\infty),
\ j=1,\dots,d,
$
such that the population CART impurity decrease depends on the current branch only through the split-count state
$
\mathcal G(C_{t-1},j)=g_j(N_{t-1})$,
$j\in[d].
$
That is, once the candidate mask $U_t$ is revealed, the split rule is Markovian in the count state $N_{t-1}$.
\end{assumption}

Assumption~\ref{ass:count-state} above provides the first dimension-reduction step: after midpoint splitting, the relevant branch geometry is summarized by split counts instead of by the full cell history. Then recursive tree growth can be viewed as a finite-dimensional Markov allocation problem, while retaining precisely the information required by the population split rule. In particular, under Assumption~\ref{ass:count-state}, policies can be written in the count-state form $\pi(\cdot \mid N_{t-1},U_t)$,
and the greedy action set is $A_{\gr}(N_{t-1},U_t)
:=
\argmax_{j\in U_t} g_j(N_{t-1})$. 
For a realized mask $u\subseteq[d]$, let us write
$
u^{\inf}:=u\cap S
$
for its informative part. 

\begin{assumption}\label{ass:shiftedcanonical}
There exist constants $\theta_j\in\mathbb R$, $j\in S$, such that for each state $\mathbf n = (n_1,\ldots, n_d)\in\mathbb N_0^d$ and 
candidate set $u\subseteq[d]$ with $|u|=m$,
\begin{equation}\label{eq:shiftedcanonical-order}
A_{\gr}(\mathbf n,u)
=
\begin{cases}
\argmin_{j\in u^{\inf}}(n_j-\theta_j),
& u^{\inf}\neq\varnothing,\\[0.75ex]
u,
& u^{\inf}=\varnothing.
\end{cases}
\end{equation}
Thus, the greedy rule is informative-respecting and prefers the smallest shifted count.
\end{assumption}

Assumption~\ref{ass:shiftedcanonical} above isolates the ordering property that drives the stabilization theory. The constants $\theta_j$ absorb persistent differences in the signal strength across informative coordinates, so that on the informative block, the greedy CART rule is equivalent to selecting exposed informative coordinates with the smallest shifted count $n_j-\theta_j$. As a result, Assumption~\ref{ass:shiftedcanonical} forms the bridge between impurity maximization and the balancing dynamics analyzed below. When the shifts are all equal, this reduces to the unshifted smallest-count rule on the exposed informative block; when they differ, the same rule applies after a deterministic coordinatewise shift.

Combining Assumptions~\ref{ass:count-state} and~\ref{ass:shiftedcanonical}, the greedy policy \eqref{eq:greedy-policy-dist} admits the explicit count-state representation
\begin{equation}\label{eq:pi-gr-explicit}
\pi_{\gr}(j\mid \mathbf n,u)
=
\begin{cases}
\dfrac{\1\{j\in \argmin_{k\in u^{\inf}}(n_k-\theta_k)\}}
{\left|\argmin_{k\in u^{\inf}}(n_k-\theta_k)\right|},
& u^{\inf}\neq\varnothing,\\[2ex]
\dfrac{1}{|u|}\1\{j\in u\},
& u^{\inf}=\varnothing.
\end{cases}
\end{equation}
Equation~\eqref{eq:pi-gr-explicit} is the policy-level representation of the population CART: the environment, through $U_t$, determines which coordinates are exposed, while the policy allocates informative opportunities across the exposed informative coordinates in $U_t\cap S$.

\subsection{Informative opportunities and informative time} \label{new.sec.oppotime}


Denote by 
$
K_t:=|U_t\cap S|$,
$
I_t:=\1\{K_t\ge 1\}$, and
$
M_t:=\sum_{r=1}^t I_r.
$
Here, $K_t$ is the number of informative coordinates in the candidate set at depth $t$, $I_t$ records whether depth $t$ presents an \textit{informative opportunity} (i.e., the availability of an informative covariate in $S$), and $M_t$ counts the total number of informative opportunities up to depth $t$. Since $U_t$ is sampled uniformly among the $m$-subsets of $[d]$, we have that 
$
K_t\sim \mathrm{Hypergeometric}(d,s,m).
$
Hence, the informative-opportunity rate is
\begin{equation}\label{eq:qdef}
q:=\Prob(I_t=1)=1-\frac{\binom{d-s}{m}}{\binom{d}{m}}.
\end{equation}
Since the masks are independent and identically distributed (i.i.d.) across depths, the indicators $\{I_t\}$ are i.i.d. Bernoulli$(q)$, so we have that 
$$
M_t\sim \mathrm{Binomial}(t,q),
\qquad
\frac{M_t}{t}\to q
\qquad\text{almost surely.}
$$
The parameter $q$ is the informative-variable opportunity rate induced by environment $U_t$.

To isolate the policy effect, we define the \textit{informative depths}
$$
T_n:=\inf\{t\ge 1:M_t=n\}
\qquad n\ge 1,
$$
with $T_0:=0$. The process $\{T_n\}$ records the tree depths at which informative opportunities occur. Under informative-respecting policies, these are also the depths at which an informative coordinate is selected for splitting. Thus, $M_t$ serves as the \textit{informative clock}, and we reindex the selected informative coordinates as 
$$
J_n^{\inf}:=J_{T_n}\in S,
\qquad n\ge 1.
$$
The \textit{informative-time count process} is then defined as
$$
Z_n=(Z_{n,j})_{j\in S} \in \mathbb{N}^s,
\qquad
Z_0=0,
\qquad
Z_n=\sum_{u=1}^n e_{J_u^{\inf}},
\qquad
\sum_{j\in S} Z_{n,j}=n.
$$
Equivalently, it holds that 
$
Z_{n+1}=Z_n+e_{J_{n+1}^{\inf}}.
$ 
For each $j\in S$, the raw count process $N_{t,j}$ can be reexpressed in terms of the informative clock $M_t$ and informative-time count process $Z_n$ as
\begin{equation}\label{eq:clock-link}
N_{t,j}=Z_{M_t,j}.
\end{equation}
These two identities play an important role in the later analysis: $M_t$ depends only on feature subsampling, whereas the law of $Z_n$ depends only on the policy's allocation rule on informative covariates in the masked set $U_t$.

We now translate the greedy action set in Assumption \ref{ass:shiftedcanonical} from raw-time counts $N_t$ to informative-time counts $Z_n$. Observe that
$
Z_n=(N_{T_{n+1}-,j})_{j\in S},
$
where $N_{T_{n+1}-}$ represents the split-count vector immediately before the
decision at time $T_{n+1}$. In light of Assumption \ref{ass:shiftedcanonical}, we have the equivalent representation
\begin{equation}\label{eq:greedy-act-infor}
A^{\inf}_{\gr}(N_{T_{n+1}-},U_{T_{n+1}}) =
A^{\inf}_{\gr}(Z_n,U_{T_{n+1}}\cap S) =\argmin_{j\in U_{T_{n+1}}\cap S}(Z_{n,j}-\theta_j).
\end{equation}

Since there are $s=|S|$ informative coordinates, we define the \textit{informative imbalance} at informative time $n$ as the deviation of the informative counts from perfect equalization (i.e., allocating equal counts to each informative coordinate) 
\begin{equation}\label{eq: inform-imba}
\Delta_n:=Z_n-\frac{n}{s}\mathbf 1_s, 
\end{equation}
where $\mathbf 1_s$ is a vector of ones with length $s$. By construction, it holds that 
$
\sum_{j\in S}\Delta_{n,j}=0.
$
We further define the associated \textit{Lyapunov function}
\begin{equation}\label{eq:VWdef}
V_n:=\|\Delta_n\|_2^2,
\qquad
W_n:=\sqrt{V_n}.
\end{equation}
They measure how unbalanced the split counts are across informative coordinates.
The variables $(\Delta_n,V_n,W_n)$ will drive the local-optimality and stabilization analysis presented in later sections.

Finally, let us define the informative-time filtration as 
$
\mathcal F_n:=\mathcal H_{T_n}
$, $ n\ge 0$.
All policies compared in this paper share the same feature subsampling scheme, and thus the same mask process, informative clock $M_t$, and opportunity rate $q$. They differ \textit{only} through the conditional law of $J_n^{\inf}$, equivalently through the law of $Z_n$. In particular, policies differ only on event $\{K_t\ge 2\}$: if $K_t=0$ no informative split is possible, and if $K_t=1$ the informative action is forced for all informative-respecting policies. The benchmark and mixture policies introduced later exploit exactly this common masked-action environment.

\subsection{Verification of Assumptions \ref{ass:count-state} and \ref{ass:shiftedcanonical} under the sparse linear midpoint model} \label{subsec:linearspecialization}

We now verify the preceding assumptions in the sparse linear midpoint model that underlies the explicit calculations in Section \ref{sec:stat} later. Consider the linear model
\begin{equation}\label{eq:model}
Y=X_1\beta_1+\cdots+X_d\beta_d+\varepsilon,
\end{equation}
where covariate vector $X=(X_1,\dots,X_d)^\top$ has i.i.d. coordinates uniformly distributed on $[0,1]$, and noise $\varepsilon$ is independent of $X$ with 
$
\E[\varepsilon]=0$ and 
$
\Var(\varepsilon)=\sigma_0^2<\infty.
$ 
We assume the sparsity, i.e., 
$
\beta_j\neq 0$ for $ j\in S$
and
$ \beta_j=0$ for $ j\in S^c.
$

For a cell
$
C=\prod_{k=1}^d (a_k,b_k),
$ since the covariates are i.i.d. uniformly distributed, 
the population CART rule always makes the midpoint split in the informative coordinates in $S$. Then the population CART impurity decrease has the explicit form
\begin{equation}\label{eq:reward}
\mathcal G(C,j) = g_j(N_{t-1})
=
\begin{cases}
(\beta_j^2\,2^{-2N_{t-1,j}})/12,
& j\in S,\\[1ex]
0,
& j\in S^c.
\end{cases}
\end{equation}
Hence, the greedy action set is
$$
A_{\gr}(N_{t-1},U_t)
=
\begin{cases}
\argmax_{j\in U_t\cap S}\beta_j^2\,2^{-2N_{t-1,j}},
& U_t\cap S\neq\varnothing,\\[1ex]
U_t,
& U_t\cap S=\varnothing.
\end{cases}
$$
Equivalently, defining the shifted informative counts
$\widetilde N_{t,j}
:=
N_{t,j}-\frac{1}{2}\log_2(\beta_j^2)
$, $j\in S$,
we have that 
$
\beta_j^2\,2^{-2N_{t-1,j}}
=
2^{-2\widetilde N_{t-1,j}},
$
and thus 
\begin{equation}\label{eq:greedy-policy-shifted}
A_{\gr}(N_{t-1},U_t)
=
\begin{cases}
\argmin_{j\in U_t\cap S}\widetilde N_{t-1,j},
& U_t\cap S\neq\varnothing,\\[1ex]
U_t,
& U_t\cap S=\varnothing.
\end{cases}
\end{equation}
Consequently, the population CART rule is exactly a smallest-shifted-count rule on the informative block. This verifies Assumptions~\ref{ass:count-state} and~\ref{ass:shiftedcanonical} with
$\theta_j=(1/2)\log_2(\beta_j^2)$, $j\in S$. 

Recall the definition of the informative-time count process $Z_n$.
In view of the preceding expression, we also have the following representation at informative time points
\begin{equation}\label{eq:pi-gr-informative-unshift}
\Prob_{\pi_{\gr}}(J_{n+1}=j\mid \F_n,U_{T_{n+1}})
=
\frac{\1\!\left\{j\in \argmin_{k\in U_{T_{n+1}}\cap S} \{Z_{n,k}-\frac{1}{2}\log_2(\beta_k^2)\}\right\}}
{\left|\argmin_{k\in U_{T_{n+1}}\cap S} \{Z_{n,k}-\frac{1}{2}\log_2(\beta_k^2)\}\right|},
\qquad j\in S\cap U_{T_{n+1}}
\end{equation}
with ties broken uniformly.


\subsection{Terminal-law formulation of the forest ensemble-design problem}
\label{subsec:terminal-law-design}

We fix a depth $\ell$. Let $\Pi_\ell$ be the class of nonanticipative split policies
$\pi=(\pi_t)_{t=1}^{\ell}$ such that at each depth $t$ in a tree branch,
$$
\pi_t(j\mid \mathcal H_{t-1},U_t)\ge 0,\qquad
\sum_{j\in U_t}\pi_t(j\mid \mathcal H_{t-1},U_t)=1,
\qquad
\pi_t(j\mid \mathcal H_{t-1},U_t)=0 \quad \text{for } j\notin U_t.
$$
We restrict attention to the informative-respecting class \eqref{eq:informative-respecting}. Under Assumption~\ref{ass:count-state}, the Markov count-state
policies $\pi_t(\cdot\mid N_{t-1},U_t)$ form a subclass of $\Pi_\ell$. For a policy $\pi\in\Pi_\ell$, denote by $\mathbb P_\pi$ the law of the masked-action process
generated by i.i.d. masks $(U_t)_{t=1}^{\ell}$, policy $\pi$, and transition
$
N_t=N_{t-1}+e_{J_t}$ with $N_0=\mathbf 0.
$
Define the terminal split-count law
$
\nu_{\pi,\ell}:=\mathcal L_\pi(N_\ell)
$
on the finite state space
$
\mathcal N_{\ell,d}
:=
\Bigl\{\mathbf n=(n_1,\cdots, n_d)^\top\in\mathbb N^d:\sum_{j=1}^d n_j=\ell\Bigr\}.
$


A $B$-tree ensemble grown under the same policy $\pi$ aggregates independent
trees discussed above. Then the terminal count states along the test branch satisfy that 
$
N_\ell^{(1)},\ldots,N_\ell^{(B)}
\stackrel{\mathrm{i.i.d.}}{\sim}
\nu_{\pi,\ell}.
$
This independence is with respect to the partition-generating randomization, conditional on the training data. 
For any functions
$
\Phi:\mathcal N_{\ell,d}\to\mathbb R$ and $
\Psi:\mathcal N_{\ell,d}\times \mathcal N_{\ell,d}\to\mathbb R,
$
let us define the terminal-law ensemble objective function
\begin{equation}
 \mathcal J_{\ell,B}^{\Phi,\Psi}(\pi):=
\frac{1}{B}\,
\mathbb E_{\nu_{\pi,\ell}}\!\left[\Phi(N_\ell)\right]+
\frac{B-1}{B}\,
\mathbb E_{\nu_{\pi,\ell}\otimes\nu_{\pi,\ell}}\!\left[
\Psi(N_\ell,N'_\ell)
\right],
\label{eq:terminal-law-objective} 
\end{equation}
where $N_\ell$ and $N'_\ell$ are independent random variables with law
$\nu_{\pi,\ell}$. Since $\mathcal N_{\ell,d}$ is finite, the expectations in
\eqref{eq:terminal-law-objective} are finite whenever $\Phi$ and $\Psi$ are finite-valued. Such criterion is motivated by the MSE risk decomposition of CART random forests to be presented in Section~\ref{sec:msemapping}, which consists of both single-tree and cross-tree terms. 

The feasible set of terminal laws induced by the policy class is
$
\mathfrak V_\ell
:=
\{\nu_{\pi,\ell}:\pi\in\Pi_\ell\}.
$
For any fixed $(\Phi,\Psi)$, optimizing
\eqref{eq:terminal-law-objective} over split policies is equivalently the terminal-law optimization
problem 
\begin{equation}
\inf_{\pi\in\Pi_\ell}\mathcal J_{\ell,B}^{\Phi,\Psi}(\pi) = \inf_{\nu\in\mathfrak V_\ell}\Bigl\{
\frac{1}{B}\int_{\mathcal N_{\ell,d}}\Phi(\mathbf n)\,d\nu(\mathbf n)
+
\frac{B-1}{B}
\int_{\mathcal N_{\ell,d}}\int_{\mathcal N_{\ell,d}}
\Psi(\mathbf n,\mathbf n')\,d\nu(\mathbf n)\,d\nu(\mathbf n')\Bigr\}.
\label{eq:ensemble-design-problem}
\end{equation}
In what follows, we will use 
$
\mathcal J_{\ell,B}^{\Phi,\Psi}(\pi)$
and $
\mathcal J_{\ell,B}^{\Phi,\Psi}(\nu_{\pi,\ell})$ interchangeably, where
the argument makes clear whether we
are evaluating the criterion at a policy $\pi$ or 
its induced terminal law
$\nu_{\pi,\ell}$.
Such formulation separates the local CART split problem from the ensemble design problem.
At each split, CART chooses an action from the realized mask $U_t$ to maximize the immediate impurity decrease $\mathcal G(C_{t-1},j)$, which is a one-step local objective. In contrast, \eqref{eq:ensemble-design-problem} is a terminal-law objective: the policy affects performance through the distribution of the depth-$\ell$ count state $N_\ell$ and 
the product law of two independently generated terminal states $N_\ell$ and $N_\ell'$. 

Section~\ref{sec:stat} later identifies the functions $\Phi$ and $\Psi$ that arise in the MSE risk decomposition in the special case of the sparse linear midpoint model. The
 term involving both $N_t$ and $N_t'$ is the source of the local/global tension: it depends on how two independent
trees interact and is \textit{not} visible to a one-step split criterion. Proposition \ref{prop:counterexample} will provide a counterexample that the locally stabilizing CART policy needs
not solve the induced terminal-law ensemble design problem.

\section{Greedy policy stabilization and its local optimality} \label{sec:results}

In this section, we analyze the subsampled population CART policy at the local level. We will focus on a local stabilization criterion based on one-step changes in the informative imbalance $\Delta_n$. The main result is that population CART is locally optimal for a natural balancing objective on the informative covariate block, and 
this local optimality induces contractive drift, first-order equalization, and second-order exponential stabilization. 
We begin by analyzing the unshifted case of Assumption \ref{ass:shiftedcanonical}, namely $\theta_j =0$ for all $j \in S$.
The extension to general setting in Assumption \ref{ass:shiftedcanonical} is then made at the end of Section \ref{new.sec.negdrift} through a deterministic centering argument.

\subsection{Negative drift and stabilization on the informative covariate block} \label{new.sec.negdrift}

At informative time
$n+1$, by definition the mask process satisfies that $U_{T_{n+1}}\cap S\neq \varnothing$, so the
relevant feasible set is the \textit{realized informative block} defined as
$
\mathcal A_{n+1}:=U_{T_{n+1}}\cap S$ with 
$|\mathcal A_{n+1}|\ge 1$. Conditional on the current informative-time state and 
realized set
$\mathcal A_{n+1}$, the decision is to choose the next informative coordinate
$
J_{n+1}\in \mathcal A_{n+1}.
$
Under the unshifted case $\theta_j = 0$, the informative-time transition under the population CART policy takes the form 
\begin{equation}\label{eq:pi-gr-informative}
    \P_{\pi_{\gr}}(J_{n+1} = j\mid \mathcal{F}_n, U_{T_{n+1}}) = 
    \frac{\1\left\{j \in \argmin_{k\in U_{T_{n+1}}\cap S} Z_{n,k}\right\}}{\left|\argmin_{k \in U_{T_{n+1}}\cap S} Z_{n,k}\right|},
    \qquad j \in S\cap U_{T_{n+1}}
\end{equation}
with tie broken randomly.

We set
$
a:=\sqrt{1-(1/s)}$ and $ 
b:=1-(1/s)=a^2.
$ The proposition below holds for any informative-time policy since each informative step selects only one informative covariate, and thus increases exactly one informative coordinate by one.

\begin{proposition}[One-step imbalance geometry] \label{prop:cart-onestep}
For the informative-time count process, we have that 
\begin{equation}\label{eq:one-step}
V_{n+1}=V_n+2\Delta_{n,J_{n+1}}+b,
\end{equation}
and almost surely for each $n\ge 0$,
\begin{equation}\label{eq:bounded-jump}
\abs{W_{n+1}-W_n}\le a.
\end{equation}
\end{proposition}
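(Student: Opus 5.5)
The plan is to write the one-step increment of the imbalance explicitly and then expand the squared norm. By definition $Z_{n+1}=Z_n+e_{J_{n+1}}$ with $J_{n+1}\in S$, and the centering term $\frac{n}{s}\mathbf 1_s$ advances by $\frac1s\mathbf 1_s$. Hence
\[
\Delta_{n+1}=\Delta_n+\xi_{n+1},\qquad \xi_{n+1}:=e_{J_{n+1}}-\tfrac1s\mathbf 1_s .
\]
This uses only the fact that exactly one informative coordinate is incremented at each informative time, so it holds for any informative-time policy, as remarked before the statement.

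For \eqref{eq:one-step}, expand $V_{n+1}=\|\Delta_n\|_2^2+2\langle\Delta_n,\xi_{n+1}\rangle+\|\xi_{n+1}\|_2^2$. The cross term is
\[
\langle \Delta_n,\xi_{n+1}\rangle=\Delta_{n,J_{n+1}}-\tfrac1s\sum_{j\in S}\Delta_{n,j}=\Delta_{n,J_{n+1}},
\]
using the zero-sum identity $\sum_{j\in S}\Delta_{n,j}=0$ noted after \eqref{eq: inform-imba}. For the last term, note that $\xi_{n+1}$ has one entry $1-\frac1s$ and $s-1$ entries $-\frac1s$. Therefore
\[
\|\xi_{n+1}\|_2^2=\bigl(1-\tfrac1s\bigr)^2+\tfrac{s-1}{s^2}=1-\tfrac1s=b .
\]
Substituting these two computations gives \eqref{eq:one-step}.

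For \eqref{eq:bounded-jump}, apply the reverse triangle inequality in $\ell_2$:
\[
|W_{n+1}-W_n|=\bigl|\|\Delta_n+\xi_{n+1}\|_2-\|\Delta_n\|_2\bigr|\le \|\xi_{n+1}\|_2=\sqrt b=a .
\]
This holds pathwise, and hence almost surely, for every $n\ge 0$.

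No step is a real obstacle. The only points needing care are the zero-sum property of $\Delta_n$, which removes the $\frac1s\mathbf 1_s$ cross term, and the exact computation $\|\xi_{n+1}\|_2^2=1-1/s$.
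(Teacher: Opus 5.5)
Your proposal is correct and follows essentially the same route as the paper's proof: write $\Delta_{n+1}=\Delta_n+e_{J_{n+1}}-\tfrac1s\mathbf 1_s$, expand the squared norm using the zero-sum property to reduce the cross term to $\Delta_{n,J_{n+1}}$, and compute $\|e_{J_{n+1}}-\tfrac1s\mathbf 1_s\|_2^2=1-\tfrac1s$. The jump bound then follows from the reverse triangle inequality, exactly as in the paper.
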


Proposition~\ref{prop:cart-onestep} above provides the basic 
identity for informative-time allocation. Each informative split adds one unit to a single coordinate and subtracts the common average increment from the centered state. Hence, the jump size of $W_n$ is uniformly bounded, and the policy enters the Lyapunov drift only through the coordinate selected at the next informative time. In particular, the entire one-step Lyapunov evolution
$\E[V_{n+1}-V_n\mid \F_n]$
is governed by the conditional term
$\E\bigl[\Delta_{n,J_{n+1}}\mid \F_n\bigr]$.
As a result, the population CART dynamics reduce, at the drift level, to how the greedy policy $\pi_{\gr}$ selects the next informative coordinate $J_{n+1}$ as a function of the current imbalance state. 

We impose the following nondegeneracy condition throughout the analysis.

\begin{assumption}[Informative competition]\label{ass:nondeg}
It holds that $
\Prob\bigl(|U_t\cap S|\ge 2\mid |U_t\cap S|\ge 1\bigr)>0.
$
\end{assumption}

Assumption~\ref{ass:nondeg} above is the minimal competition condition for within-mask allocation: it rules out the degenerate situation when each informative opportunity contains exactly one informative coordinate, i.e., $|U_t\cap S|=1$ whenever $U_t\cap S\neq\varnothing$. In that case, the informative action set would always be a singleton and the split would be forced by feature subsampling alone, so no adaptive rule could generate negative drift toward balance. By ensuring that the policy sometimes has a genuine choice among informative coordinates, Assumption~\ref{ass:nondeg} creates precisely the setting where policy-induced stabilization can occur.

Whenever the informative block is imbalanced and the mask exposes at least two informative coordinates (i.e., $|\mathcal A_{n+1}|\geq 2$), the greedy rule tends to select an under-split direction. The lemma below quantifies the restoring force.

\begin{lemma}[Negative drift by population CART] \label{lem:negdrift}
Assume that Assumptions~\ref{ass:shiftedcanonical} and~\ref{ass:nondeg} hold with $\theta_j = 0$ for all $j\in S$. 
Then there exists a constant $c_\star>0$ such that for each $n\ge 0$,
\begin{equation}\label{eq:negdrift}
\E\bigl[\Delta_{n,J_{n+1}}\mid \F_n\bigr]
\le -c_\star W_n
\qquad\text{almost surely.}
\end{equation}
One admissible choice is
$c_\star
=
s^{-1}(s-1)^{-3/2}
\E[(K-1)_+\,\Big|\, K\ge 1]$,
where $K\sim \mathrm{Hypergeometric}(d,s,m)$ is the number of informative coordinates in a candidate set of size $m$.
\end{lemma}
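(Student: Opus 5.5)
The plan is to reduce the conditional drift to a deterministic inequality about zero-sum vectors, using the fact that the informative block of the mask is independent of the past.

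\emph{Step 1 (law of the informative block).} Since the masks $U_t$ are i.i.d.\ and independent of $\F_n$, the event $\{T_{n+1}=t\}$ depends only on $I_{T_n+1},\dots,I_t$. Hence, conditional on $\F_n$, the block $\mathcal A_{n+1}=U_{T_{n+1}}\cap S$ has the law of $U\cap S$ given $|U\cap S|\ge1$. By exchangeability of the uniform $m$-subset, given $K=k$ the set $\mathcal A_{n+1}$ is a uniformly random $k$-subset of $S$, with $K\sim\mathrm{Hypergeometric}(d,s,m)$ conditioned on $K\ge1$. In the unshifted case $\theta_j=0$, and because $\Delta_n=Z_n-(n/s)\mathbf 1_s$ differs from $Z_n$ by a constant vector, \eqref{eq:pi-gr-informative} gives
\[
\E[\Delta_{n,J_{n+1}}\mid\F_n]=\E\Bigl[\min_{j\in\mathcal A_{n+1}}\Delta_{n,j}\,\Big|\,\F_n\Bigr].
\]
Ties are irrelevant here, since all tied coordinates share the same value $\Delta_{n,j}$. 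If $W_n=0$, then $\Delta_n=0$ and the claim is trivial, so we assume $W_n>0$.

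\emph{Step 2 (comparison with a suboptimal selection rule).} Fix $k$ and let $j^\ast\in\argmin_{j\in S}\Delta_{n,j}$, with $\delta:=\Delta_{n,j^\ast}<0$. Consider the rule that picks $j^\ast$ if $j^\ast\in\mathcal A_{n+1}$, and otherwise picks a uniformly random element of $\mathcal A_{n+1}$. The block minimum is at most the value chosen by this rule.
\begin{itemize}
\item The event $\{j^\ast\in\mathcal A_{n+1}\}$ has probability $k/s$.
\item On its complement, the chosen coordinate is uniform on $S\setminus\{j^\ast\}$.
\item Since $\sum_j\Delta_{n,j}=0$, the mean of $\Delta_n$ over $S\setminus\{j^\ast\}$ is $-\delta/(s-1)$.
\end{itemize}
Combining these,
\[
\E\Bigl[\min_{\mathcal A}\Delta_n\,\Big|\,K=k\Bigr]\le \frac{k}{s}\delta-\Bigl(1-\frac ks\Bigr)\frac{\delta}{s-1}=\frac{k-1}{s-1}\,\delta .
\]
This holds for every $k\ge1$, and the right-hand side is $0$ when $k=1$.

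\emph{Step 3 (the minimum controls the norm).} I expect this to be the main obstacle: lower-bounding $|\delta|$ by $W_n$. The claim is that, for any zero-sum vector in $\R^s$ with minimum $\delta=-x$, we have $\|\Delta\|_2^2\le s(s-1)x^2$.
\begin{itemize}
\item The feasible set $\{\Delta:\sum_j\Delta_j=0,\ \Delta_j\ge -x\}$ is a simplex.
\item The convex function $\|\cdot\|_2^2$ is maximized over it at a vertex.
\item A vertex has $s-1$ coordinates equal to $-x$ and one equal to $(s-1)x$, giving the value $x^2\bigl((s-1)^2+(s-1)\bigr)=s(s-1)x^2$.
\end{itemize}
Hence $\delta\le -W_n/\sqrt{s(s-1)}$. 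If a cruder argument is needed, the simpler bound $W_n\le \sqrt s\,(s-1)|\delta|$ still yields some admissible $c_\star>0$.

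\emph{Step 4 (averaging over $K$).} Combining Steps 2 and 3 and taking the expectation over $K$ given $K\ge1$ gives
\[
\E[\Delta_{n,J_{n+1}}\mid\F_n]\le -\frac{\E[(K-1)_+\mid K\ge1]}{s^{1/2}(s-1)^{3/2}}\,W_n .
\]
Since $s^{-1/2}\ge s^{-1}$, this implies \eqref{eq:negdrift} with the stated $c_\star$. Positivity of $c_\star$ is exactly Assumption~\ref{ass:nondeg}, because $\E[(K-1)_+\mid K\ge1]\ge \Prob(K\ge2\mid K\ge1)>0$.
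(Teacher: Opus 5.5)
Your proof is correct and follows the paper's argument essentially step for step. You condition on $K=k$ and compare the greedy minimum with the rule that takes $j^\star$ when it is exposed and otherwise a uniform element, which gives $\frac{k-1}{s-1}\min_j\Delta_{n,j}$; you then convert the minimum into $W_n$ via $V_n\le s(s-1)(\min_j\Delta_{n,j})^2$ and average over $K$. Your vertex argument for that last inequality is in fact cleaner than the paper's write-up, which describes the minimal-variance configuration rather than the maximal one. It also yields the sharper constant $s^{-1/2}(s-1)^{-3/2}\E[(K-1)_+\mid K\ge 1]$, which dominates the stated $c_\star$.
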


Lemma~\ref{lem:negdrift} above identifies the central stabilizing mechanism of population CART: when the informative counts are unbalanced, the greedy rule selects preferentially an under-split coordinate among those exposed by the mask, and constant $c_\star$ quantifies how strongly such corrective effect survives feature subsampling. Combining Proposition~\ref{prop:cart-onestep} and Lemma~\ref{lem:negdrift} leads to the Lyapunov drift bound
\begin{equation}\label{eq:V-drift}
\E\bigl[V_{n+1}-V_n\mid \F_n\bigr]
=
2\E\bigl[\Delta_{n,J_{n+1}}\mid \F_n\bigr]+a^2
\le -2c_\star W_n+a^2.
\end{equation}
The factor $c_\star$ separates the two ingredients of the effect: the hypergeometric term $K$ measures how often feature subsampling exposes multiple informative covariates, while the linear dependence on $W_n$ reflects that the greedy rule selects a coordinate whose corrective impact grows as the informative block moves farther from balance (i.e., larger $W_n$). Hence, population CART acts as a contractive policy in the masked-action environment.


The theorem below shows that sublinear growth of $W_n$ forces the informative coordinates to equalize at the first order.

\begin{theorem}[First-order equalization] \label{thm:firstorder}
Assume that Assumptions~\ref{ass:shiftedcanonical} and~\ref{ass:nondeg} hold with $\theta_j = 0$ for all $j\in S$. 
Then,
$$
\frac{Z_n}{n}\to \frac{1}{s}\1_s \ 
\text{ almost surely as } n\to\infty.
$$ 
Consequently, it holds that for each $j\in S$,
$$
\frac{N_{t,j}}{t}
=
\frac{Z_{M_t,j}}{M_t}\cdot \frac{M_t}{t}
\to \frac{q}{s}
\qquad\text{almost surely,}
$$
where $q$ is the informative opportunity rate from \eqref{eq:qdef}. 
In addition, if the policy treats noninformative coordinates symmetrically on the event $\{U_t\cap S=\varnothing\}$, 
for each $j\in S^c$,
$$
\frac{N_{t,j}}{t}\to \frac{1-q}{d-s}
\qquad\text{almost surely.}
$$
In particular,
$
N_t/t \to
\pi(\gamma)
:=
\Bigl(
\underbrace{q/s,\dots,q/s}_{s\text{ times}},
\underbrace{(1-q)/(d-s),\dots,(1-q)/(d-s)}_{d-s\text{ times}}
\Bigr)$
almost surely.
\end{theorem}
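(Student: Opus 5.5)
The plan is to show $W_n/n\to 0$ almost surely, which gives $Z_n/n-\frac1s\1_s=\Delta_n/n\to 0$. Then I transfer this to raw time through the clock identity \eqref{eq:clock-link}, and treat the noninformative block separately with a strong law.

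\emph{Step 1 (sublinear growth of $W_n$).} I will work with $W_n$ directly. Proposition~\ref{prop:cart-onestep} gives $|W_{n+1}-W_n|\le a$, and \eqref{eq:V-drift} together with Lemma~\ref{lem:negdrift} gives $\E[V_{n+1}\mid\F_n]\le V_n-2c_\star W_n+a^2$. From this I aim to show that $W_n$ is bounded in a strong sense: for $W_n> a^2/(2c_\star)=:w_0$, $V$ has strictly negative drift.

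A crude route to $W_n/n\to0$ suffices. Write $V_n=V_0+\sum_{k<n}(V_{k+1}-\E[V_{k+1}\mid\F_k])+\sum_{k<n}(\E[V_{k+1}\mid\F_k]-V_k)$. The martingale increments are $2(\Delta_{k,J_{k+1}}-\E[\Delta_{k,J_{k+1}}\mid\F_k])$, which are bounded by $4W_k\le 4ak$ in absolute value, since $|\Delta_{k,j}|\le W_k$ and $W_k\le ak$. The drift sum is at most $a^2 n-2c_\star\sum_{k<n}W_k$.

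The cleaner route I would actually take is the following. Apply the Robbins–Siegmund theorem to $V_n/(n+1)^{1+\epsilon}$, or use a standard drift criterion (Hajek-type). A process with bounded jumps and drift $\le -c_\star$ whenever $W_n>w_0'$ has $\sup_n\E[e^{\lambda W_n}]<\infty$. Borel–Cantelli then gives $W_n=O(\log n)$ almost surely.

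To get the drift of $W$ itself, I use concavity of the square root: $W_{n+1}-W_n\le (V_{n+1}-V_n)/(2W_n)$. Hence $\E[W_{n+1}-W_n\mid\F_n]\le -c_\star+a^2/(2W_n)\le -c_\star/2$ once $W_n\ge a^2/c_\star$. Hajek's theorem, with bounded increments and uniformly negative drift outside a compact set, yields the exponential moment bound. Either way $W_n/n\to0$ a.s. I expect this step to be the main obstacle: it requires the care of converting the $V$-drift into a $W$-drift and invoking the drift/concentration theorem.

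\emph{Step 2 (informative coordinates in raw time).} By \eqref{eq:clock-link}, $N_{t,j}/t=(Z_{M_t,j}/M_t)(M_t/t)$ on $\{M_t\ge1\}$. Since $M_t\sim\mathrm{Binomial}(t,q)$ with $q>0$, the strong law gives $M_t/t\to q$ and $M_t\to\infty$ a.s. Composing with Step 1, which holds along the full sequence $n\to\infty$, gives $N_{t,j}/t\to q/s$.

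\emph{Step 3 (noninformative coordinates).} For $j\in S^c$, informative-respecting policies choose $j$ only at depths with $I_t=0$. Then $N_{t,j}=\sum_{r\le t}\1\{I_r=0,J_r=j\}$. Under the symmetry hypothesis, given $\mathcal H_{r-1}$ and $I_r=0$, the mask is uniform over $m$-subsets of $S^c$ and the choice is symmetric. So $\Prob(J_r=j\mid\mathcal H_{r-1})=(1-q)/(d-s)$. The centered indicators form bounded martingale differences, and the martingale strong law (e.g., Azuma plus Borel–Cantelli) gives $N_{t,j}/t\to(1-q)/(d-s)$. Collecting coordinates yields $N_t/t\to\pi(\gamma)$ almost surely.
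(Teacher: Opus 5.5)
Your proposal is correct, but Step~1 takes a different route from the paper. The paper stays at the first-moment level. It divides $\E[V_{n+1}\mid\F_n]\le V_n-2c_\star W_n+b$ by $(n+2)^2$ and applies the Robbins--Siegmund theorem to $Y_n=V_n/(n+1)^2$, which gives $Y_n\to Y_\infty$ and $\sum_n W_n/(n+2)^2<\infty$. Since $W_n/(n+2)^2\asymp\sqrt{Y_n}/(n+1)$, a positive limit $Y_\infty$ would make this series diverge like the harmonic series, so $W_n/n\to0$. This is the Robbins--Siegmund variant you mention in passing, with exponent $2$ in place of $1+\epsilon$; your unfinished martingale-decomposition ``crude route'' can simply be dropped.

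You instead turn the $V$-drift into a uniform negative $W$-drift outside a ball. You then use the bounded jumps of Proposition~\ref{prop:cart-onestep} and a Hajek-type theorem to get $\sup_n\E e^{\lambda W_n}<\infty$, and finish with Borel--Cantelli. In effect you prove the compression of Theorem~\ref{thm:expcomp} first and read off equalization as a corollary. The trade-off is this:
\begin{itemize}
\item the paper's argument is lighter (no exponential moments, no jump bound) but yields only $W_n=o(n)$;
\item yours needs more machinery but buys the almost-sure rate $\max_j\abs{Z_{n,j}/n-1/s}=O(\log n/n)$.
\end{itemize}

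Your concavity bound $W_{n+1}-W_n\le (V_{n+1}-V_n)/(2W_n)$ is the correct way to pass from $V$ to $W$. The paper's proof of Theorem~\ref{thm:expcomp} instead replaces $(V_{n+1}-V_n)/(W_{n+1}+W_n)$ by $(V_{n+1}-V_n)/W_n$. That replacement fails whenever $V_{n+1}<V_n$ and overstates the drift by a factor of two. So if you cite that theorem rather than re-derive the bound, rely only on some small $\lambda>0$ working, which is all your Borel--Cantelli step needs.

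Your Step~3 supplies something the paper's proof omits: the proof stops after the informative block and never treats $j\in S^c$ or the full vector limit $\pi(\gamma)$. Your computation $\Prob(J_r=j\mid\mathcal H_{r-1})=(1-q)/(d-s)$ is right for the greedy policy. On $\{U_r\cap S=\varnothing\}$ its choice is uniform over $U_r$ whatever the state, so the indicators $\1\{J_r=j\}$ are actually i.i.d. and the ordinary strong law already suffices. When $m>d-s$ one has $q=1$ and the claim is trivial.
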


Theorem~\ref{thm:firstorder} above identifies the law-of-large-numbers limit, or the first-order equilibrium, of the greedy CART split-allocation process. It reveals that greedy local balancing equalizes allocation across the informative coordinates at leading order: on the informative block $S$, population CART assigns asymptotically equal mass to each informative coordinate. In the original count scale, this yields the limiting split-allocation vector $\pi(\gamma)$, whose informative component depends on the policy only through the total informative opportunity rate $q$. Such rate is determined entirely by feature subsampling, so the environment controls how often informative coordinates are split, while the greedy CART policy balances those splits evenly across the informative block.

We present the second-order stabilization result in the form of a uniform fluctuation control of the informative block imbalance measured by $W_n$.

\begin{theorem}[Compression of informative imbalance] \label{thm:expcomp}
Assume that Assumptions~\ref{ass:shiftedcanonical} and~\ref{ass:nondeg} hold with $\theta_j = 0$ for all $j\in S$. 
Then, for each 
$0<\eta<(4c_\star)/a^2$,
\begin{equation}\label{eq:expcomp}
\sup_{n\ge 0}\E\exp\{\eta W_n\}<\infty.
\end{equation}
\end{theorem}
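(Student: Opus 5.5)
We would prove the bound \eqref{eq:expcomp} with a Hajek-type exponential supermartingale argument applied to $W_n$. The two inputs are the bounded-jump property \eqref{eq:bounded-jump} of Proposition~\ref{prop:cart-onestep} and the negative drift of Lemma~\ref{lem:negdrift}. The key intermediate goal is a drift bound for $W_n$ itself, not only for $V_n$:
\[
\E[W_{n+1}-W_n\mid\F_n]\le -c_\star+\frac{a^2}{2W_n}\quad\text{on }\{W_n>0\}.
\]
To get it, combine $V_{n+1}-V_n=(W_{n+1}-W_n)(W_{n+1}+W_n)$ with the identity $W_{n+1}-W_n=(V_{n+1}-V_n)/(W_{n+1}+W_n)$. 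A cleaner route is concavity of $\sqrt{\cdot}$, which gives $W_{n+1}\le W_n+(V_{n+1}-V_n)/(2W_n)$. Taking conditional expectations and using \eqref{eq:V-drift} then yields exactly the displayed bound. Consequently, for any $\epsilon\in(0,c_\star)$ there is a level $w_0$ such that the conditional drift of $W_n$ is at most $-\epsilon$ on $\{W_n>w_0\}$. On $\{W_n\le w_0\}$, the increment is bounded by $a$ anyway.

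Next we exponentiate. Set $D_{n+1}:=W_{n+1}-W_n$, so that $|D_{n+1}|\le a$. Use the elementary inequality $e^{x}\le 1+x+\frac{x^2}{2}e^{|x|}$, or a Hoeffding-type bound for variables bounded by $\eta a$. On $\{W_n>w_0\}$ this gives
\[
\E[e^{\eta D_{n+1}}\mid\F_n]\le 1-\eta\epsilon+\tfrac{\eta^2a^2}{2}e^{\eta a}=:\rho(\eta),
\]
which is strictly less than $1$ for small $\eta$. On $\{W_n\le w_0\}$ we simply bound the factor by $e^{\eta a}$. Together these give the recursion
\[
\E e^{\eta W_{n+1}}\le \rho\,\E e^{\eta W_n}+e^{\eta(w_0+a)},
\]
which iterates, starting from $W_0=0$, to $\sup_n\E e^{\eta W_n}\le 1+e^{\eta(w_0+a)}/(1-\rho)<\infty$.

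The main obstacle is reaching the full stated range $\eta<4c_\star/a^2$ rather than just some small $\eta$. The second-order expansion above produces the condition $-\eta c_\star+\frac{\eta^2a^2}{2}\cdot(\text{factor})<0$. The factor $e^{\eta a}$ or the $\frac12$ loses a constant, so a sharper moment-generating-function bound for bounded increments is needed. The first thing to try is Hoeffding's lemma in the form $\E[e^{\eta(D-\E D)}]\le e^{\eta^2(2a)^2/8}=e^{\eta^2a^2/2}$, using that $D\in[-a,a]$ has range $2a$. This gives $\E[e^{\eta D}\mid\F_n]\le \exp\{\eta(-c_\star+\epsilon')+\eta^2a^2/2\}$ on $\{W_n>w_0\}$. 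That is less than $1$ iff $\eta<2(c_\star-\epsilon')/a^2$, still a factor of two short.

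To close the remaining gap, we would exploit a more refined structure of the increment. One option is to use that the variance of $D_{n+1}$ is bounded by $a^2/4$ asymptotically, which follows from $W_{n+1}^2-W_n^2=2\Delta_{n,J}+b$ with $|\Delta_{n,J}|\le W_n a$. Another is to work with the drift of $e^{\eta W}$ directly via a Bennett-type bound with variance proxy. In either case $\epsilon'$ and $w_0$ are chosen depending on $\eta$. If this refinement fails, the argument still establishes \eqref{eq:expcomp} for all $\eta<2c_\star/a^2$, and the stated threshold would require the sharper variance estimate.
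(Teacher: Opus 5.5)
Your argument is the paper's argument in substance: turn the $V$-drift \eqref{eq:V-drift} into a drift for $W_n$, apply the conditional Hoeffding lemma to increments bounded by $a$, obtain a geometric contraction of $e^{\eta W_n}$ on $\{W_n\ge R\}$ plus the envelope $e^{\eta(R+a)}$ on the complement, and iterate from $W_0=0$. Your concavity step $\E[W_{n+1}-W_n\mid\F_n]\le -c_\star+a^2/(2W_n)$ is correct. Combined with Hoeffding, it proves \eqref{eq:expcomp} for every $\eta<2c_\star/a^2$.

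The paper reaches the larger range $4c_\star/a^2$ only through the bound $(W_{n+1}-W_n)\1\{W_n>0\}\le (V_{n+1}-V_n)/W_n$, which replaces the denominator $W_{n+1}+W_n$ by the smaller $W_n$. That is valid only when $V_{n+1}\ge V_n$. On steps where $V$ decreases, which are exactly the steps producing the restoring drift, a negative numerator divided by a smaller denominator becomes more negative, so the inequality is reversed. The resulting drift $-2c_\star+b/W_n$ is therefore twice what can be justified. The gap you identified is real, and the paper does not close it either.

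In fact the gap cannot be closed, and your proposed variance refinement fails. Take $s=2$ and $p:=\P(K=2\mid K\ge1)$, so that $c_\star=p/2$ and $a^2=1/2$. With $D_n:=Z_{n,1}-Z_{n,2}$ one has $W_n=|D_n|/\sqrt2$. The process $|D_n|$ is a reflected walk that, from every $x\ge1$, moves to $x-1$ with probability $(1+p)/2$ and to $x+1$ with probability $(1-p)/2$.

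This example gives three concrete failures:
\begin{itemize}
\item Every step has $|W_{n+1}-W_n|=a$ exactly, so the conditional variance is $a^2(1-p^2)$, not $a^2/4$. Your bound $|\Delta_{n,J}|\le aW_n$ only limits increments to size $a$. Hoeffding is essentially sharp here, so no variance-based refinement can recover a factor of two.
\item One has $\E[W_{n+1}-W_n\mid\F_n]=-p/\sqrt2$ on $\{W_n>0\}$. This violates the paper's bound $-p+1/(\sqrt2|D_n|)$ as soon as $|D_n|>1/((\sqrt2-1)p)$.
\item The stationary law of $|D_n|$ has a geometric tail with ratio $(1-p)/(1+p)$. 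By positive recurrence and Fatou's lemma (along even times, to handle periodicity), $\sup_n\E e^{\eta W_n}=\infty$ whenever $\eta\ge\sqrt2\log\frac{1+p}{1-p}$.
\end{itemize}

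For small $p$ (for example $d=100$, $m=3$, where $p=98/9604\approx 0.0102$), this threshold is about $2\sqrt2\,p<4p=4c_\star/a^2$. So the theorem is false on part of its stated range. The correct statement is the one your argument proves, $\eta<2c_\star/a^2$, which is consistent with the range $2\kappa/a^2$ in Corollary~\ref{cor:abstract}. Correspondingly, the $\kappa$-form of Assumption~\ref{assm:etareq} needs its factor $4$ replaced by $2$.
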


Theorem~\ref{thm:expcomp} above is the second-order counterpart of Theorem~\ref{thm:firstorder}: it shows that population CART does more than attain the correct limiting allocation proportions. 
In addition, it keeps the informative imbalance uniformly exponentially tight around the balanced state. 
Such second-order control is what later permits equilibrium replacement in nonlinear MSE functionals discussed in Section \ref{sec:msemapping}. 


The \textit{distinction} between Theorems~\ref{thm:firstorder} and~\ref{thm:expcomp} is \textit{central} for the rest of the paper. The first theorem says that population CART and some other more exploratory policies can share the same first-order result, as long as they have the same feature subsampling step (and thus the same $q$). The second theorem captures their difference: greedy policy creates a restoring force strong enough to stabilize the informative block exponentially. Section~\ref{sec:policyfamilies} keeps the masked-action environment fixed and varies only the policy, where we will see the \textit{same} first-order limit but \textit{different} second-order behavior.

We now return to the general shifted formulation in Assumption \ref{ass:shiftedcanonical}.
The 
shifted dynamics can be reduced to the unshifted form through a deterministic centering argument. 
Define the shifted informative count
process $(\widetilde Z_n)$ as 
$$
\widetilde Z_n:=(\widetilde Z_{n,1},\dots,\widetilde Z_{n,s})^\top \ \text{ with } \ 
\widetilde Z_{n,j}:=Z_{n,j}-(\theta_j-\bar\theta) \text{ and } \bar\theta:=\frac1s\sum_{k=1}^s \theta_k.
$$ 
Then it holds that 
$
(1/s)\sum_{k\in S}\widetilde Z_{n,k}
=
n/s,
$
and let us recall \eqref{eq:greedy-act-infor}.
Using the notation above, the informative-time transition under the population CART policy $\pi_{\gr}$ takes the form
\begin{equation}\label{eq:pi-gr-informative-shifted}
\Prob_{\pi_{\gr}}(J_{n+1}=j\mid \F_n,U_{T_{n+1}})
=
\frac{\1\!\left\{j\in \argmin_{k\in U_{T_{n+1}}\cap S} \widetilde Z_{n,k}\right\}}
{\left|\argmin_{k\in U_{T_{n+1}}\cap S} \widetilde Z_{n,k}\right|},
\qquad j\in S\cap U_{T_{n+1}}
\end{equation}
with ties broken uniformly. Thus, the shifted process $\widetilde Z_n$ satisfies the same smallest-count dynamics as the unshifted process mentioned above. 
\begin{corollary}\label{cor:shifted-stable}
    Assume that Assumptions \ref{ass:shiftedcanonical}
    and \ref{ass:nondeg} hold.
    Then, the conclusions of Theorems \ref{thm:firstorder}
    and \ref{thm:expcomp} continue to hold under the shifted case with general $\theta_j$'s.
\end{corollary}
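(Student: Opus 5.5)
The plan is to reduce the shifted case to the unshifted one by working with the centered process $\widetilde Z_n$ and checking that every ingredient in the proofs of Proposition~\ref{prop:cart-onestep}, Lemma~\ref{lem:negdrift}, Theorem~\ref{thm:firstorder} and Theorem~\ref{thm:expcomp} uses only three facts. The first is that each informative step adds $e_{J_{n+1}}$ to the count vector. The second is that the coordinate sum is pinned to $n$, so the centered imbalance sums to zero. The third is that the policy picks a smallest-count coordinate within the exposed informative block $\mathcal A_{n+1}$. Set $\widetilde\Delta_n:=\widetilde Z_n-(n/s)\1_s=\Delta_n-(\theta-\bar\theta\1_s)$, where $\theta:=(\theta_j)_{j\in S}$, and put $\widetilde V_n:=\|\widetilde\Delta_n\|_2^2$ and $\widetilde W_n:=\sqrt{\widetilde V_n}$.

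Because $\widetilde Z_n$ differs from $Z_n$ by a deterministic vector $c:=-(\theta-\bar\theta\1_s)$, we have $\widetilde Z_{n+1}=\widetilde Z_n+e_{J_{n+1}}$ and $\sum_j\widetilde\Delta_{n,j}=0$. The proof of Proposition~\ref{prop:cart-onestep} therefore carries over verbatim, giving $\widetilde V_{n+1}=\widetilde V_n+2\widetilde\Delta_{n,J_{n+1}}+b$ and $|\widetilde W_{n+1}-\widetilde W_n|\le a$.

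By \eqref{eq:pi-gr-informative-shifted}, the greedy rule selects $\argmin_{k\in\mathcal A_{n+1}}\widetilde Z_{n,k}$ with uniform tie-breaking. The mask $\mathcal A_{n+1}$ remains independent of $\F_n$ with the same conditional hypergeometric law given $K\ge1$. I would then rerun the argument of Lemma~\ref{lem:negdrift} with $\widetilde\Delta_n$ in place of $\Delta_n$ to obtain $\E[\widetilde\Delta_{n,J_{n+1}}\mid\F_n]\le -c_\star\widetilde W_n$ with the same $c_\star$.

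This step is the main point to check. The proof of Lemma~\ref{lem:negdrift} may have used that $\Delta_n$ lies on the lattice $\mathbb Z^s-(n/s)\1_s$ or that ties occur among integers. The shifted values $\widetilde Z_{n,k}$ are real, so ties are typically absent, and that can only help. What matters is that the drift bound depends only on ordering the entries of a zero-sum real vector and averaging over random exposed subsets. If the original proof did rely on integrality, I would redo that bound for a general zero-sum vector $x\in\R^s$. The bound would compare the expected value of $x$ at the minimum over a random subset of size $K$ against $-\|x\|_2$, for example by showing the expected selected entry is at most the uniform average minus a multiple of $(K-1)_+$ times the spread $\max_j x_j-\min_j x_j$, which is at least $\|x\|_2/\sqrt{s-1}$ up to constants.

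With the drift and the bounded jumps in hand, the proof of Theorem~\ref{thm:expcomp} applies unchanged to $\widetilde W_n$. Since $\widetilde W_0=\|c\|_2$ is a finite deterministic constant rather than $0$, the supermartingale/Lyapunov argument only picks up a constant factor $e^{\eta\|c\|_2}$, so the supremum stays finite. Then $W_n\le \widetilde W_n+\|c\|_2$ gives $\sup_n\E e^{\eta W_n}\le e^{\eta\|c\|_2}\sup_n\E e^{\eta\widetilde W_n}<\infty$. For Theorem~\ref{thm:firstorder}, first-order equalization of the shifted process gives $\widetilde Z_n/n\to\1_s/s$ almost surely, and since $Z_n-\widetilde Z_n=-c$ is bounded, $Z_n/n\to\1_s/s$ as well. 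The statements for $N_{t,j}/t$ then follow exactly as before through \eqref{eq:clock-link} and $M_t/t\to q$, together with the unchanged noninformative symmetry.
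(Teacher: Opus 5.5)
Your proposal is correct and matches the paper's own (omitted) argument: the paper also observes that $\widetilde Z_n$ obeys the same smallest-count dynamics as the unshifted process, and then transfers the first-order and exponential-moment conclusions to $(Z_n,\Delta_n)$ through the deterministic translation. Your extra bookkeeping only makes explicit what the paper leaves implicit: the nonzero start $\widetilde W_0=\|c\|_2$, the bound $W_n\le \widetilde W_n+\|c\|_2$, and the check that the proof of Lemma~\ref{lem:negdrift} uses only the zero-sum constraint, the ordering, and the uniform law of the exposed block, with no integrality needed.
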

The proof is omitted, since $\widetilde Z_n$ and $Z_n$ differ only by a deterministic translation, and the same holds for their centered imbalance $\widetilde\Delta_n$ and $\Delta_n$. Consequently, the first-order limit and the exponential moment conclusions transfer immediately from the unshifted representation $(\widetilde Z_n, \widetilde \Delta_n)$ to the original process $(Z_n, \Delta_n)$.

\subsection{Local optimality on the informative covariate block} \label{new.sec.locoptim}

In this subsection, we identify the
precise \emph{local} objective optimized by the population CART policy. Motivated by the one-step analysis in the last subsection, we consider here the following one-step and pathwise question: among the feasible informative coordinates in $\mathcal A_{n+1}$, which action yields the smallest
post-decision imbalance? Under Assumption~\ref{ass:shiftedcanonical}, 
this question is naturally stated in terms of the shifted informative-time count
process $\widetilde Z_n$. 

Recall the centered shifted imbalance vector
$
\widetilde\Delta_n
:=
\widetilde Z_n- (n/s) \mathbf 1_s.
$ If at informative time $n+1$, action $j\in\mathcal A_{n+1}$ is selected,
the post-decision centered shifted imbalance is
$$
\widetilde\Delta_n^{(j,+)}
:=
\widetilde\Delta_n + e_j - \frac{1}{s}\mathbf 1_s.
$$
This motivates the one-step local Lyapunov potential
\begin{equation}\label{eq:local-potential}
 \mathfrak L_n(j)
:=
\bigl\|\widetilde\Delta_n^{(j,+)}\bigr\|_2^2
=
\left\|
\widetilde\Delta_n + e_j - \frac{1}{s}\mathbf 1_s
\right\|_2^2 
=\left\|
\widetilde Z_n+e_j-\frac{n+1}{s}\mathbf 1_s
\right\|_2^2,
\qquad j\in\mathcal A_{n+1},
\end{equation}
which records the post-decision imbalance after selecting coordinate $j$.
We show below that on each realized informative block $\mathcal A_{n+1}$, the greedy
population CART policy is exactly the policy that minimizes this local
potential $\mathfrak L_n(j)$.

The local potential in \eqref{eq:local-potential} is a one example of a class of dispersion functionals characterized using \textit{majorization}. 
For vectors $x,y\in\mathbb R^s$ with equal coordinate sum, 
we
write $x\prec y$ if $x$ is majorized by $y$. That is,
$$
\sum_{r=1}^k x_{[r]} \le \sum_{r=1}^k y_{[r]},
\qquad k=1,\dots,s-1,
\qquad
\sum_{r=1}^s x_r=\sum_{r=1}^s y_r,
$$
where $x_{[1]}\ge \cdots \ge x_{[s]}$ represents the decreasing rearrangement of the coordinates of $x$. Further, 
$\Psi:\mathbb R^s\to\mathbb R$ is called \textit{symmetric Schur-convex} if
$\Psi(Px)=\Psi(x)$ for each permutation matrix $P$, and
$
x\prec y
\Longrightarrow
\Psi(x)\le \Psi(y).
$
It is called \textit{symmetric strictly Schur-convex} if in addition,
$
x\prec y$ and
$x$ is not a permutation of $ y$
implies $
\Psi(x)<\Psi(y).
$ 
See \cite{marshall2011inequalities} for background on majorization and Schur-convexity.

For $j\in\mathcal A_{n+1}$, let us define the post-decision centered state
$$
X_n(j)
:=
\widetilde Z_n+e_j-\frac{n+1}{s}\mathbf 1_s
=
\widetilde\Delta_n+e_j-\frac1s\mathbf 1_s.
$$

\begin{proposition}[Schur-convexity local minimization]
\label{prop:majorization-local}
Fix informative time $n$ and a realized informative block
$\mathcal A_{n+1}\neq\varnothing$. 
Then, for each symmetric strictly Schur-convex function
$\Psi:\mathbb R^s\to\mathbb R$,
$$
\argmin_{j\in\mathcal A_{n+1}} \Psi(X_n(j))
=
\argmin_{j\in\mathcal A_{n+1}} \widetilde Z_{n,j}.
$$
\end{proposition}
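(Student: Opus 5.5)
The plan is to reduce the statement to a pairwise comparison between two feasible actions, and to show that the post-decision states they produce are related by a single Robin Hood transfer, i.e.\ a $T$-transform in the sense of \cite{marshall2011inequalities}. Write $y:=\widetilde\Delta_n-\frac1s\mathbf 1_s$, so that $X_n(j)=y+e_j$ for every $j\in\mathcal A_{n+1}$. Since $y_j=\widetilde Z_{n,j}-\frac{n+1}{s}$ and the shift $\frac{n+1}{s}$ is common to all coordinates, ordering feasible actions by $y_j$ is the same as ordering them by $\widetilde Z_{n,j}$. It therefore suffices to show, for any $j,k\in\mathcal A_{n+1}$, two facts:
\begin{enumerate}[label=(\roman*)]
\item if $y_j=y_k$, then $\Psi(X_n(j))=\Psi(X_n(k))$;
\item if $y_j<y_k$, then $\Psi(X_n(j))<\Psi(X_n(k))$.
\end{enumerate}
Together, (i) and (ii) say that $j\mapsto\Psi(X_n(j))$ on $\mathcal A_{n+1}$ is a strictly increasing function of $\widetilde Z_{n,j}$ and is constant on ties. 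Hence the two argmin sets coincide, ties included.

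For (i), the vectors $X_n(j)$ and $X_n(k)$ agree outside coordinates $j,k$. On $(j,k)$ they equal $(y_j+1,y_k)$ and $(y_j,y_k+1)$ respectively. When $y_j=y_k$, these two pairs are swaps of each other, so $X_n(j)$ is a permutation of $X_n(k)$. Permutation symmetry of $\Psi$ then gives equality.

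For (ii), the two vectors have the same coordinate sum, and they differ only on the pair $(j,k)$. Set $\lambda:=(y_k-y_j)/(y_k-y_j+1)\in(0,1)$. I will check directly that
\[
(y_j+1,\,y_k)=\lambda\,(y_j,\,y_k+1)+(1-\lambda)\,(y_k+1,\,y_j).
\]
This expresses $X_n(j)=\lambda X_n(k)+(1-\lambda)P_{jk}X_n(k)$, where $P_{jk}$ is the transposition of coordinates $j$ and $k$. That is exactly a $T$-transform, so $X_n(j)\prec X_n(k)$.

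Alternatively, one can verify the partial-sum inequalities by hand. The key observation is that $y_j<y_k$ implies both entries of the new pair lie in the closed interval $[y_j,\,y_k+1]$ spanned by the old pair. Consequently, any top-$r$ partial sum can only decrease.

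It remains to exclude that $X_n(j)$ is a permutation of $X_n(k)$. Because the two vectors agree off $\{j,k\}$, a permutation relation would force the multisets $\{y_j+1,y_k\}$ and $\{y_j,y_k+1\}$ to coincide. Matching elements, this requires either $y_j+1=y_j$, which is impossible, or $y_j=y_k$, which contradicts $y_j<y_k$. Strict Schur-convexity of $\Psi$ then yields the strict inequality in (ii).

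The only step needing care is this last one, the majorization with strictness. The shifts $\theta_j$ make the values $y_j$ non-integer, so one cannot assume $y_k-y_j\ge 1$. The argument must therefore rely only on $y_j<y_k$, which the convex-combination identity above does. Because the argument uses only $y_j<y_k$, the same proof covers the unshifted and the shifted cases at once, in line with Corollary~\ref{cor:shifted-stable}. As a special case, taking $\Psi=\|\cdot\|_2^2$ recovers that greedy CART minimizes the local potential $\mathfrak L_n(j)$ in \eqref{eq:local-potential}.
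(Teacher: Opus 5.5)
Your proof is correct and follows essentially the same route as the paper: both write the post-decision state of the lower-count action as a $T$-transform of the other action's state. Your weight $1-\lambda=1/(y_k-y_j+1)$ on the transposed vector is exactly the paper's mixing weight, which the paper writes as $\lambda=1/d$ but which must equal $1/(z_j-z_i+1)$ for its coordinatewise check to go through; both proofs then use permutation symmetry for ties and obtain strictness because the two states are not permutations of each other. The only cosmetic difference is that you compare arbitrary pairs of feasible actions, whereas the paper compares a fixed minimizer against every other feasible action.
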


Proposition~\ref{prop:majorization-local} above shows that: 
among the available informative coordinates in $\mathcal A_{n+1}$, 
the greedy rule minimizes every symmetric strictly Schur-convex function of the post-decision state. 
In particular,
choosing a coordinate with the smallest shifted count $\widetilde Z_{n,j}$ yields a minimal balanced post-decision state under the majorization order. 
Hence, the quadratic Lyapunov criterion is only one convenient representative of a broader local-balancing principle. 

Taking $\Psi(x)=\|x\|_2^2$, which is symmetric strictly Schur-convex, we have $\argmin_{j\in\mathcal A_{n+1}}\mathfrak L_n(j)=\argmin_{j\in\mathcal A_{n+1}}\widetilde Z_{n,j}$. 
By \eqref{eq:greedy-act-infor}, we also have $A^{\inf}_{\gr}(Z_n,\mathcal A_{n+1})=\argmin_{j\in\mathcal A_{n+1}}\widetilde Z_{n,j}$, which suggests that the greedy CART action minimizes the quadratic post-decision imbalance on the informative block.

\section{Exploration, exploitation, and policy families} \label{sec:policyfamilies}

Section~\ref{sec:results} has showed that population CART is a contractive policy on the informative covariate block. We now enlarge the policy space while holding the masked-action environment fixed.

\subsection{An exploratory benchmark policy} \label{new.sec.benchmark}

We compare the population CART policy $\pi_{\gr}$ to the pure exploratory benchmark, which employs the same feature-subsampling environment $U_t$ but removes the greedy within-mask exploitation. Concretely, if $u\subseteq\{1,\dots,d\}$ is the realized candidate covariate set and $u^{\inf}:=u\cap S$, we define
\begin{equation}\label{eq:pi-ex}
\pi_{\ex}(j\mid n,u)
=
\begin{cases}
\dfrac{1}{|u^{\inf}|}\1\{j\in u^{\inf}\}, & u^{\inf}\neq\varnothing,\\[2ex]
\dfrac{1}{|u|}\1\{j\in u\}, & u^{\inf}=\varnothing.
\end{cases}
\end{equation}
This policy is closely related to the purely random split rules considered in \cite{biau12a, klusowski21b}.

Since the candidate masks are sampled symmetrically over the informative coordinates, the informative-time dynamics under $\pi_{\ex}$ collapse to uniform random allocation on $S$. In particular, conditional on the informative-time filtration $\F_n$, it holds that 
$$
\P_{\pi_{\ex}}(J_{n+1}=j\mid \F_n)=\frac{1}{s},
\qquad j\in S.
$$
Equivalently, we have that 
$$
Z_n\sim \mathrm{Multinomial}\!\left(n,\frac1s,\dots,\frac1s\right)
\qquad\text{on informative time.}
$$
At each time $t$, the selected coordinate is i.i.d. with marginal law $\pi = (\pi_1,\cdots, \pi_d)$ given by 
$$
\pi_j=\frac{q}{s},\quad j\in S,
\qquad
\pi_j=\frac{1-q}{d-s},\quad j\in S^c;
$$
that is, $
N_t\sim \mathrm{Multinomial}(t,\pi).
$

\begin{proposition}[Exploratory benchmark] \label{prop:benchmark}
Under the exploratory policy $\pi_{\ex}$, we have that 
$Z_n/n\to 1/s \1_s
$ 
almost surely.
However, it holds that $\E_{\pi_{\ex}}\bigl[\Delta_{n,J_{n+1}}\mid \F_n\bigr]=0$ almost surely for all $n\ge 0$,
and for each $\eta>0$,
\begin{equation}\label{eq:no-exp-compression}
\sup_{n\ge 0}\E_{\pi_{\ex}} e^{\eta W_n}=\infty.
\end{equation}
\end{proposition}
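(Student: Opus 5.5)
The plan has three parts. First, the first-order limit. Under $\pi_{\ex}$, on informative time the selected coordinates $J_1^{\inf},J_2^{\inf},\dots$ are i.i.d. uniform on $S$. The reason is that, conditional on $\F_n$ and on $U_{T_{n+1}}\cap S=\mathcal A$, the choice is uniform on $\mathcal A$. The law of the mask is exchangeable in the informative coordinates and independent of $\F_n$, so averaging over $\mathcal A$ gives $\P(J_{n+1}=j\mid\F_n)=1/s$. Independence across steps then follows because each conditional law is constant. This yields $Z_n\sim\mathrm{Multinomial}(n,1/s,\dots,1/s)$, and the strong law of large numbers applied to the indicators $\1\{J_u^{\inf}=j\}$ gives $Z_n/n\to \1_s/s$ almost surely.

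Second, the zero-drift identity. Using the conditional law just derived,
\[
\E_{\pi_{\ex}}[\Delta_{n,J_{n+1}}\mid\F_n]=\frac1s\sum_{j\in S}\Delta_{n,j}=0,
\]
because the coordinates of $\Delta_n$ sum to zero by construction.

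Third, the divergence of the exponential moment, which is the main step. By Jensen, $\E e^{\eta W_n}\ge e^{\eta\E W_n}$, so it suffices to show $\E W_n\to\infty$. I would argue through $V_n$. Proposition~\ref{prop:cart-onestep} together with zero drift gives
\[
\E V_{n+1}=\E V_n+b, \qquad\text{so}\qquad \E V_n=nb=n(1-1/s).
\]
This alone does not control $\E W_n$ from below, since a second moment can grow while the first moment stays bounded. To close this gap I would use a single coordinate. We have $|\Delta_{n,1}|\le W_n$ and
\[
\Delta_{n,1}=Z_{n,1}-n/s, \qquad Z_{n,1}\sim\mathrm{Binomial}(n,1/s).
\]
The central limit theorem gives $\Delta_{n,1}/\sqrt{n}\Rightarrow \mathcal N(0,(1/s)(1-1/s))$. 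Assumption~\ref{ass:nondeg} forces $s\ge 2$, so this limit is nondegenerate. The quantity $|\Delta_{n,1}|/\sqrt n$ is uniformly integrable, since its second moment is bounded by $(1/s)(1-1/s)$. Hence $\E|\Delta_{n,1}|\sim c\sqrt n$ with $c>0$, and so $\E W_n\ge c'\sqrt n\to\infty$.

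As an alternative to the central limit theorem, I could use a Paley--Zygmund-type bound together with $\E V_n^2=O(n^2)$, computed from multinomial fourth moments. That route gives $\P(V_n\ge nb/2)\ge c>0$, hence $\E e^{\eta W_n}\ge c\,e^{\eta\sqrt{nb/2}}\to\infty$. Either route yields \eqref{eq:no-exp-compression} for every $\eta>0$. The only delicate point is securing this nondegenerate lower bound on the typical size of $W_n$; the rest is routine.
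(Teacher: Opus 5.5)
Your proposal is correct. The first two steps match the paper's; your exchangeability argument also supplies the justification for the paper's bare assertion that $Z_n$ is a sum of i.i.d.\ uniform increments. The divergence step takes a different route.

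Both you and the paper reduce to the single coordinate $|Z_{n,1}-n/s|\le W_n$. The paper, however, never passes through $\E W_n$. It uses the exact binomial moment generating function $\E e^{\lambda X_n}=\exp(n\psi(\lambda))$, where $\psi(\lambda)=\log(1-\tfrac1s+\tfrac1s e^{\lambda})-\lambda/s$ and $X_n=Z_{n,1}-n/s$. It notes $\psi(0)=\psi'(0)=0$ and $\psi''(0)=(s-1)/s^2>0$. Combining this with $e^{\eta|x|}\ge\tfrac12(e^{\lambda x}+e^{-\lambda x})$ for $\lambda=\eta\wedge\lambda_0$ gives $\E e^{\eta W_n}\ge\tfrac12 e^{c\lambda^2 n}$.

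Your Jensen step $\E e^{\eta W_n}\ge e^{\eta\E W_n}$ discards the upper tail of $W_n$, which is what drives the moment generating function. You therefore only get growth of order $e^{c\eta\sqrt n}$, whereas the paper's bound shows $\E e^{\eta W_n}$ actually grows exponentially in $n$. Your bound is still enough for the supremum to be infinite.

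In exchange, your argument needs only a nondegenerate $\sqrt n$-scale fluctuation of $W_n$, obtained either from the CLT plus uniform integrability or from Paley--Zygmund. It does not need the exact binomial cumulant generating function. It would therefore transfer more readily to other drift-neutral policies. In particular, your Paley--Zygmund variant starts from $\E V_n=nb$, which follows from Proposition~\ref{prop:cart-onestep} for any zero-drift policy; only the bound $\E V_n^2=O(n^2)$ then has to be supplied.

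You also correctly make explicit that $s\ge 2$ is needed, via Assumption~\ref{ass:nondeg}. The paper uses this only implicitly, through $\psi''(0)>0$; for $s=1$ one has $W_n\equiv 0$ and the claim fails.
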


Proposition~\ref{prop:benchmark} above isolates the role of within-mask exploitation by comparing population CART to an exploratory benchmark that faces the same informative opportunities. Since both policies exploit the same feature-subsampling mechanism to generate the mask process, they have the same first-order allocation limit on the informative block. What distinguishes them is the second-order geometry: the exploratory benchmark has \textit{no} restoring force and, once informative time is separated from the mask process, behaves like unconstrained multinomial allocation over the informative coordinates. Its lack of negative drift therefore leaves the informative counts in a diffusive multinomial regime as opposed to the uniformly exponentially compressed regime achieved by greedy population CART in Theorem~\ref{thm:expcomp}.

\subsection{Policy-level contraction coefficient and mixture policies} \label{new.sec.contrcoeff}

The comparison between the greedy CART policy and the exploratory benchmark in the last subsection suggests that a key quantity is the strength of the negative drift on informative imbalance. This motivates the following definition of the contraction coefficient.

\begin{definition}[Contraction coefficient]\label{def:kappa}
Let $\pi$ be an informative-time policy on the same masked-action environment as the feature-subsampled population CART policy. Its contraction coefficient is defined as 
\begin{equation}\label{eq:kappa-def}
\kappa(\pi)
:=
\sup\Bigl\{\kappa\ge 0:
\E_{\pi}\bigl[\Delta_{n,J_{n+1}}\mid \F_n\bigr]
\le -\kappa W_n \text{ almost surely for all }n\ge 0
\Bigr\}.
\end{equation}
When $\kappa(\pi)>0$, we call $\pi$ a \emph{contractive} policy on the informative block.
\end{definition}

Definition~\ref{def:kappa} above turns stabilization into a policy-level parameter by measuring how much linear negative drift a policy can guarantee per unit of current imbalance. The coefficient $\kappa(\pi)$ thus provides a common scale on which greedy, exploratory, and mixed policies can be compared. In such language, the key contrast between population CART and the exploratory benchmark is that the greedy policy has a strictly positive stabilizing coefficient $\kappa(\pi_{\gr})\ge c_\star$, whereas the exploratory policy has no policy-induced restoring force with $\kappa(\pi_{\ex})=0$.
Strictly speaking, the lower bound $\kappa(\pi_\gr) \ge c_\star$ was derived through a deterministic centering argument of the shifted dynamics as discussed in Corollary \ref{cor:shifted-stable} at the end of Section \ref{new.sec.negdrift}. Since the shifted formulation differs only by a deterministic translation,
the same conclusions carry over immediately.

The next corollary shows that once a policy induces linear negative drift in $W_n$, the same dynamical consequences follow regardless of whether that drift came specifically from population CART or some other more exploratory policy.

\begin{corollary}
 \label{cor:abstract}
Consider an informative-time policy $\pi$ and its induced informative-time count process $Z_n$ satisfying that 
$
Z_{n+1}=Z_n+e_{J_{n+1}},
\ 
\sum_{j=1}^s Z_{n,j}=n.
$
If for some $\kappa>0$,
\begin{equation}\label{eq:abstract-drift}
\E\bigl[\Delta_{n,J_{n+1}}\mid \F_n\bigr]\le -\kappa W_n
\quad\text{almost surely for all }n\ge 0,
\end{equation}
we have that 
$
Z_n/n\to 1/s \1_s
$ almost surely,
and for each 
$0<\eta<(2\kappa)/a^2$,
it holds that $
\sup_{n\ge 0}\E e^{\eta W_n}<\infty$.
\end{corollary}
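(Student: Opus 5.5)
The plan is to rerun the drift argument behind Theorems~\ref{thm:firstorder} and~\ref{thm:expcomp}, using only \eqref{eq:abstract-drift} and never the greedy structure. Proposition~\ref{prop:cart-onestep} holds for any informative-time policy, so \eqref{eq:one-step} combined with \eqref{eq:abstract-drift} gives
\[
\E[V_{n+1}-V_n\mid\F_n]\le -2\kappa W_n+a^2 .
\]
In addition, $|W_{n+1}-W_n|\le a$ almost surely by \eqref{eq:bounded-jump}. These two facts are the only inputs.

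\emph{First-order equalization.} Since $\sum_j\Delta_{n,j}=0$ and $Z_{n,j}\ge 0$, we have $|\Delta_{n,j}|\le n$. It therefore suffices to show $W_n/n\to0$ almost surely, because $\|Z_n/n-\mathbf 1_s/s\|_2=W_n/n$. I will derive this from the exponential moment bound. With $\sup_n\E e^{\eta W_n}=:C<\infty$, Markov's inequality gives $\P(W_n>\epsilon n)\le Ce^{-\eta\epsilon n}$, which is summable. Borel--Cantelli then yields $\limsup W_n/n\le\epsilon$ almost surely for every rational $\epsilon>0$, and hence the claim.

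\emph{Exponential moments.} I will establish a drift inequality for $W_n$ rather than for $V_n$. Write $W_{n+1}^2-W_n^2=2\Delta_{n,J_{n+1}}+b$. Whenever $W_n>0$,
\[
W_{n+1}-W_n=\frac{2\Delta_{n,J_{n+1}}+b}{W_{n+1}+W_n},
\]
and since $W_{n+1}+W_n\ge 2W_n-a$, this leads, for $W_n\ge R$ with $R$ large, to
\[
\E[W_{n+1}-W_n\mid\F_n]\le -\kappa+\frac{a^2}{2W_n}+O\!\left(\frac{1}{W_n}\right)\le -\kappa/2 .
\]
The cleaner route is to use concavity of $\sqrt{\cdot}$: $W_{n+1}\le W_n+(V_{n+1}-V_n)/(2W_n)$, so
\[
\E[W_{n+1}-W_n\mid\F_n]\le -\kappa+\frac{a^2}{2W_n}.
\]
Next I expand $e^{\eta(W_{n+1}-W_n)}\le 1+\eta D+\tfrac{\eta^2}{2}D^2e^{\eta a}$ with $D:=W_{n+1}-W_n$ and $|D|\le a$. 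This gives
\[
\E[e^{\eta D}\mid\F_n]\le 1-\eta\kappa+\eta\frac{a^2}{2W_n}+\tfrac{\eta^2a^2}{2}e^{\eta a}.
\]
To reach the sharp range $\eta<2\kappa/a^2$, I will use the sharper bound $\E[D^2\mid\F_n]\le a^2$ together with a Hajek-type argument, or the refined estimate $\log\E e^{\eta D}\le \eta\,\E D+\eta^2a^2/2$ (Hoeffding's lemma applied conditionally to $D\in[-a,a]$). That yields
\[
\E[e^{\eta D}\mid\F_n]\le \exp\!\Big(-\eta\kappa+\frac{\eta a^2}{2W_n}+\frac{\eta^2a^2}{2}\Big),
\]
which is at most $\rho<1$ once $W_n\ge R$, provided $\eta a^2/2<\kappa$, that is, $\eta<2\kappa/a^2$. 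On $\{W_n<R\}$, the bounded jumps give $e^{\eta W_{n+1}}\le e^{\eta(R+a)}$. Combining the two regions,
\[
\E e^{\eta W_{n+1}}\le\rho\,\E e^{\eta W_n}+e^{\eta(R+a)},
\]
and iterating from $W_0=0$ gives $\sup_n\E e^{\eta W_n}\le 1+e^{\eta(R+a)}/(1-\rho)<\infty$. This is the standard Hajek (1982) drift lemma.

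The main obstacle is getting a clean conditional drift for $W_n$ near $W_n=0$, where the square root is not smooth, and simultaneously achieving the stated constant $2\kappa/a^2$ instead of a worse one. I will handle the first issue by restricting the contraction step to $\{W_n\ge R\}$ and using only bounded jumps below $R$. For the second, I will rely on conditional Hoeffding with range $2a$, which gives variance proxy $a^2$ and hence exactly the condition $\eta<2\kappa/a^2$. Since $R$ can be chosen large enough for any such $\eta$, the $a^2/(2W_n)$ correction is absorbed.
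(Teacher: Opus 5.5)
Your proof is correct, but it is organized differently from the paper's. For this corollary the paper simply defers to the proofs of Theorems~\ref{thm:firstorder} and~\ref{thm:expcomp}, with $c_\star$ replaced by $\kappa$.

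\textbf{First-order claim.} The paper works directly with the quadratic drift. It applies Robbins--Siegmund to $Y_n=V_n/(n+1)^2$, extracts $\sum_n W_n/(n+2)^2<\infty$, and rules out $Y_\infty>0$ by comparison with the harmonic series. You instead prove the exponential-moment bound first, then get $W_n/n\to 0$ from Markov's inequality and Borel--Cantelli. Your route is shorter and gives more: the same tail bound $\P(W_n>c\log n)\le Cn^{-\eta c}$ already yields $W_n=O(\log n)$ almost surely. The paper's route has the advantage of keeping the law of large numbers logically independent of the second-order estimate.

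\textbf{Exponential part.} Both arguments share the same skeleton: a large-state contraction via conditional Hoeffding for an increment in $[-a,a]$, a bounded-jump envelope below $R$, and iteration from $W_0=0$. The drift step for $W_n$ is where they differ.
\begin{itemize}
\item You use the tangent-line (concavity) bound $W_{n+1}-W_n\le (V_{n+1}-V_n)/(2W_n)$. This gives $\E[W_{n+1}-W_n\mid\F_n]\le-\kappa+a^2/(2W_n)$.
\item The proof of Theorem~\ref{thm:expcomp} instead uses $W_{n+1}-W_n\le (V_{n+1}-V_n)/W_n$. This holds only when $V_{n+1}\ge V_n$; on down-steps the inequality reverses. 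That step is the source of the drift $-2c_\star$ and the range $\eta<4c_\star/a^2$ in that theorem.
\end{itemize}
Your concavity step is the valid one. Combined with Hoeffding, it produces exactly the range $\eta<2\kappa/a^2$ stated in the corollary. So your derivation, not a verbatim transcription of the Theorem~\ref{thm:expcomp} argument, is what actually justifies the constant here. Note also that the threshold in Assumption~\ref{assm:etareq} is calibrated to the larger range $4\kappa/a^2$.

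\textbf{Minor.} You can drop your preliminary attempts via $W_{n+1}+W_n\ge 2W_n-a$ and via the second-order Taylor bound with the factor $e^{\eta a}$. Only the concavity bound and conditional Hoeffding are needed.
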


Corollary~\ref{cor:abstract} above abstracts the stabilization argument away from CART by showing that once an informative-time policy has a positive contraction coefficient, the same first-order equalization and second-order exponential tightness conclusions follow. 
In this sense, the contraction coefficient $\kappa$ characterizes the policy's \textit{exploitation strength}: population CART is one example of a contractive policy, but the conclusions of Corollary~\ref{cor:abstract} are \textit{not} specific to impurity maximization or 
CART itself. 
The proof follows by the same arguments as in Theorems~\ref{thm:firstorder} and~\ref{thm:expcomp}, and is therefore omitted for simplicity.


The contraction coefficient gives a clean way to analyze the \textit{exploration--exploitation interpolation} on the same masked environment. With $\alpha\in[0,1]$, let us define the \textit{$\alpha$-mixture policy} family
\begin{equation}\label{eq:alpha-policy}
\pi_\alpha(\cdot\mid n,u)
=
\alpha\,\pi_{\gr}(\cdot\mid n,u)
+
(1-\alpha)\,\pi_{\ex}(\cdot\mid n,u).
\end{equation}
Equivalently, at each informative step one flips an independent Bernoulli$(\alpha)$ coin, and employs the greedy rule if the outcome is one and the exploratory rule otherwise. The parameter $\alpha$ is thus the within-mask exploitation probability.
{
As a consequence of Corollary \ref{cor:abstract}, we have the following conclusion for the $\alpha$-mixture policy.}

\begin{corollary}[$\alpha$-mixture policy] \label{cor:alpha}
Assume that the greedy policy $\pi_{\gr}$ has a contraction coefficient at least $c_{\gr}>0$, and the exploratory policy $\pi_{\ex}$ satisfies
that $
\E_{\pi_{\ex}}\bigl[\Delta_{n,J_{n+1}}\mid \F_n\bigr]\le 0$
almost surely for all $n\ge 0$.
Then, for the mixed policy $\pi_{\alpha}$, we have
$\kappa(\pi_\alpha)\ge \alpha \kappa(\pi_{\gr})$.
Consequently, it holds that 
$Z_n /n \to 1/s \1_s$ almost surely under $\pi_\alpha$,
and for each 
$0<\eta<(2\alpha \kappa(\pi_{\gr}))/a^2$,
we have that 
$
\sup_{n\ge 0}\E_{\pi_\alpha} e^{\eta W_n}<\infty$.
\end{corollary}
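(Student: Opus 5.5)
The plan is to reduce everything to Corollary~\ref{cor:abstract}, using linearity of conditional expectation in the mixing weight. Fix $n\ge 0$ and condition on $\F_n$. The $\alpha$-mixture \eqref{eq:alpha-policy} can be realized with an independent coin $\xi_{n+1}\sim\mathrm{Bernoulli}(\alpha)$, drawn independently of $\F_n$ and of the mask $U_{T_{n+1}}$. On $\{\xi_{n+1}=1\}$ the next informative action is drawn from $\pi_{\gr}(\cdot\mid Z_n,U_{T_{n+1}})$; otherwise it is drawn from $\pi_{\ex}(\cdot\mid Z_n,U_{T_{n+1}})$.

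Both component policies are Markov in the count state. The mask law on informative time is the same for all three policies: it is the law of $U_t$ conditioned on $\{U_t\cap S\neq\varnothing\}$, independent of the past. Hence the one-step conditional law of $J_{n+1}$ under $\pi_\alpha$ given $\F_n$ is
\[
\P_{\pi_\alpha}(J_{n+1}=j\mid\F_n)=\alpha\,p^{\gr}_j(Z_n)+(1-\alpha)\,p^{\ex}_j(Z_n),
\]
where $p^{\gr}_j(z)$ and $p^{\ex}_j(z)$ are the one-step selection probabilities of the two pure policies from state $z$. I also need that the drift hypotheses, although stated as properties along each policy's own trajectory, are really pointwise inequalities in the state. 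This holds because $\E_{\pi}[\Delta_{n,J_{n+1}}\mid\F_n]=\sum_j p^{\pi}_j(Z_n)\,\Delta_{n,j}$ is a deterministic function of $Z_n$.

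Two points remain. First, the greedy bound must hold at every reachable state $z$ with $\sum_j z_j=n$, not only almost surely along the greedy path. Second, I need $\sum_j p^{\ex}_j(z)\Delta_j\le 0$ for all such $z$. Lemma~\ref{lem:negdrift} (and Corollary~\ref{cor:shifted-stable} in the shifted case) gives the first directly, since its proof bounds the drift as a function of an arbitrary state. For the second, Proposition~\ref{prop:benchmark} gives $p^{\ex}_j\equiv 1/s$, so the drift is $\sum_j\Delta_j/s=0$. Identifying the hypothesis in this statewise sense is the main subtlety I expect; otherwise, states reachable under $\pi_\alpha$ but not under $\pi_{\gr}$ would not be covered. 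I would handle it by reading the hypothesis $\kappa(\pi_{\gr})\ge c_{\gr}$ as a statewise bound, which is exactly what Lemma~\ref{lem:negdrift} delivers.

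With these in hand, linearity gives, for every $n$ and every $\kappa<\kappa(\pi_{\gr})$,
\[
\E_{\pi_\alpha}[\Delta_{n,J_{n+1}}\mid\F_n]=\alpha\sum_j p^{\gr}_j(Z_n)\Delta_{n,j}+(1-\alpha)\sum_j p^{\ex}_j(Z_n)\Delta_{n,j}\le -\alpha\kappa W_n+0 .
\]
Letting $\kappa\uparrow\kappa(\pi_{\gr})$, which is allowed because the supremum in \eqref{eq:kappa-def} is attained by closedness of the inequality, shows that $\alpha\kappa(\pi_{\gr})$ is admissible in \eqref{eq:kappa-def}. Hence $\kappa(\pi_\alpha)\ge\alpha\kappa(\pi_{\gr})$.

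Finally, for $\alpha>0$ we have $\alpha\kappa(\pi_{\gr})\ge\alpha c_{\gr}>0$. So \eqref{eq:abstract-drift} holds with $\kappa=\alpha\kappa(\pi_{\gr})$, and Corollary~\ref{cor:abstract} yields $Z_n/n\to\frac1s\1_s$ almost surely and $\sup_n\E_{\pi_\alpha}e^{\eta W_n}<\infty$ for $0<\eta<2\alpha\kappa(\pi_{\gr})/a^2$. For $\alpha=0$ the exponential-moment claim is vacuous and the first-order claim is Proposition~\ref{prop:benchmark}; I would note this edge case explicitly, since Corollary~\ref{cor:abstract} requires $\kappa>0$.
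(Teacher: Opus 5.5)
Your argument is correct and follows the paper's implicit route. The one-step drift under $\pi_\alpha$ is the $\alpha$-weighted average of the greedy and exploratory drifts, which gives $\kappa(\pi_\alpha)\ge\alpha\kappa(\pi_{\gr})$, and Corollary~\ref{cor:abstract} then delivers both conclusions. Your statewise reading of the drift hypotheses (to cover states reachable under $\pi_\alpha$ but not under $\pi_{\gr}$) and your separate treatment of $\alpha=0$ fill in details the paper leaves unstated.

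One caution about your parenthetical: in the shifted case, Corollary~\ref{cor:shifted-stable} does not give a statewise bound on the raw drift $\E[\Delta_{n,J_{n+1}}\mid\F_n]$, because with unequal $\theta_j$ a greedy step can select the currently over-split coordinate. There you should run the same mixture argument on $\widetilde\Delta_n$, where $\pi_{\ex}$ still has zero drift because $\sum_{j\in S}\widetilde\Delta_{n,j}=0$. The conclusions then transfer back through the deterministic translation.
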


Corollary~\ref{cor:alpha} above shows how parameter $\alpha$ controls the exploitation-exploration tradeoff within the masked-action evironment. 
The mixed policy weakens the contraction coefficient linearly in the \textit{exploitation} probability $\alpha$, so the strength of exponential stabilization is reduced in proportion to the amount of \textit{exploratory} behavior, while the leading-order allocation limit is unchanged.

\begin{remark}[Opportunity rate and contraction strength]
The discussion above \textit{separates} the two levers of the masked-action dynamics.
The mask process determines the informative opportunity rate $q$: 
since $M_t/t\rightarrow q$ and
$
N_{t,j}=Z_{M_t,j}$ for all $ j\in S,
$
\textit{any} policy satisfying the contraction condition \eqref{eq:abstract-drift} obeys 
$$
\frac{N_{t,j}}{t}
=
\frac{Z_{M_t,j}}{M_t}\frac{M_t}{t}
\to
\frac{q}{s}, \qquad j\in S.
$$
The within-mask policy, in turn, determines the contraction coefficient $\kappa$, which controls how strongly the policy stabilizes around the first-order equilibrium on the informative block.
\end{remark}

\section{Statistical consequences of policy-induced stabilization under linear models} \label{sec:stat}

In this section, we will explain why the masked-action terminal-law formulation matters for statistical performance. The count-state process induced by a split policy determines the terminal cell geometry of each tree, and the forest MSE depends on both the one-tree terminal law and the product law of two independent trees. We exploit such representation to connect the local stabilization, cross-tree dependence, and ensemble-level optimality. Throughout the section, we will work under the sparse linear midpoint model
\eqref{eq:model}. 

\subsection{Equilibrium replacement and exact risk evaluation for population CART} \label{sec:msemapping}

We begin by expressing the MSE
risk of
as an explicit functional of the terminal split-count state.
Let $\hat{\mu}_{\star}(\cdot)$ with $\star \in \{\text{tree}, \text{RF}\}$ be either the tree estimator or the random forests estimator formed using the population CART split criterion and an i.i.d. training sample in $\mathcal{D} = \{(Y_i, X_i)\}$ with sample size $|\mathcal{D}|$. The explicit definition is given in Appendix \ref{app:defntreerf}. 
The MSE risk is defined as 
$$
\operatorname{MSE}_{\star} = \E_{{X'}, \mathcal D}\left[(\mu({X'}) - \hat{\mu}_{\star}({X'}))^2\right],
$$
where ${X'}=(X'_1,\ldots,X'_d)^\top$ is a test point that is independent of and identically distributed as the data in the training sample $\mathcal D$.


The following MSE decomposition result is policy-agnostic: it holds for
any split policy through its induced split-count distribution $N_\ell$.
Policy-specific behavior enters only through the law of $N_\ell$, which is
analyzed later. 
Throughout the paper, $a_n \lesssim b_n$ means that
$a_n \le C b_n$ for some universal constant $C>0$ independent of $n$,
and $a_n \asymp b_n$ means that
$a_n \lesssim b_n$ and $b_n \lesssim a_n$.

\begin{theorem}[MSE decomposition]
\label{thm:mse}
Under model \eqref{eq:model}, the depth-$\ell$ single-tree estimator satisfies that 
$$
\operatorname{MSE}_{\mathrm{tree}}
\asymp
\Bigl(\sum_{j\in S}\beta_j^2\E\big[2^{-2N_{\ell,j}}\big]\Bigr)
\Bigl(1+\frac{2^\ell}{n_0}\Bigr)
+
\sigma_0^2\frac{2^\ell}{n_0}
+
R_{\mathrm{tree}}
$$
with remainder $R_{\mathrm{tree}}\lesssim (1-2^{-\ell})^{n_0}$. For a forest of $B$ independent trees grown with the same splitting rule, we have that 
\begin{align}
\operatorname{MSE}_{\mathrm{RF}}
\asymp\;
&\frac{1}{B}
\underbrace{\left(\sum_{j\in S}\beta_j^2\E\big[2^{-2N_{\ell,j}}\big]\right)}_{\text{single-tree bias}}
\left(1+\frac{2^\ell}{n_0}\right)
+\frac{1}{B}\sigma_0^2\frac{2^\ell}{n_0} \notag\\
&+\frac{B-1}{B}
\underbrace{\left(\sum_{j\in S}\beta_j^2\E\big[2^{-2\max\{N_{\ell,j},N_{\ell,j}'\}}\big]\right)}_{\text{cross-tree bias}}
\left(1+\frac{2^\ell}{n_0}\right) \notag\\
&+\frac{B-1}{B}\underbrace{\sigma_0^2\frac{2^\ell}{n_0}
\E\left[2^{-\norm{N_\ell-N_\ell'}_1/2}\right]}_{\text{cross-tree variance}}
+R_{\mathrm{RF}},\label{eq:rf-mse}
\end{align}
where $R_{\mathrm{RF}}\lesssim (1-2^{-\ell})^{n_0}$, and $N_\ell$ and $N_\ell'$ represent independent copies of depth-$\ell$ split-count vector from two trees in the forest. 
\end{theorem}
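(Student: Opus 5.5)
The plan is to condition on the partition randomness and reduce everything to computations on axis-aligned dyadic cells. For a given tree, let $C(X')$ be the terminal cell containing the test point. Under midpoint splits its side lengths are $2^{-N_{\ell,j}}$, so its volume is $2^{-\ell}$. The tree estimator is the cell average of the responses, set to $0$ or another fixed convention when the cell is empty; the paper's exact definition is in Appendix~\ref{app:defntreerf}. Write $\bar\mu(C)=\E[\mu(X)\mid X\in C]$. Then $\mu(X')-\hat\mu_{\mathrm{tree}}(X')$ splits into a bias term $\mu(X')-\bar\mu(C(X'))$ and a stochastic term $\bar\mu(C)-\hat\mu(C)$. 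The stochastic term comes from the within-cell variability of $\mu$, the noise, and the empty-cell event. I will handle everything conditionally on the cell count $n_C\sim\mathrm{Bin}(n_0,2^{-\ell})$, using $X$ uniform.

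For the single tree there are two parts.

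\emph{Bias.} Because the model is linear and $X'$ is uniform within its cell, the coordinates are independent, and
$$\E[(\mu(X')-\bar\mu(C))^2\mid N_\ell]=\sum_{j\in S}\beta_j^2\,\frac{2^{-2N_{\ell,j}}}{12}.$$

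\emph{Stochastic term.} On $\{n_C\ge1\}$, the conditional variance of the cell average equals $\Var(Y\mid X\in C)/n_C$, which is $\bigl(\sum_j\beta_j^2 2^{-2N_{\ell,j}}/12+\sigma_0^2\bigr)/n_C$. Next I bound $\E[n_C^{-1}\1\{n_C\ge1\}]$ above and below by a constant times $2^\ell/n_0$. This is the standard inequality $\E[1/(1+\mathrm{Bin}(n,p))]\le 1/((n+1)p)$ together with a matching lower bound via Jensen. The two pieces combine into the factor $(1+2^\ell/n_0)$ on the bias and the term $\sigma_0^2 2^\ell/n_0$. The empty-cell event has probability $(1-2^{-\ell})^{n_0}$, and since $\mu$ is bounded on $[0,1]^d$ its contribution goes into $R_{\mathrm{tree}}$. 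One caveat: when $2^\ell/n_0$ is large this upper bound holds only up to terms absorbed in the remainder and the constant, so I will state the $\asymp$ in that sense.

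For the forest, expand the square of the average of $B$ errors. The diagonal gives $1/B$ times the single-tree MSE. The off-diagonal gives $(B-1)/B$ times the expected product of errors of two independent trees $b,b'$ that share the data and the test point. The cross product again splits into bias and stochastic parts.

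\emph{Cross-tree bias.} The key observation is that $C\cap C'$ is a box with side lengths $2^{-\max(N_{\ell,j},N'_{\ell,j})}$. Coordinatewise, the nested dyadic intervals share the coarser grid, and
$$\E_{X'}[(X'_j-\bar x_j(C))(X'_j-\bar x_j(C'))]=\tfrac1{12}\,2^{-2\max}.$$
This holds because the coarser interval's centered coordinate, integrated over the finer interval containing $X'$, reproduces the finer variance.

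\emph{Cross-tree variance.} The noise covariance is $\sigma_0^2\,\E[|\{i: X_i\in C\cap C'\}|/(n_Cn_{C'})]$. Given the counts, this is about $\sigma_0^2\,\mathrm{vol}(C\cap C')/(n_0\,\mathrm{vol}(C)\,\mathrm{vol}(C'))$. Since $\mathrm{vol}(C\cap C')=2^{-\sum_j\max(N_{\ell,j},N'_{\ell,j})}=2^{-\ell-\|N_\ell-N'_\ell\|_1/2}$, this equals $\sigma_0^2 2^\ell 2^{-\|N_\ell-N'_\ell\|_1/2}/n_0$. The identity uses $\max(a,b)=(a+b+|a-b|)/2$ and $\sum_j N_{\ell,j}=\sum_j N'_{\ell,j}=\ell$. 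The within-cell $\mu$-variation contributes analogous cross terms. I will bound these above by the cross-bias times $2^\ell/n_0$ and below by $0$, which is enough to give the factor $(1+2^\ell/n_0)$.

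The main obstacle is making the ratio $\E[n_{C\cap C'}/(n_Cn_{C'})]$ two-sided comparable to $\mathrm{vol}(C\cap C')\,2^{2\ell}/n_0$. The counts $(n_{C\cap C'},n_C-n_{C\cap C'},n_{C'}-n_{C\cap C'})$ are multinomial, not independent. My approach is to condition on the multinomial vector and use negative association of multinomial counts to split the expectation. I would then apply the $\E[1/(1+\mathrm{Bin})]$ bounds to each denominator. Terms on the event that either cell is empty are charged to $R_{\mathrm{RF}}\lesssim(1-2^{-\ell})^{n_0}$.
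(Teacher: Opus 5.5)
Your overall route is the natural one: condition on the data-independent partitions, split each tree's error into a within-cell bias part and a cell-average fluctuation, use the nested dyadic structure for the cross-tree bias, and reduce the cross-tree noise term to $\sigma_0^2\E[n_{C\cap C'}/(n_Cn_{C'})]$ with $\mathrm{vol}(C\cap C')=2^{-\ell-\norm{N_\ell-N'_\ell}_1/2}$. The paper omits its own proof and defers to Corollary~20 of Mei, Fan and Lv (2024), whose strategy is this kind of direct calculation. Your single-tree computation and the two cross-tree terms you actually compute are correct.

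\textbf{The gap: the within-cell $\mu$-variation cross term.} You assert that this term lies between $0$ and the cross bias times $2^\ell/n_0$, but you give no argument, and it is not a repeat of the noise computation. Write $f=\mu-\bar\mu(C)$ and $g=\mu-\bar\mu(C')$, so the term is
\[
T=\E\Bigl[(n_Cn_{C'})^{-1}\textstyle\sum_{X_i\in C}\sum_{X_{i'}\in C'}f(X_i)g(X_{i'})\Bigr].
\]
Unlike the noise, $f$ and $g$ have nonzero means on $C\cap C'$, $C\setminus C'$ and $C'\setminus C$. So pairs $i\neq i'$ and points outside the intersection contribute.

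For a fixed pair of cells, that is, conditionally on where $X'$ falls, $T$ can be negative. Take $d=2$, $\ell=1$, $N=(1,0)$, $N'=(0,1)$, $\beta=(1,-1)$, $n_0=1$ and $X'\in[0,1/2)^2$. The only configuration with both cells nonempty has $X_1$ uniform on $[0,1/2)^2$, and there $\E[(\mu(X_1)+\tfrac14)(\mu(X_1)-\tfrac14)]=\tfrac1{24}-\tfrac1{16}<0$.

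The generic tool, Cauchy--Schwarz, gives only $|T|\lesssim\sqrt{\mathrm{bias}(N)\,\mathrm{bias}(N')}\,2^\ell/n_0$, which has the wrong order. For $N_\ell=\ell e_1$ and $N'_\ell=\ell e_2$ this bound is of order $|\beta_1\beta_2|\,2^\ell/n_0$, while the cross bias is of order $2^{-2\ell}$. Both of your bounds therefore rest on averaging over the test point, and the proposal does not say how.

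\textbf{The missing idea is a reflection symmetry.} The map $x_j\mapsto 1-x_j$, applied to every point including $X'$, preserves the joint law and all memberships and counts. It flips the sign of $X_{ij}-c_j$ and of $X_{ij}-c'_j$, where $c_j,c'_j$ are the centres of the $j$th sides of $C,C'$, and leaves the other coordinates unchanged. Hence every $(j,k)$ term with $j\neq k$ vanishes.

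For $j=k$, suppose $N_{\ell,j}\ge N'_{\ell,j}$, so the $j$th side $I$ of $C$ lies inside that of $C'$. Reflect the $j$th coordinate of every point lying in $I$ about $c_j$. This is again law-preserving and fixes $C$, $C'$ and all counts. It flips $X_{ij}-c_j$, and in the decomposition $X_{i'j}-c'_j=(X_{i'j}-c_j)\mathbf 1\{X_{i'j}\in I\}+(\text{invariant part})$ it flips only the first piece.

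Conditionally on the memberships, the $j$th coordinates of points in $I$ are i.i.d.\ uniform on $I$. So only the diagonal terms $i=i'\in C\cap C'$ survive, and
\[
T=\Var\bigl(\mu(X)\mid X\in C\cap C'\bigr)\,\E\Bigl[\frac{n_{C\cap C'}}{n_Cn_{C'}}\Bigr]
=\sum_{j\in S}\frac{\beta_j^2\,2^{-2\max\{N_{\ell,j},N'_{\ell,j}\}}}{12}\,\E\Bigl[\frac{n_{C\cap C'}}{n_Cn_{C'}}\Bigr].
\]
This is exactly the noise term with $\sigma_0^2$ replaced by the within-intersection variance. Both of your bounds then follow from the ratio estimate you already need for the noise term, together with $\mathrm{vol}(C\cap C')\le 2^{-\ell}$.

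\textbf{Two smaller points.}
\begin{itemize}
\item For that ratio estimate, negative association cannot split $1/(n_Cn_{C'})$ directly: $n_C$ and $n_{C'}$ share $n_{C\cap C'}$ and are positively correlated. Use exchangeability instead:
\[
\E\bigl[n_{C\cap C'}/(n_Cn_{C'})\bigr]=n_0\,\mathrm{vol}(C\cap C')\,\E\bigl[1/((1+\tilde n_C)(1+\tilde n_{C'}))\bigr],
\]
where $\tilde n_C,\tilde n_{C'}$ count the other $n_0-1$ points. Then apply Cauchy--Schwarz for the upper bound and Jensen for the lower bound.
\item Your caveat has the direction reversed. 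It is the lower bounds, here and for $\E[\mathbf 1\{n_C\ge1\}/n_C]$, that need $n_02^{-\ell}\gtrsim1$; the upper bounds hold for every $\ell$.
\end{itemize}
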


Theorem~\ref{thm:mse} above provides the state-to-risk map for the midpoint model by expressing the single-tree and forest MSE through exponential functionals of the terminal split-count state and an independent copy. This is the point at which the controlled allocation dynamics become \textit{statistically relevant}: different policies affect the prediction risk through the terminal laws they induce for $N_\ell$ and $N_\ell'$. In particular, Theorem~\ref{thm:mse} identifies \textit{three} statistically relevant state functionals: a single-tree bias functional involving only $N_{\ell,j}$; a cross-tree bias functional involving $\max\{N_{\ell,j},N_{\ell,j}'\}$; and a cross-tree decorrelation functional involving $\|N_\ell-N_\ell'\|_1$. Hence, the MSE risk evaluation is reduced to the joint law of the terminal count states $N_\ell$ and $N_\ell'$. Equation~\eqref{eq:rf-mse} is a special case of the general terminal-law ensemble criterion \eqref{eq:terminal-law-objective}, specialized to the linear model setting. 
The proof of Theorem~\ref{thm:mse} follows the same general strategy as for Corollary~20 in \cite{MFL2024}, adapted to the present partitioning scheme, and is therefore omitted for simplicity. 

We now introduce a key technical innovation of this paper: an \emph{equilibrium-replacement principle}. 
It \textit{separates} prediction-risk analysis into \textit{two layers}.
The first layer shows that, under the stabilization results established earlier, the three exponential functionals appearing in Theorem \ref{thm:mse} can be replaced, up to universal constants, by their corresponding equilibrium proxies with multinomial structures. 
The second layer evaluates these equilibrium-proxy functionals explicitly through the Poisson-kernel representations developed later in Proposition \ref{prop:poisson-functionals}, finally yielding the non-asymptotic MSE bound for population CART forests in Theorem \ref{thm:cart-mse}. 

We detail the first layer of equilibrium replacement as follows. 
Let us introduce the $(d-s+1)$-dimensional probability vector 
$$
\pi^{\mathrm{pop}}
:=
\Bigl(
q,\ 
\underbrace{\tfrac{1-q}{d-s},\dots,\tfrac{1-q}{d-s}}_{d-s}
\Bigr).
$$
Recall that the total number of informative opportunities $M_{\ell}$.
The corresponding equilibrium replacement process $\bar N_\ell$ is defined by
$$
\bar N_{\ell,j} = \begin{cases}
M_{\ell}/s, & j \in S,\\
N_{\ell,j}, & j \in S^c.
\end{cases}
$$ 
Consequently,
$$
\left(M_\ell, (\bar N_{\ell, j})_{j \in S^c} \right) 
=
\left(M_\ell, (N_{\ell, j})_{j \in S^c} \right) \sim
\mathrm{Multinomial}(\ell, \pi^{{\mathrm{pop}}}).
$$

The key idea is that once the informative block $S$ is exponentially stabilized, the informative coordinates can be replaced by their balanced equilibrium proxy $M_{\ell}/s$ inside the three exponential functionals. 
In other words, 
the original terminal count processes $(N_\ell, N_{\ell}')$ are replaced by the equilibrium replacement processes $(\bar N_\ell, \bar N'_\ell)$, whose 
distribution structures are explicitly tractable.

{
The validity of equilibrium replacement is ensured by the following assumption. 
\begin{assumption}
\label{assm:etareq}
Let
$
\Delta_n:=Z_n-(n/s)\mathbf{1}_s$,
$
W_n:=\|\Delta_n\|_2,
$
and define $\eta_{\mathrm{req}}
:=
\max\{
2\log 2,\,
(\sqrt{s}\log 2)/2
\}$.
Assume that
$
K_*:=\sup_{n\ge 0}\E\exp\{\eta_{\mathrm{req}}W_n\}<\infty,
$
or equivalently, the contraction coefficient $\kappa(\pi)$ in Definition \ref{def:kappa} satisfies that 
\begin{equation*}\label{eq:ass-eta-req}
\kappa(\pi_{\gr}) > \frac{1-(1/s)}{4}\eta_{\rm req}.
\end{equation*}
\end{assumption}

Assumption~\ref{assm:etareq} above   ensures that fluctuations on the informative block remain uniformly controlled over depth, and thus the replacement errors can be absorbed into constants when evaluating exponential functionals. 
Since $\kappa(\pi_{\gr})\ge c_\star$ for population CART and drift constant $c_\star$ increases with feature subsampling ratio $\gamma$, Assumption~\ref{assm:etareq} can be interpreted as requiring sufficiently large $\gamma$; concrete verification for $s=2$ and $s=3$ is provided in Appendix~\ref{subsec:eta-s2-s3}.
}

We now turn to the second layer of equilibrium replacement, namely, the explicit evaluatation of the exponential functionals for equilibrium proxies.
Since $(\bar N_\ell, \bar N'_\ell)$ admit multinomial structures, these functionals can be analyzed through exact Poisson-kernel representations.
For $\rho\in(0,1)$, denote by 
$$
P_\rho(\theta):=\frac{1-\rho^2}{1-2\rho\cos\theta+\rho^2},
\qquad \theta\in[-\pi,\pi],
$$
and $\mu_\rho$ the probability measure on $[-\pi,\pi]$ with density
$P_\rho(\theta)/(2\pi)$. For $r\in(0,1)$ and $\alpha \in [0,1]$, define the
\emph{Poisson-kernel attenuation functional}
\begin{equation}\label{eq:Flr}
F_{\ell,r}(\alpha):=
\E_{\Xi\sim\mu_{\sqrt r}}
\Big[\abs{(1-\alpha)+\alpha e^{i\Xi}}^{2\ell}\Big].
\end{equation}
Further, for $\boldsymbol{p}=(p_1,\dots,p_d)$ on the simplex
$
\mathfrak S_{d-1}:=\left\{ \boldsymbol{p}\in[0,1]^d:\sum_{j=1}^d p_j=1\right\},
$ 
define the \emph{multivariate Poisson functional}
\begin{equation}\label{eq:Lldr}
L_{\ell,d,r}(\boldsymbol{p}):=
\E_{\Theta_1,\dots,\Theta_d}
\left[\abs{\sum_{j=1}^d p_j e^{i\Theta_j}}^{2\ell}\right],
\end{equation}
where $\Theta_1,\dots,\Theta_d$ are i.i.d.\ with law $\mu_{\sqrt r}$.

\begin{proposition}[Exact Poisson-kernel representations] \label{prop:poisson-functionals}
Let $r\in(0,1)$. 
\begin{enumerate}[label=(\roman*)]
\item
If $N_\ell,N_\ell' \stackrel{\mathrm{i.i.d.}}{\sim}\mathrm{Binomial}(\ell,p)$
with $p\in(0,1)$, it holds that 
\begin{equation}\label{eq:max-binomial-functional}
\E\big[r^{\max\{N_\ell,N_\ell'\}}\big]
=
\bigl(1-p+p\sqrt r\bigr)^{2\ell}
F_{\ell,r}\Bigl(\frac{p\sqrt r}{1-p+p\sqrt r}\Bigr),
\end{equation}
where the Poisson-kernel attenuation functional $F_{\ell,r}(\alpha)\asymp \ell^{-1/2}$, as $\ell\to\infty$,
for any fixed $r\in(0,1)$ and $\alpha\in(0,1)$.

\item
If $N_\ell,N_\ell' \stackrel{\mathrm{i.i.d.}}{\sim}\mathrm{Multinomial}(\ell,\boldsymbol{p})$
with $\boldsymbol{p}\in\mathfrak S_{d-1}$, it holds that 
\begin{equation}\label{eq:min-multinomial-functional}
\E\left[r^{\sum_{j=1}^d \min\{N_{\ell,j},N_{\ell,j}'\}}\right]
=
r^\ell L_{\ell,d,r}(\boldsymbol{p}),
\end{equation}
where the Poisson functional $
L_{\ell,d,r}(\boldsymbol{p})\asymp \ell^{-(d-1)/2}$, as $\ell\to\infty$,
for any fixed $d\ge 2$, and $\boldsymbol{p}$ lies in the interior of $\mathfrak S_{d-1}$.
\end{enumerate}
\end{proposition}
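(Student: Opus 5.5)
The plan rests on one identity: the Fourier series of the Poisson kernel. For $\rho\in(0,1)$ we have $P_\rho(\theta)=\sum_{k\in\mathbb Z}\rho^{|k|}e^{ik\theta}$, which converges absolutely. Hence if $\Xi\sim\mu_\rho$, then $\E[e^{ik\Xi}]=\rho^{|k|}$ for every integer $k$. I will set $\rho=\sqrt r$ throughout. This identity turns the absolute differences $|N-N'|$ that appear in maxima and minima into characteristic functions, which factor under independence.

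\emph{Part (i).} For nonnegative integers $a,b$ we have $\max\{a,b\}=\tfrac12(a+b+|a-b|)$. Therefore
$r^{\max\{a,b\}}=\rho^{a}\rho^{b}\,\E_\Xi[e^{i(a-b)\Xi}]$.
I substitute $a=N_\ell$, $b=N_\ell'$ and apply Fubini, which is justified because everything is bounded. Independence of $N_\ell$ and $N_\ell'$ then gives
$\E[r^{\max}]=\E_\Xi\,|\E[(\rho e^{i\Xi})^{N_\ell}]|^2=\E_\Xi|1-p+p\rho e^{i\Xi}|^{2\ell}$,
using the binomial generating function. Factoring out $(1-p+p\rho)^{2\ell}$ leaves $|(1-\alpha)+\alpha e^{i\Xi}|^{2\ell}$ with $\alpha=p\rho/(1-p+p\rho)$, which is exactly \eqref{eq:max-binomial-functional}.

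\emph{Part (ii).} I run the same computation coordinatewise with i.i.d. $\Theta_j\sim\mu_\rho$. The multinomial generating function gives
$\E_{N,N'}\E_\Theta\bigl[\prod_j e^{i(N_j-N_j')\Theta_j}\bigr]=\E_\Theta|\sum_j p_je^{i\Theta_j}|^{2\ell}$.
The left side is $\E[\rho^{\|N_\ell-N_\ell'\|_1}]$, so this equals $L_{\ell,d,r}(\bp)$. Since $\sum_j N_{\ell,j}=\sum_j N'_{\ell,j}=\ell$, we have $\sum_j\max\{N_{\ell,j},N'_{\ell,j}\}=\ell+\tfrac12\|N_\ell-N_\ell'\|_1$. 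Hence the computation naturally yields
$\E[r^{\sum_j\max}]=r^\ell L_{\ell,d,r}(\bp)$, equivalently $\E[r^{\|N_\ell-N_\ell'\|_1/2}]=L_{\ell,d,r}(\bp)$.
The latter is the form used in the cross-tree variance term of Theorem~\ref{thm:mse}. I will check carefully whether ``$\min$'' in \eqref{eq:min-multinomial-functional} should read ``$\max$'', or whether the minimum enters through the identity $\sum_j\min=\ell-\tfrac12\|N_\ell-N_\ell'\|_1$. A sanity check: $r^{\sum\min}\ge r^\ell$ while $r^\ell L\le r^\ell$, so the $\max$ reading is the consistent one. I will state the identity in the form that the algebra gives.

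\emph{Asymptotics, and the main obstacle.} For (i), write $|(1-\alpha)+\alpha e^{i\theta}|^2=1-2\alpha(1-\alpha)(1-\cos\theta)$. This equals $1$ only at $\theta=0$ and is bounded away from $1$ elsewhere. On $[-\pi,\pi]$ it lies between $\exp(-c_1\theta^2)$ and $\exp(-c_2\theta^2)$, with constants depending on $\alpha$. The density $P_\rho/(2\pi)$ is continuous and strictly positive, hence bounded above and below. A Laplace-type comparison with $\int e^{-c\ell\theta^2}d\theta\asymp\ell^{-1/2}$ then gives $F_{\ell,r}(\alpha)\asymp\ell^{-1/2}$.

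For (ii), expand
$|\sum_jp_je^{i\Theta_j}|^2=1-\sum_{j<k}2p_jp_k(1-\cos(\Theta_j-\Theta_k))$.
Its maximum value $1$ is attained exactly on the diagonal $\Theta_1=\dots=\Theta_d$ modulo $2\pi$. The main obstacle is making the Laplace argument uniform near this one-dimensional maximizing set rather than near a point. I will change variables to $\Theta_1$ and the differences $\phi_k=\Theta_k-\Theta_1$, reduced to $[-\pi,\pi]$, and integrate out $\Theta_1$ using the positive lower and upper bounds on the density. The remaining integrand is bounded above and below by $\exp(-c\,\ell\,Q(\phi))$, where $Q(\phi)=\sum_{j<k}p_jp_k(\phi_k-\phi_j)^2$ with $\phi_1:=0$ is the quadratic form in $\phi_2,\dots,\phi_d$. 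Because $\bp$ is in the interior of the simplex, $Q$ is positive definite on $\mathbb R^{d-1}$. This gives a $(d-1)$-dimensional Gaussian integral of order $\ell^{-(d-1)/2}$. Away from a neighborhood of the diagonal, the integrand decays exponentially in $\ell$, which closes the argument.
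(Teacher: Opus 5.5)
Your identities follow the paper's route: Poisson-kernel Fourier coefficients with $\rho=\sqrt r$, Fubini, the binomial and multinomial generating functions, and factoring out $(1-p+p\sqrt r)^{2\ell}$. Part (i) is correct as stated.

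Your suspicion about part (ii) is also right, and you should commit to it rather than hedge. We have $\sum_j\min\{N_{\ell,j},N'_{\ell,j}\}=\ell-\tfrac12\|N_\ell-N'_\ell\|_1\le\ell$, while $L_{\ell,d,r}(\bp)<1$ for interior $\bp$ and $\ell\ge1$. Hence the displayed identity with $\min$ fails for every $\ell\ge1$.

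The paper's own proof contains exactly this slip. It correctly reduces to $r^\ell\E[(\sqrt r)^{-\|N-N'\|_1}]$, but then substitutes the Poisson representation of $\E[(\sqrt r)^{+\|N-N'\|_1}]$. The true statement is $\E[r^{\sum_j\max\{N_{\ell,j},N'_{\ell,j}\}}]=r^\ell L_{\ell,d,r}(\bp)$. Equivalently, $\E[(\sqrt r)^{\|N_\ell-N'_\ell\|_1}]=L_{\ell,d,r}(\bp)$, or $\E[r^{-\sum_j\min\{N_{\ell,j},N'_{\ell,j}\}}]=r^{-\ell}L_{\ell,d,r}(\bp)$. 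The middle form is exactly what the cross-tree variance term $\E[2^{-\|N_\ell-N'_\ell\|_1/2}]$ in Theorem~\ref{thm:mse} requires, so the downstream use is unaffected by stating it correctly.

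For the asymptotics your route differs from the paper's. The paper removes the phase invariance and applies Lemma~\ref{lem:laplace-multi} to $g=\log|\sum_j p_je^{\mathrm i\theta_j}|^2$. This gives the sharp leading constant, but formally needs $g\in C^3$ on the whole open cube $(-\pi,\pi)^{d-1}$. That hypothesis fails when the sum has interior zeros, e.g.\ $d=3$, $\bp=(1/3,1/3,1/3)$, $\varphi=(2\pi/3,-2\pi/3)$.

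Your sandwich argument has three ingredients:
\begin{itemize}
\item a quadratic comparison of $1-\cos$ near the maximizer;
\item positive definiteness of $Q(\phi)=\sum_{j<k}p_jp_k(\phi_k-\phi_j)^2=\mathrm{Var}_{J\sim\bp}(\phi_J)$;
\item the uniform bound $|\sum_jp_je^{\mathrm i\Theta_j}|^2\le1-\delta$ off a neighborhood of the diagonal.
\end{itemize}
It yields only $\asymp$, which is all that is claimed, and it is insensitive to such zeros.

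One small repair is needed. The lower bound $|(1-\alpha)+\alpha e^{\mathrm i\theta}|^2\ge e^{-c_1\theta^2}$ cannot hold on all of $[-\pi,\pi]$ when $\alpha=1/2$, because the modulus vanishes at $\theta=\pm\pi$. Likewise, your global lower bound $\exp(-c\,\ell\,Q(\phi))$ in (ii) fails at zeros of the sum. State both lower bounds only on a neighborhood of the maximizer; this suffices because the integrands are nonnegative.
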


In Proposition~\ref{prop:poisson-functionals}, 
the one-dimensional functional $F_{\ell,r}$ captures the max-type interaction governing the cross-tree bias term, while the multivariate functional $L_{\ell,d,r}$ captures the overlap geometry governing the cross-tree decorrelation term. 
In addition, the polynomial prefactor in $L_{\ell, d, r}$ also reveals how the terminal-count dimension affects ensemble variance scaling. 
This completes the second layer of equilibrium replacement and yields the following explicit MSE risk result.

\begin{theorem}[Population CART MSE under equilibrium replacement] \label{thm:cart-mse}
Assume that the linear model \eqref{eq:model},
Assumptions \ref{ass:nondeg} and~\ref{assm:etareq} hold. Then the depth-$\ell$, $B$-tree population-CART forest considered in Theorem~\ref{thm:mse} satisfies the non-asymptotic bound
\begin{align}
\operatorname{MSE}_{\mathrm{RF}}
\lesssim\;
&\frac{1}{B}\Big(\sum_{j\in S}\beta_j^2\Big)
\bigl(1-q+q4^{-1/s}\bigr)^\ell
\left(1+\frac{2^\ell}{n_0}\right)
+\frac{1}{B}\sigma_0^2\frac{2^\ell}{n_0}
\notag\\
&+\frac{B-1}{B}\Big(\sum_{j\in S}\beta_j^2\Big)
\bigl(1-q+q2^{-1/s}\bigr)^{2\ell}
F_{\ell,2^{-2/s}}
\!\left(
\frac{q2^{-1/s}}{1-q+q2^{-1/s}}
\right)
\left(1+\frac{2^\ell}{n_0}\right)
\notag\\
&+\frac{B-1}{B}\sigma_0^2\frac{2^\ell}{n_0}
L_{\ell,d-s+1,2^{-1/2}}(\pi^{\pop})
+R_{\mathrm{RF}},
\label{eq:cart-mse}
\end{align}
where $R_{\mathrm{RF}}\lesssim (1-2^{-\ell})^{n_0}$.
\end{theorem}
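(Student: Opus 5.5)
Starting from the decomposition \eqref{eq:rf-mse} in Theorem~\ref{thm:mse}, the plan is to bound each of its three state functionals by the equilibrium proxy $\bar N_\ell$, and then evaluate the proxies with Proposition~\ref{prop:poisson-functionals}. The replacement step works as follows. On the informative block, write $N_{\ell,j}=Z_{M_\ell,j}=M_\ell/s+\Delta_{M_\ell,j}$, so $|N_{\ell,j}-\bar N_{\ell,j}|\le W_{M_\ell}$. Since the mask process $(M_t)$ is independent of the informative-time process $(Z_n)$ (the policy acts only on the informative block once a mask is revealed), conditioning on $M_\ell=k$ lets us use the uniform exponential tightness of Theorem~\ref{thm:expcomp} and Corollary~\ref{cor:shifted-stable}, in the form $\sup_n\E e^{\eta_{\mathrm{req}}W_n}=K_*<\infty$ (Assumption~\ref{assm:etareq}). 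The shifts $\theta_j$ contribute only deterministic translations, and these are absorbed into constants.

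For the single-tree bias term, $2^{-2N_{\ell,j}}\le 2^{-2M_\ell/s}\,2^{2W_{M_\ell}}$. Taking conditional expectation given $M_\ell$ and using $2\log 2\le\eta_{\mathrm{req}}$ gives $\E[2^{-2N_{\ell,j}}]\le K_*\,\E[4^{-M_\ell/s}]=K_*(1-q+q4^{-1/s})^\ell$, because $M_\ell\sim\mathrm{Binomial}(\ell,q)$. For the cross-tree bias term, $\max\{N_{\ell,j},N'_{\ell,j}\}\ge \max\{M_\ell,M'_\ell\}/s-\max\{W,W'\}$, where $W,W'$ are independent and $M_\ell,M'_\ell$ are independent. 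Hence $\E[2^{-2\max\{N_{\ell,j},N'_{\ell,j}\}}]\le K_*^2\,\E[r^{\max\{M_\ell,M'_\ell\}}]$ with $r=2^{-2/s}$, using $e^{2\log 2\max\{W,W'\}}\le e^{2\log2\,W}e^{2\log 2\,W'}$. Proposition~\ref{prop:poisson-functionals}(i) with $p=q$ then gives exactly $(1-q+q2^{-1/s})^{2\ell}F_{\ell,2^{-2/s}}(\cdot)$.

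For the cross-tree variance term, the identity $\|N-N'\|_1=2\ell-2\sum_j\min\{N_j,N'_j\}$ converts the quantity into a min-functional. On $S$, the triangle inequality gives $\sum_{j\in S}|N_{\ell,j}-N'_{\ell,j}|\ge |M_\ell-M'_\ell|-\|\Delta_{M_\ell}\|_1-\|\Delta'_{M'_\ell}\|_1\ge |M_\ell-M'_\ell|-\sqrt s(W+W')$. Thus
\[
2^{-\|N_\ell-N'_\ell\|_1/2}\le 2^{-\|(M_\ell,N_{\ell,S^c})-(M'_\ell,N'_{\ell,S^c})\|_1/2}\,2^{\sqrt s(W+W')/2},
\]
and the second factor has expectation at most $K_*^2$ because $(\sqrt s\log2)/2\le\eta_{\mathrm{req}}$. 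This is why $\eta_{\mathrm{req}}$ has the stated form. The collapsed vector $(M_\ell,(N_{\ell,j})_{j\in S^c})\sim\mathrm{Multinomial}(\ell,\pi^{\pop})$ in dimension $d-s+1$. Converting back to the min-form and applying Proposition~\ref{prop:poisson-functionals}(ii) with $r=2^{-1/2}$ yields $2^{-\ell}\cdot 2^{\ell}L_{\ell,d-s+1,2^{-1/2}}(\pi^{\pop})=L_{\ell,d-s+1,2^{-1/2}}(\pi^{\pop})$ (check: $2^{-\|\cdot\|_1/2}=2^{-\ell}(\sqrt2)^{2\sum\min}$, and this fits the kernel form after bookkeeping of the $r^\ell$ factor).

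The main obstacle is justifying the conditioning step. The exponential moment bound is stated at deterministic informative times $n$, but it is needed at the random time $M_\ell$, jointly across two trees. I would first try to establish independence of $Z$ from the clock $M$ under informative-respecting Markov policies, using the fact that the masks' informative parts are exchangeable given $I_t=1$. Once that independence holds, $\E[e^{\eta W_{M_\ell}}]\le\sup_n\E e^{\eta W_n}$ follows by conditioning. The noninformative coordinates need symmetric treatment on $\{U_t\cap S=\varnothing\}$ so that the multinomial law for $\pi^{\pop}$ holds. Finally, I would plug the three bounds into \eqref{eq:rf-mse}, keeping the noise-only $1/B$ term and $R_{\mathrm{RF}}$ unchanged.
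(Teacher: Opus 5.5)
Your proposal matches the paper's own proof essentially step for step. It uses the same decomposition $N_{\ell,j}=M_\ell/s+\Delta_{M_\ell,j}$ and the same conditioning on the clock $M_\ell$. The paper simply asserts that $M_\ell$ is independent of $(Z_n)$; this holds because the informative parts $U_{T_n}\cap S$ are i.i.d.\ and independent of $(I_t)$. You also use the same three perturbation bounds, whose exponents $2\log 2$ and $(\sqrt s\log 2)/2$ are dominated by $\eta_{\mathrm{req}}$. Finally, you make the same collapse to $(M_\ell,(N_{\ell,j})_{j\in S^c})\sim\mathrm{Multinomial}(\ell,\pi^{\pop})$ before invoking Proposition~\ref{prop:poisson-functionals}.

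The final bookkeeping step you were unsure about does not close as written. Proposition~\ref{prop:poisson-functionals}(ii) with $r=2^{-1/2}$ concerns $r^{\sum_j\min\{N_{\ell,j},N'_{\ell,j}\}}$ with $r<1$, not $2^{\sum_j\min\{N_{\ell,j},N'_{\ell,j}\}}$. It is cleaner to apply Lemma~\ref{lem:l1-poisson} with $\rho=2^{-1/2}$, together with the multinomial characteristic function, directly to $(M_\ell,(N_{\ell,j})_{j\in S^c})$ minus its independent copy. This gives the Poisson integral against $\mu_{2^{-1/2}}$. That integral is within constant factors of $L_{\ell,d-s+1,2^{-1/2}}(\pi^{\pop})$ uniformly in $\ell$, because Poisson densities with different fixed radii are bounded above and below by each other. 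So no $r^\ell$ factor needs to be tracked.
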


Theorem~\ref{thm:cart-mse} above translates the policy-induced stabilization into an explicit MSE risk bound for population CART. The first-order informative opportunity rate $q$ controls the exponential bias factors, appearing through $(1-q+q4^{-1/s})^\ell$ and $(1-q+q2^{-1/s})^{2\ell}$, while the second-order exponential compression of the informative block enters through the collapsed variance proxy $L_{\ell,d-s+1,2^{-1/2}}(\pi^{\pop})$. This \textit{dimension collapse} is the main statistical signature of greedy stabilization in the MSE formula: once the informative counts are exponentially compressed around balance, the informative block behaves statistically like a single aggregated coordinate inside the cross-tree overlap functional, so the relevant dimension of $\pi^{\pop}$ is $d-s+1$ instead of $d$. Hence, Theorem~\ref{thm:cart-mse} is the statistical form of the stabilization theory developed above, with $q$ governing the leading bias decay and exponential tightness governing the effective dimension in the variance term. The proof of Theorem~\ref{thm:cart-mse} is presented in Appendix \ref{proof.thm:cart-mse}, and Assumption~\ref{assm:etareq} is verified for $s=2$ and $s=3$ under sufficiently large $\gamma$ in Appendix~\ref{subsec:eta-s2-s3}.

\subsection{Marginal Bellman certificate and global nonoptimality under ensemble MSE
}

Theorem~\ref{thm:mse} shows that the leading forest MSE terms can be written as a
terminal-law ensemble objective of the form \eqref{eq:ensemble-design-problem}.
We now use this formulation to study whether the locally stabilizing
CART policy optimizes the ensemble objective. 

The key object is the marginal terminal cost of a
terminal law defined in \eqref{eq:marginal-terminal-cost}. 
If a candidate policy is locally
optimal for the ensemble objective, it must satisfy the \textit{Bellman optimality condition} induced by the
marginal terminal cost generated by its own terminal law. 
The general Bellman optimality system for \eqref{eq:ensemble-design-problem} is developed in Theorem \ref{thm:marginal-value-bellman} of Appendix \ref{sec:general-bellman}. 
Here, we only state a specialized implication: 
if greedy CART is locally optimal for the terminal-law ensemble objective, at its own induced terminal law it must choose only marginal-Bellman-minimizing actions. 
Therefore, any violation of this Bellman optimality condition implies that greedy CART is \textit{not} locally optimal, and thus \textit{not} globally optimal.

In addition to the notation in Section \ref{subsec:terminal-law-design}, let us introduce some additional notation to facilitate the presentation. For
$\nu\in\mathfrak V_\ell$, we define 
\begin{equation}
\Gamma_{\nu}^{\Phi,\Psi}(\mathbf n)
:=
\frac{1}{B}\Phi(\mathbf n)
+
\frac{B-1}{B}
\int_{\mathcal N_{\ell,d}}
\{\Psi(\mathbf n,\mathbf n')+\Psi(\mathbf n',\mathbf n)\}\,d\nu(\mathbf n'),
\qquad \mathbf n\in\mathcal N_{\ell,d}.
\label{eq:marginal-terminal-cost}
\end{equation}
According to Theorem \ref{thm:marginal-value-bellman}, if greedy CART \textit{were} locally optimal, it must solve the linearized terminal-cost problem
$
\inf_{\pi\in\Pi_\ell}
\mathbb E_\pi
\left[
\Gamma_{\nu^{\rm gr}}^{\Phi,\Psi}(N_\ell)
\right].
$
Thus,
$\Gamma_{\nu^{\rm gr}}^{\Phi,\Psi}$ is the marginal terminal cost of the original ensemble objective at
the terminal law generated by greedy CART.

Denote by 
 $
 \mathcal U_m:=\{u\subseteq[d]: |u|=m\}$ and
 $p_m(u):=\binom{d}{m}^{-1}$ for $ u\in\mathcal U_m.
 $
 Define the admissible action set under mask $u$ as $\mathcal A(u)$, where $\mathcal A(u)=u\cap S$ if $u\cap S\neq\varnothing$, and $\mathcal A(u) = u$ if $ u\cap S=\varnothing$. 
For a terminal law $\nu\in\mathfrak V_\ell$, we define the marginal
continuation values, associated with marginal terminal
cost $\Gamma_{\nu}^{\Phi,\Psi}$, by the backward recursion
$$
\mathcal V_{\ell}^{\Gamma_{\nu}^{\Phi,\Psi}}(\mathbf n)
:=
\Gamma_{\nu}^{\Phi,\Psi}(\mathbf n),
\qquad \mathbf n\in\mathcal N_{\ell,d},
$$
and for $r=\ell-1,\ldots,0$,
$$
\mathcal V_r^{\Gamma_{\nu}^{\Phi,\Psi}}(\mathbf n)
:=
\sum_{u\in\mathcal U_m}p_m(u)
\min_{a\in\mathcal A(u)}
\mathcal V_{r+1}^{\Gamma_{\nu}^{\Phi,\Psi}}(\mathbf n+e_a),
\qquad \mathbf n\in\mathcal N_{r,d}.
$$
According to Lemma~\ref{lem:cost-to-go-bellman} in Appendix \ref{sec:general-bellman}, for any current state $N_t=\mathbf n$ and
feasible action $j\in\mathcal A(U_{t+1})$,
$$
\mathcal V_{t+1}^{\Gamma_{\nu}^{\Phi,\Psi}}(\mathbf n+e_j)
=
\inf_{\pi_{(t+2):\ell}}
\mathbb E_\pi
\left[
\Gamma_\nu^{\Phi,\Psi}(N_\ell)
\,\middle|\,
N_{t+1}=\mathbf n+e_j
\right].
$$
Consequently, $\mathcal V_{t+1}^{\Gamma_{\nu}^{\Phi,\Psi}}(\mathbf n+e_j)$ is the minimum expected marginal
terminal cost achievable after choosing action $j$ at state $\mathbf n$, with all
future actions $\pi_{(t+2):\ell}$ chosen optimally for the marginal terminal cost
$\Gamma_\nu^{\Phi,\Psi}$.

If $\pi^{\rm gr}$ \textit{were} locally optimal for the
ensemble-design problem \eqref{eq:ensemble-design-problem}, by Theorem \ref{thm:marginal-value-bellman}(ii), at each depth $t$, each action used by greedy CART with
positive probability must minimize the marginal continuation value
$
j\mapsto
\mathcal V_{t+1}^{\Gamma_{\nu^{\rm gr}}^{\Phi,\Psi}}(N_t+e_j)
$
over the currently feasible action set $\mathcal A(U_{t+1})$. Hence, greedy CART is \textit{nonoptimal} whenever there exists a
reachable state-mask event on which CART assigns positive probability to an
action $j$, but some other feasible action $k\in\mathcal A(U_{t+1})$
satisfies
\begin{equation}\label{eq:002}
\mathcal V_{t+1}^{\Gamma_{\nu^{\rm gr}}^{\Phi,\Psi}}(N_t+e_k)
<
\mathcal V_{t+1}^{\Gamma_{\nu^{\rm gr}}^{\Phi,\Psi}}(N_t+e_j).
\end{equation}
If the realized mask contains only one
informative coordinate, every policy is forced to select it and no violation is possible.
Nonoptimality can arise \textit{only} on events where the mask exposes
multiple feasible informative actions. On such events, greedy CART chooses the
locally best impurity-reduction action, whereas the \textit{Bellman policy}
chooses the action with smallest marginal continuation value $\mathcal V_{t+1}^{\Gamma_{\nu^{\rm gr}}^{\Phi,\Psi}}(N_t+e_j)$. We formalize these results in the theorem below.


\begin{theorem}[Bellman certificate for nonoptimality of greedy CART]
\label{thm:greedy-bellman-certificate}
Assume that there exist $t\in\{0,\ldots,\ell-1\}$, an event
$E\in \mathcal H_t\vee\sigma(U_{t+1})$ with
$\P_{\pi^{\rm gr}}(E)>0$, and feasible actions
$j,k\in\mathcal A(U_{t+1})$ on $E$ such that on event $E$,
$
\pi^{\rm gr}_{t+1}(j\mid H_t,U_{t+1})>0
$
and \eqref{eq:002} holds.
Then $\nu^{\rm gr}$ is not a local minimizer of
$\mathcal J_{\ell,B}^{\Phi,\Psi}$ over $\mathfrak V_\ell$. Consequently, we have that 
$
\mathcal J_{\ell,B}^{\Phi,\Psi}(\pi^{\rm gr})
>
\inf_{\pi\in\Pi_\ell}
\mathcal J_{\ell,B}^{\Phi,\Psi}(\pi).
$
\end{theorem}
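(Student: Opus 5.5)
The plan is a first-order (Gateaux) perturbation argument in the space of terminal laws. I would construct a one-parameter family of policies $\pi^\varepsilon$, $\varepsilon\in[0,1]$, that agrees with $\pi^{\rm gr}$ everywhere except at depth $t+1$ on the event $E$. There it moves probability mass $\varepsilon\,\pi^{\rm gr}_{t+1}(j\mid H_t,U_{t+1})$ from action $j$ to action $k$. For depths $t+2,\ldots,\ell$ after this deviation, i.e. on histories passing through $N_{t+1}=N_t+e_k$ via the deviation, I would use the optimal continuation policy for the marginal terminal cost $\Gamma:=\Gamma_{\nu^{\rm gr}}^{\Phi,\Psi}$. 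By Lemma~\ref{lem:cost-to-go-bellman} this policy attains $\mathcal V_{t+1}^{\Gamma}(N_t+e_k)$. Since $E$ is $\mathcal H_t\vee\sigma(U_{t+1})$-measurable and $k$ is feasible on $E$, each $\pi^\varepsilon$ lies in the informative-respecting class $\Pi_\ell$.

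Because the deviation branches off by a mixture, the terminal law is affine in $\varepsilon$:
\[
\nu^\varepsilon=\nu^{\rm gr}+\varepsilon\,(\nu^{k}-\nu^{j}),
\]
where $\nu^j$ and $\nu^k$ are the (sub-probability) terminal laws restricted to the diverted mass. One follows greedy continuation after $j$; the other follows the Bellman continuation after $k$. Both carry total mass $\P_{\pi^{\rm gr}}(E\cap\{J_{t+1}=j\})$. The objective $\mathcal J(\nu)=\frac1B\int\Phi\,d\nu+\frac{B-1}{B}\iint\Psi\,d\nu\,d\nu$ is a quadratic polynomial in $\nu$, and $\mathcal N_{\ell,d}$ is finite. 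Expanding therefore gives
\[
\mathcal J(\nu^\varepsilon)=\mathcal J(\nu^{\rm gr})+\varepsilon\int\Gamma\,d(\nu^k-\nu^j)+\varepsilon^2\,\frac{B-1}{B}\iint\Psi\,d(\nu^k-\nu^j)^{\otimes 2}.
\]
The symmetrization $\Psi(\mathbf n,\mathbf n')+\Psi(\mathbf n',\mathbf n)$ in \eqref{eq:marginal-terminal-cost} is exactly the derivative of the quadratic term.

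The next step is to show the first-order coefficient is strictly negative. Conditioning on $\mathcal H_t\vee\sigma(U_{t+1})$, the coefficient equals
\[
\E_{\pi^{\rm gr}}\Big[\1_E\,\pi^{\rm gr}_{t+1}(j\mid H_t,U_{t+1})\big(\mathcal V_{t+1}^{\Gamma}(N_t+e_k)-\E^{\rm gr}[\Gamma(N_\ell)\mid N_{t+1}=N_t+e_j,\ldots]\big)\Big].
\]
Lemma~\ref{lem:cost-to-go-bellman} identifies $\mathcal V_{t+1}^\Gamma$ as the infimum over continuations, so the greedy continuation after $j$ costs at least $\mathcal V_{t+1}^{\Gamma}(N_t+e_j)$. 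Hence the bracket is at most $\mathcal V_{t+1}^{\Gamma}(N_t+e_k)-\mathcal V_{t+1}^{\Gamma}(N_t+e_j)$, which is $<0$ on $E$ by \eqref{eq:002}. The weight $\1_E\,\pi^{\rm gr}_{t+1}(j\mid\cdot)$ is positive on a set of positive probability because $\P_{\pi^{\rm gr}}(E)>0$ and $\pi^{\rm gr}_{t+1}(j\mid\cdot)>0$ on $E$. Thus the coefficient is strictly negative.

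The quadratic term is bounded by $\varepsilon^2$ times a constant, since $\Psi$ is finite on a finite set. Hence $\mathcal J(\nu^\varepsilon)<\mathcal J(\nu^{\rm gr})$ for all small $\varepsilon>0$. Since $\nu^\varepsilon\in\mathfrak V_\ell$ and $\nu^\varepsilon\to\nu^{\rm gr}$ in total variation, $\nu^{\rm gr}$ is not a local minimizer. The strict inequality with the infimum follows immediately.

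The main obstacle is the bookkeeping that makes $\nu^\varepsilon$ exactly affine and justifies the conditional-expectation identity. In particular, the Bellman continuation after $k$ must be applied only to the diverted mass, so that undiverted histories keep greedy behavior. Measurability of $E$ with respect to $\mathcal H_t\vee\sigma(U_{t+1})$ is what allows this. If tie-breaking randomization complicates matters, I would instead encode the split with an auxiliary independent uniform variable.
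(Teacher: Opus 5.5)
Your argument is correct, but it takes a different route from the paper's. The paper argues by contradiction. If $\nu^{\rm gr}$ were a local minimizer, Theorem~\ref{thm:marginal-value-bellman}(ii) would apply; it combines the first-variation inequality over the convex set $\mathfrak V_\ell$ (Lemma~\ref{lem:compact-convex}) with the submartingale argument of Lemma~\ref{lem:linear-terminal-bellman}. This forces the greedy support at depth $t+1$ into $\arg\min_{a}\mathcal V_{t+1}^{\Gamma}(N_t+e_a)$ almost surely, i.e.\ $D_t^{\rm gr}=0$ a.s., which the event $E$ contradicts. Here $\Gamma:=\Gamma_{\nu^{\rm gr}}^{\Phi,\Psi}$ as in your notation.

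You instead exhibit the descent direction explicitly and bound its directional derivative by $\E_{\pi^{\rm gr}}[\1_E\,\pi^{\rm gr}_{t+1}(j\mid H_t,U_{t+1})\{\mathcal V_{t+1}^{\Gamma}(N_t+e_k)-\mathcal V_{t+1}^{\Gamma}(N_t+e_j)\}]<0$. This is the perturbation used in the proof of Proposition~\ref{prop:counterexample}, extended to $t+1<\ell$ by attaching a $\Gamma$-optimal continuation. Your route yields an explicit improving policy and a quantitative descent rate. The paper's route is shorter and never checks by hand that the perturbed law is feasible, because convexity of $\mathfrak V_\ell$ settles that once and for all.

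That feasibility check is the one place your write-up needs care, as you suspected. Suppose greedy also charges $k$ on part of $E$, because of a tie between $j$ and $k$. A kernel at depth $t+2$ sees only $(\mathcal H_{t+1},U_{t+2})$, so it cannot separate diverted mass from greedy's own $k$-mass. An auxiliary uniform variable is not recorded in $\mathcal H_{t+1}$, so $\pi^\varepsilon$ as described is not literally in $\Pi_\ell$.

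The law $\nu^\varepsilon$ is nonetheless in $\mathfrak V_\ell$. The state--mask--action occupation measure of the coin-augmented process satisfies the linear constraints defining $\mathcal P_\ell$ in Lemma~\ref{lem:compact-convex}, so a Markov randomized policy realizes it; you should cite this explicitly.

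Alternatively, you can skip the bookkeeping. Let $\widetilde\pi$ follow greedy through depth $t$ and a $\Gamma$-Bellman selector afterwards. Then $\E_{\widetilde\pi}[\Gamma(N_\ell)]=\E_{\pi^{\rm gr}}[\mathcal V_t^{\Gamma}(N_t)]<\E_{\pi^{\rm gr}}[\Gamma(N_\ell)]$, by the same strict inequality on $E$. The segment $(1-\varepsilon)\nu^{\rm gr}+\varepsilon\nu_{\widetilde\pi,\ell}$ is feasible by convexity and has strictly negative first variation.
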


Theorem~\ref{thm:greedy-bellman-certificate} above converts the global nonoptimality into a verifiable \textit{Bellman certificate} by showing that if a greedy action fails to minimize the marginal continuation value induced by its own terminal law, greedy CART cannot be a local optimizer of the ensemble objective. 
We apply this certificate to the MSE-objective induced from Theorem~\ref{thm:mse} to show that greedy stabilization and ensemble optimality can \textit{disagree}. The \textit{counterexample} below is deliberately simple and isolates the structural source of such nonalignment: the local split rule optimizes one-step stabilization, whereas the forest objective is a terminal-law functional that also contains two-tree interaction terms that are not captured by the local greedy split rule.

\begin{proposition}[Counterexample to CART's global optimality]\label{prop:counterexample}
Consider the sparse linear midpoint model with
$d=6, S=\{1,2\}, m=4, \ell=2, \beta_1=\beta_2=1$, and $\sigma_0^2=0$.
Then for each integer $B\ge 15$, the terminal law induced by $\pi^{\rm gr}$ is not a
local minimizer, and thus not a global minimizer of $\mathcal J_{\ell,B}^{\Phi,\Psi}(\pi)$ over the terminal laws
induced by admissible informative-respecting policies.
\end{proposition}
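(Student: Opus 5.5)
The plan is to apply Theorem~\ref{thm:greedy-bellman-certificate} directly, with $(\Phi,\Psi)$ read off from the leading terms of \eqref{eq:rf-mse}. With $\sigma_0^2=0$ and $\beta_1=\beta_2=1$, the cross-tree variance and noise terms vanish. The common factor $(1+2^\ell/n_0)$ is a positive constant, so I drop it. This leaves the terminal-law objective \eqref{eq:terminal-law-objective} with
$$
\Phi(\mathbf n)=\sum_{j\in S}4^{-n_j},\qquad \Psi(\mathbf n,\mathbf n')=\sum_{j\in S}4^{-\max\{n_j,n'_j\}}.
$$
Since $\Psi$ is symmetric, the marginal terminal cost \eqref{eq:marginal-terminal-cost} becomes $\Gamma(\mathbf n)=\frac1B\Phi(\mathbf n)+\frac{2(B-1)}{B}\E_{\nu^{\rm gr}}[\Psi(\mathbf n,N_\ell')]$. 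Because $\ell=2$, I will place the certificate at the last step $t=1$. There $\mathcal V_{2}^{\Gamma}=\Gamma$, so no backward recursion is needed and \eqref{eq:002} reduces to a direct comparison of $\Gamma$ at two terminal states.

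Next I would compute the greedy terminal law. With $d=6$, $s=2$, $m=4$, the number $K$ of informative coordinates in the mask has $\P(K=0)=1/15$, $\P(K=1)=8/15$ and $\P(K=2)=6/15$. By \eqref{eq:pi-gr-explicit}:
\begin{itemize}
\item at depth $1$, $J_1=1$ with probability $7/15$, $J_1=2$ with probability $7/15$, and $J_1\in S^c$ with probability $1/15$;
\item from state $e_1$, greedy picks coordinate $1$ only when the mask contains $1$ but not $2$, which has probability $4/15$; on a $K=2$ mask it picks the under-split coordinate $2$;
\item from state $e_2$, it picks $1$ with probability $10/15$;
\item from a noninformative state, it picks $1$ with probability $7/15$.
\end{itemize}
Hence the marginal law of $N_{2,1}$, which by symmetry equals that of $N_{2,2}$, is $p_2=28/225$, $p_1=154/225$, $p_0=43/225$. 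Setting $f(a):=\E[4^{-\max\{a,N_{2,1}\}}]$, I get $f(0)=1332/3600$, $f(1)=816/3600$ and $f(2)=225/3600$.

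For the certificate, take the event $E=\{J_1=1,\ \{1,2\}\subseteq U_2\}$. It lies in $\mathcal H_1\vee\sigma(U_2)$ and has probability $\frac{7}{15}\cdot\frac{6}{15}>0$. On $E$, greedy selects $j=2$ with probability one, reaching $(1,1)$ on $S$. The feasible alternative $k=1$ reaches $(2,0)$. Then
$$
\Gamma(1,1)-\Gamma(2,0)=\frac1B\Bigl(\tfrac12-\tfrac{17}{16}\Bigr)+\frac{2(B-1)}{B}\bigl(2f(1)-f(0)-f(2)\bigr)=\frac1B\Bigl(-\tfrac{9}{16}+\tfrac{B-1}{24}\Bigr),
$$
using $2f(1)-f(0)-f(2)=75/3600=1/48$. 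This is strictly positive exactly when $B-1>27/2$, i.e.\ $B\ge15$. So \eqref{eq:002} holds and Theorem~\ref{thm:greedy-bellman-certificate} gives the claim.

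The main obstacle is justifying that the ``$\asymp$'' in Theorem~\ref{thm:mse} may be replaced by the exact functional above. I would state the proposition as concerning the terminal-law objective built from the bracketed bias functionals of \eqref{eq:rf-mse}, which is the $\mathcal J^{\Phi,\Psi}_{\ell,B}$ of Section~\ref{subsec:terminal-law-design}. This avoids any need to control the constants hidden in $\asymp$. The remaining work is bookkeeping of the greedy law, and I would double-check it by enumerating the masks.
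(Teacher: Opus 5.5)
Your proposal is correct and follows essentially the same route as the paper: the same event $E=\{Z_1=(1,0),\,U_2\cap S=\{1,2\}\}$ at the last decision epoch, the same greedy terminal law, and the same comparison $\Gamma(1,1)-\Gamma(2,0)=(2B-29)/(48B)>0$ for $B\ge 15$. The only difference is that you invoke Theorem~\ref{thm:greedy-bellman-certificate} directly, whereas the paper also writes out the explicit $\varepsilon$-perturbation on $E$ with its exact expansion $\theta_\varepsilon(29-2B)/(48B)+\theta_\varepsilon^2\frac{B-1}{B}\frac{15}{16}$; both are valid.
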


\subsection{MSE bounds for the exploratory benchmark} \label{sec:benchmse}

We now evaluate the exact MSE risk of the exploratory benchmark $\pi_{\ex}$ from Section~\ref{sec:policyfamilies}. Under $\pi_{\ex}$, the informative-time process is drift-neutral and the raw split-count vector $N_{\ell}$ is multinomial with probability vector $\pi = (\pi_1,\cdots, \pi_d)^\top$, where
$$
\pi_j=\frac{q}{s},\quad j\in S,
\qquad
\pi_j=\frac{1-q}{d-s},\quad j\in S^c.
$$
That is, for each depth $\ell$, we have that 
$
N_\ell\sim \mathrm{Multinomial}(\ell,\pi)$. Let
$
N_\ell'$ be an independent copy of $N_\ell$.
Substituting this law into Theorem~\ref{thm:mse} and applying Proposition~\ref{prop:poisson-functionals} yield the following result.

\begin{theorem}[Benchmark MSE] \label{thm:bench-mse}
The depth-$\ell$ exploratory benchmark forest with $B$ trees satisfies that 
\begin{align}
\operatorname{MSE}_{\mathrm{RF}}
\asymp\;
&\frac{1}{B}\Big(\sum_{j\in S}\beta_j^2\Big)
\left(1-\frac{3q}{4s}\right)^\ell
\left(1+\frac{2^\ell}{n_0}\right)
+\frac{1}{B}\sigma_0^2\frac{2^\ell}{n_0}
\notag\\
&+\frac{B-1}{B}\Big(\sum_{j\in S}\beta_j^2\Big)
\left(1-\frac{q}{2s}\right)^{2\ell}
F_{\ell,2^{-2}}
\!\left(
\frac{q}{2s-q}
\right)
\left(1+\frac{2^\ell}{n_0}\right)
\notag\\
&+\frac{B-1}{B}\sigma_0^2\frac{2^\ell}{n_0}
L_{\ell,d,2^{-1/2}}(\pi)
+R_{\mathrm{RF}},
\label{eq:bench-mse}
\end{align}
where $R_{\mathrm{RF}}\lesssim (1-2^{-\ell})^{n_0}$.
\end{theorem}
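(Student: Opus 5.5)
Under $\pi_{\ex}$ the raw count vector satisfies $N_\ell\sim\mathrm{Multinomial}(\ell,\pi)$, and the independent copy $N_\ell'$ has the same law. The plan is therefore to substitute this law into the four functionals of Theorem~\ref{thm:mse} and evaluate each one exactly. Theorem~\ref{thm:mse} is policy-agnostic, so the $\asymp$ relation, the pure-noise term $\sigma_0^2 2^\ell/n_0$ and the remainder $R_{\mathrm{RF}}$ carry over unchanged. What remains is to show that each functional equals, or is of the same order as, the displayed expression.

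\emph{Single-tree bias.} The marginal is $N_{\ell,j}\sim\mathrm{Binomial}(\ell,q/s)$ for $j\in S$. The binomial generating function gives
$$\E[2^{-2N_{\ell,j}}]=(1-\tfrac{q}{s}+\tfrac{q}{4s})^\ell=(1-\tfrac{3q}{4s})^\ell .$$
Summing against $\beta_j^2$ yields the first line of \eqref{eq:bench-mse} exactly.

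\emph{Cross-tree bias.} Again $N_{\ell,j},N_{\ell,j}'$ are i.i.d.\ $\mathrm{Binomial}(\ell,q/s)$, with $r=2^{-2}$ and $\sqrt r=1/2$. Apply Proposition~\ref{prop:poisson-functionals}(i) with $p=q/s$. The prefactor is $(1-p+p/2)^{2\ell}=(1-\tfrac{q}{2s})^{2\ell}$. The argument of $F$ is
$$\frac{p/2}{1-p/2}=\frac{q}{2s-q}.$$
This gives the second line exactly.

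\emph{Cross-tree variance.} This is the step needing a small identity. Since both vectors sum to $\ell$, we have $|a-b|=a+b-2\min\{a,b\}$. Hence
$$\|N_\ell-N_\ell'\|_1=2\ell-2\sum_j\min\{N_{\ell,j},N_{\ell,j}'\},$$
and so
$$2^{-\|N_\ell-N'_\ell\|_1/2}=2^{-\ell}\,2^{\sum_j\min\{N_{\ell,j},N_{\ell,j}'\}}.$$
The sign on the min-sum is positive, so Proposition~\ref{prop:poisson-functionals}(ii) does not apply verbatim, since it is stated for $r\in(0,1)$. I would first try the rewriting $2^{-\ell}\cdot 2^{\sum\min}=r^{\,\ell-\sum\min}$, which has the form $r^\ell\cdot r^{-\sum\min}$. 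This does not match either.

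I would then recheck the derivation of (ii) through the Poisson kernel. The relevant kernel is $\E[\rho^{|N-N'|}]$-type, with $\rho^{|k|}$ having Fourier transform $P_\rho$. The natural identity is
$$\E\bigl[\rho^{\sum_j|N_{\ell,j}-N_{\ell,j}'|}\bigr]=\E\Bigl|\sum_j p_j e^{i\Theta_j}\Bigr|^{2\ell},\qquad \Theta_j\sim\mu_\rho .$$
To prove it, write $\rho^{|k|}=\int e^{ik\theta}\,d\mu_\rho(\theta)$ coordinatewise. Then use
$$\E\bigl[e^{i\sum_j(N_j-N_j')\theta_j}\bigr]=\Bigl|\sum_j p_j e^{i\theta_j}\Bigr|^{2\ell},$$
which is the multinomial characteristic function times its conjugate, and apply Fubini. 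With $\rho=2^{-1/2}=\sqrt r$ for $r=2^{-1/2}$... the parametrization must be aligned with the paper's $\mu_{\sqrt r}$ convention. I take the statement's $L_{\ell,d,2^{-1/2}}(\pi)$ to be $\E[2^{-\|N-N'\|_1/2}]$ under this identity. Thus the third line is an exact equality with $d$ coordinates, since no dimension collapse occurs under $\pi_{\ex}$. Every $\pi_j>0$, so $\pi$ is interior and the $\ell^{-(d-1)/2}$ rate applies.

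Combining the three evaluations with Theorem~\ref{thm:mse} gives \eqref{eq:bench-mse}. The main obstacle is the variance term: reconciling the $\ell^1$-distance functional with the Poisson representation and its $r$-parametrization. If the parametrization fails to match, I would state and prove the $|N-N'|$ version above directly via the Fourier identity, which is self-contained. The bias terms, by contrast, are routine generating-function calculations.
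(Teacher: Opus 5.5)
Your route is the paper's. Its proof of Theorem~\ref{thm:bench-mse} is the single sentence ``substitute $N_\ell\sim\mathrm{Multinomial}(\ell,\pi)$ into Theorem~\ref{thm:mse} and apply Proposition~\ref{prop:poisson-functionals}'', and your two bias computations carry this out exactly.

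Your hesitation on the variance term is justified: Proposition~\ref{prop:poisson-functionals}(ii) as printed has its base inverted. Its proof rewrites $\E[r^{\sum_j\min\{N_j,N_j'\}}]$ as $r^\ell\,\E[(\sqrt r)^{-\|N-N'\|_1}]$. It then evaluates $\E[(\sqrt r)^{+\|N-N'\|_1}]$ via Lemma~\ref{lem:l1-poisson}. The correct statement is $\E[(\sqrt r)^{\|N-N'\|_1}]=L_{\ell,d,r}(\bp)$, equivalently $\E[r^{-\sum_j\min\{N_j,N_j'\}}]=r^{-\ell}L_{\ell,d,r}(\bp)$. Already for $\ell=1$, $d=2$, $\bp=(1/2,1/2)$, one has $\E[r^{\sum_j\min\{N_j,N_j'\}}]=(1+r)/2$ but $r\,L_{1,2,r}(\bp)=r(1+r)/2$. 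Your self-contained Fourier identity is exactly the corrected version, so use it rather than citing the proposition.

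The gap is the parameter you leave unresolved, and your claim that the third line is an exact equality is wrong. Because \eqref{eq:Lldr} draws $\Theta_j\sim\mu_{\sqrt r}$, your choice $\rho=2^{-1/2}=\sqrt r$ forces $r=1/2$, not $r=2^{-1/2}$. You therefore get exactly $\E[2^{-\|N_\ell-N_\ell'\|_1/2}]=L_{\ell,d,1/2}(\pi)$, whereas $L_{\ell,d,2^{-1/2}}(\pi)=\E[2^{-\|N_\ell-N_\ell'\|_1/4}]$. Reinterpreting the statement's subscript is not legitimate.

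The missing sentence is this. The Laplace part of Proposition~\ref{prop:poisson-functionals}(ii) concerns the integral \eqref{eq:Lldr} itself, so it is unaffected by the faulty identity, and it holds for every fixed $r\in(0,1)$. It gives $L_{\ell,d,1/2}(\pi)\asymp\ell^{-(d-1)/2}\asymp L_{\ell,d,2^{-1/2}}(\pi)$. Hence the two are interchangeable inside $\asymp$ and \eqref{eq:bench-mse} follows. The paper's Theorem~\ref{thm:cart-mse} carries the same subscript mismatch.

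Finally, ``every $\pi_j>0$'' requires $q<1$, i.e.\ $m\le d-s$. If $m>d-s$, then $q=1$ and the noninformative entries of $\pi$ vanish. Both functionals then reduce to the $s$ informative coordinates and remain comparable to each other, so the display still holds. The rate, however, becomes $\ell^{-(s-1)/2}$ rather than $\ell^{-(d-1)/2}$.
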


Theorem~\ref{thm:bench-mse} above gives the matching risk evaluation for the drift-neutral exploratory benchmark, and is the statistical counterpart of Proposition~\ref{prop:benchmark}. Together, Theorems~\ref{thm:cart-mse} and~\ref{thm:bench-mse} translate the policy-level comparison in Sections~\ref{sec:results} and~\ref{sec:policyfamilies} into MSE terms; see also Table~\ref{tab:comparison}. Population CART and the exploratory benchmark share the same informative opportunity rate $q$ and thus the same first-order split-allocation vector $\pi(\gamma)$. Their \textit{difference} is \textit{second-order}: greedy population CART stabilizes the informative block, whereas the benchmark leaves it in a diffusive multinomial regime. Such distinction propagates directly into the MSE formulas. Since the exploratory benchmark has no stabilizing drift, the informative block is not compressed, and the cross-tree overlap term retains the full $d$-dimensional multinomial structure $L_{\ell,d,2^{-1/2}}(\pi)$. In contrast, greedy population CART compresses exponentially the informative fluctuations around their balanced configuration, so the informative block behaves as a single aggregated coordinate in the overlap functional and the variance proxy depends on the reduced $(d-s+1)$-dimensional structure $L_{\ell,d-s+1,2^{-1/2}}(\pi^{\pop})$. Consequently, although the two policies have the same first-order allocation limit, they induce different variance geometries. The population CART accelerates bias reduction and reduces the effective dimension of the overlap functional through stabilization, whereas the exploratory benchmark preserves more cross-tree \textit{decorrelation} since its informative block remains diffusive and the full $d$-dimensional overlap structure survives.

\begin{table}[ht]
\TABLE
{Structural comparison of count dynamics and risk scaling under population CART and exploratory benchmark.
\label{tab:comparison}}
{
\begin{tabular}{
>{\raggedright\arraybackslash}p{0.29\linewidth}
>{\raggedright\arraybackslash}p{0.31\linewidth}
>{\raggedright\arraybackslash}p{0.31\linewidth}}
\hline
\up
& Population CART
& Exploratory benchmark \\
\hline
\up
First-order limit
& $N_t/t\to \pi(\gamma)$
& Same limit $\pi(\gamma)$ \\
Imbalance drift
& $\E[\Delta_{n,J_{n+1}}\mid\F_n]\le -c_\star W_n$
& $\E[\Delta_{n,J_{n+1}}\mid\F_n]=0$ \\
Exponential order of $W_n$
& $\exists \lambda>0:\sup_n \E e^{\lambda W_n}<\infty$
& $\forall \lambda>0:\sup_n \E e^{\lambda W_n}=\infty$ \\
Single-tree bias
& $\bigl(1-q+q4^{-1/s}\bigr)^\ell$
& $\left(1-(3q)/(4s)\right)^\ell$ \\
Cross-tree bias
& $\asymp \bigl(1-q+q2^{-1/s}\bigr)^{2\ell}\ell^{-1/2}$
& $\asymp \left(1-q/(2s)\right)^{2\ell}\ell^{-1/2}$ \\
Cross-tree variance
& $\asymp 2^\ell \ell^{-(d-s)/2}/n_0$
& $\asymp 2^\ell \ell^{-(d-1)/2}/n_0\down$ \\
\hline
\end{tabular}
}
{}
\end{table}

At the policy level, policies with larger contraction coefficient $\kappa$ sacrifice some cross-tree diversity in exchange for faster informative stabilization and better bias decay. Policies with smaller $\kappa$ preserve more diversity but do not benefit from the same second-order compression. In the current feature-subsampled setting, practical design must consider \textit{both levers} $q$ and $\kappa$.

\section{Design map and regime guidance} \label{sec:practical}

The \textit{mismatch} between the local and global objectives showed in Section~\ref{sec:stat} turns randomized tree construction into a \textit{design problem}. The theory we have developed identifies two control levers: the informative opportunity rate $q$ and the stabilization strength $\kappa$. We will employ these two levers to organize regime-dependent guidance for choosing and tuning randomized tree policies.

\subsection{Two control levers and their practical implication} \label{sec:practicallevers}

The statistical comparison in Section~\ref{sec:stat} reveals that the two levers $q$ and $\kappa$ enter at different orders. The practical implication is not that one should always prefer more greed or always prefer more exploration. Instead, randomized tree growth should be organized in a \textit{two-lever space} indexed by $(q,\kappa)$; see Table~\ref{tab:regimemap}. 
This section will translate this design map into practical guidance.

\begin{table}[ht]
\TABLE
{Practical regime map suggested by the comparison in
Section~\ref{sec:stat}.
\label{tab:regimemap}}
{
\footnotesize
\begin{tabular}{
>{\raggedright\arraybackslash}p{0.2\linewidth}
>{\raggedright\arraybackslash}p{0.22\linewidth}
>{\raggedright\arraybackslash}p{0.22\linewidth}
>{\raggedright\arraybackslash}p{0.3\linewidth}}
\hline
\up
Regime
& Dominant concern
& Theory-suggested direction
& Theory-based rationale \\
\hline
\up
Bias-dominated / strong-signal
& Approximation error dominates
& Larger $\kappa$ once $q$ is adequate
& Greedy stabilization accelerates informative compression and improves the bias terms. \\
Variance-dominated / low-signal
& Cross-tree diversity dominates
& Reduce $\kappa$ while keeping $q$ adequate
& More exploratory policies retain more diffusive informative behavior and can improve the variance term. \\
Sparse high-dimensional / weak exposure
& Informative opportunities are too rare
& Increase $q$ first
& If informative variables rarely enter the candidate set, exploitation cannot act often enough. \\
Uncertain or mixed regime
& Relative importance of bias and variance is unclear
& Tune both levers jointly
& Intermediate policies interpolate between greedy compression and exploratory decorrelation. \down\\
\hline
\end{tabular}
}
{}
\end{table}

When approximation is the main difficulty, our theory points toward larger $\kappa$ for better bias reduction. Once informative opportunities are already sufficiently frequent (sufficiently large $q$), a more CART-like within-mask policy is favored. When the main challenge is excessive cross-tree similarity, our theory points in the opposite direction by sacrificing some greediness to improve total risk even though the bias terms worsen. Our theory thus suggests that the empirical success of random forests should not be understood only through decorrelation by feature subsampling. It should instead be understood through the \textit{interaction} of these two levers. The \textit{$\alpha$-mixture family} preserves the same informative opportunity mechanism and changes only the strength of exploitation on informative time. For this reason, it is the \textit{natural policy family} for practical tuning between the greedy and exploratory extremes.

Many practical random forests variants move both levers simultaneously: they expose fewer features and they inject more randomness into split choice. In contrast, our theory suggests \textit{distinguishing} these operations. 
A simple \textit{tuning workflow} suggested by our theory is:
\begin{enumerate}
 \item[1)] choose a grid for the candidate-feature size, the practical proxy for $q$;
 \item[2)] choose a grid for split-policy softness, the practical proxy for $\kappa$;
 \item[3)] compare performance over that two-dimensional grid;
 \item[4)] inspect whether the preferred region is high-$\kappa$, low-$\kappa$, or intermediate once $q$ is large enough to avoid an opportunity bottleneck.
\end{enumerate}

A natural \textit{extension} is to let exploitation intensity increase with depth: more exploration near the root, where tree-to-tree dependence is especially consequential and large, and more greediness deeper in the tree, where local refinement matters more for smaller bias. The theory in this paper does not yet prove optimality of such a rule, but it does explain why this direction is natural.
Another natural \textit{extension} is to let the amount of greediness depend on the local separation among the top candidate feature impurity decrease. When the best candidate strongly dominates, aggressive exploitation is easier to justify. When several candidates are nearly tied, a softer, less greedy rule may preserve diversity at a relatively small cost in the approximation error. Again, this is a \textit{theory-motivated policy} idea rather than a theorem of the current paper.



\subsection{Implementable score-window split policies}
\label{subsec:empirical-score-window}

The preceding theory motivates mixture policies. Since the
informative set $S$ is unknown in applications, these policies, however, should not be interpreted as
practical algorithms. We now introduce an \textit{implementable analog} that uses only
the empirical CART scores on the realized candidate set. We adopt an honest forest construction. Let $\mathcal D_{\rm split}$ be the sample used to construct the tree partition. For a cell
$C$ and 
candidate coordinate $j$, define the empirical midpoint impurity decrease as 
$$
\widehat G(C,j)
:=
\widehat{\Var}_{\mathcal D_{\rm split}}(Y\mid X\in C)
-
\widehat p_L
\widehat{\Var}_{\mathcal D_{\rm split}}(Y\mid X\in C_{j,L})
-
\widehat p_R
\widehat{\Var}_{\mathcal D_{\rm split}}(Y\mid X\in C_{j,R}),
$$
where $C_{j,L}$ and $C_{j,R}$ are the two children obtained by splitting $C$ at the midpoint
of coordinate $j$, $\widehat p_L,\widehat p_R$ are the empirical child proportions within
$C$, and $\widehat{\Var}$ stands for the empirical variance. If a split violates the minimum leaf-size requirement, we set $\widehat G(C,j)=-\infty$.

We fix a window parameter $w\in[0,\infty)$. Given a realized candidate set $u$, denote by 
$
\widehat G_{\max}(C,u):=\max_{k\in u}\widehat G(C,k).
$
Define the empirical score-window action set
$$
\widehat A_w(C,u)
:=
\begin{cases}
\left\{
j\in u:
\widehat G(C,j)\ge 2^{-2w}\widehat G_{\max}(C,u)
\right\},
& \widehat G_{\max}(C,u)>0,\\[0.5em]
u,
& \widehat G_{\max}(C,u)\le 0.
\end{cases}
$$
The empirical score-window policy is given by 
$$
\widehat\pi^{\rm sw}_w(j\mid C,u,\mathcal D_{\rm split})
=
\frac{\mathbf 1\{j\in \widehat A_w(C,u)\}}
{|\widehat A_w(C,u)|},
\qquad j\in u.
\label{eq:empirical-score-window-policy}
$$
Hence, the case of $w=0$ recovers greedy empirical CART with random tie-breaking, while the case of $w = \infty$
corresponds to the exploration benchmark. The rule employs only the
realized candidate set and empirical split scores, and thus is practically implementable.


Under the sparse linear midpoint model, whenever $u\cap S\neq\varnothing$, policy $\pi^{\mathrm{sw}}$ using population-level quantities excludes the noninformative coordinates and satisfies that 
\begin{equation*}
A_w(C,u)
=
\Bigl\{
j\in u\cap S:
N_j-\theta_j
\le
\min_{k\in u\cap S}(N_k-\theta_k)+w
\Bigr\}.
\end{equation*}
This is exactly a randomized window around the
smallest shifted-count rule. 

We next examine whether the \textit{two-lever design message} persists for the empirical rule $\hat\pi^{\mathrm{sw}}_w$
 \eqref{eq:empirical-score-window-policy}. 
In the simulations, we consider a sparse linear midpoint model with ambient dimensionality 
$d=100$ and 
sparsity level $s=5$, where only the first five coordinates are informative and all nonzero coefficients are set equal to $1$.
Denote by $\boldsymbol{\beta} = (\beta_1,\ldots, \beta_s)^\top = \mathbf 1_s$ the coefficient vector of informative features.
The partitioning sample
$\mathcal D_{\rm split}$
and estimation sample
$\mathcal D_{\rm est}$
each have size $n_0 = 500$. Trees are grown using midpoint splits up to depth $\ell = 5$, with minimum leaf size equal to $5$. At each node, the candidate-set size is
$
m=\lceil \gamma d\rceil
$,
and the split coordinate is sampled according to the empirical score-window policy
$
\widehat\pi_w^{\rm sw}
$.
Each forest contains $B=200$ trees generated from the same sample split, while tree-level randomness is induced through independent feature subsampling and within-window randomized split selection. 
We consider both low- and high-SNR regimes, where
the signal-to-noise ratio $\mathrm{SNR}=\|\boldsymbol{\beta}\|_2/\sigma_0$
is controlled through the noise standard deviation $\sigma$. 
Prediction performance is evaluated on an additional independent test sample of size $n' = 100$. For each pair $(\gamma,w)$, we compute the empirical prediction error
$
\widehat{\mathrm{MSE}}(\gamma,w)
$
and report the Monte Carlo averages over repeated simulation runs.

\begin{figure}[ht]
\FIGURE
{\includegraphics[width=0.95\textwidth]{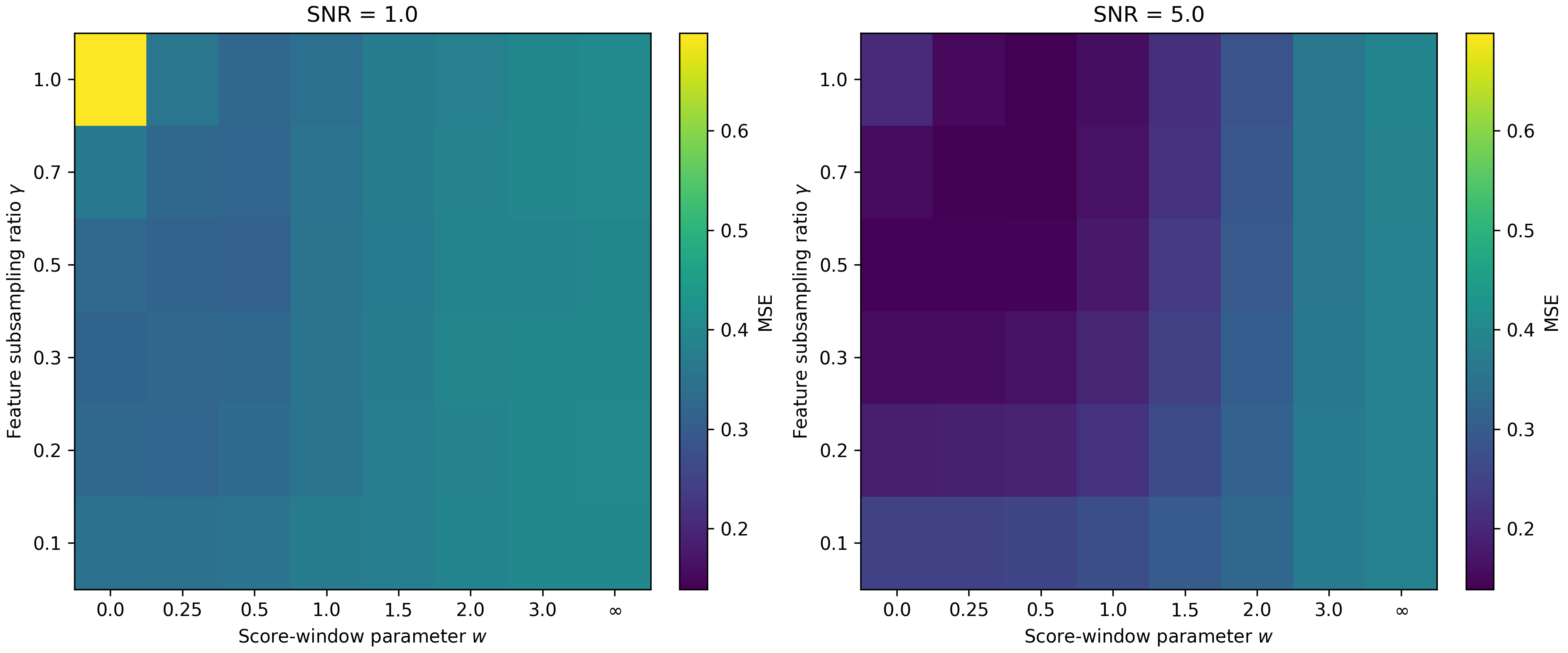}}
{Heatmaps of the empirical prediction error
$\widehat{\mathrm{MSE}}(\gamma,w)$
for honest forests under low- and high-SNR regimes.
Darker regions indicate smaller prediction error.
\label{fig:empirical-mse-heatmap}}
{}
\end{figure}

Figure~\ref{fig:empirical-mse-heatmap} depicts the heatmaps of
$
\widehat{\mathrm{MSE}}(\gamma,w)
$
under low- and high-SNR regimes. The horizontal axis varies the score-window parameter $w$, while the vertical axis varies the feature subsampling ratio $\gamma$. 
Color intensity represents the Monte Carlo average prediction MSE evaluated on an independent test sample. 
A clear regime-dependent pattern emerges from Figure~\ref{fig:empirical-mse-heatmap}.
When the feature subsampling level $\gamma$ is too small, prediction performance deteriorates uniformly across all score-window levels $w$, 
indicating that insufficient informative exposure cannot be compensated by more aggressive exploitation. 
This supports the theoretical interpretation of $\gamma$ as the \textit{primary} opportunity-generation lever. 
Once $\gamma$ becomes sufficiently large, the effect of the split-softness parameter $w$ becomes \textit{visible}. In the high-SNR regime, smaller score windows are consistently preferred, which is consistent with the bias-dominated picture predicted by the theory: 
stronger exploitation improves the informative compression and reduces the approximation error. 
\textit{In contrast}, under lower SNR, intermediate score windows become competitive and in some regions, outperform the fully greedy rule, suggesting that softer split randomization can improve overall prediction risk by mitigating excessive cross-tree dependence. 
The preferred region therefore does \textit{not} concentrate at either extreme, but instead forms an \textit{interior regime} balancing informative exploitation and exploratory decorrelation, consistent with the two-lever design map developed in Section~\ref{sec:practicallevers}.


These practical recommendations should be read at the level of \textit{mechanism}, not as universal defaults for each forest implementation. The current proofs are carried out under a sparse regression model and midpoint trees, and the exact Section~\ref{sec:stat} rates are model-specific. What the theory contributes more broadly is a structured language: the statistical consequences of the two levers can \textit{differ} at the first and second orders.

\section{Discussions} \label{new.sec.discu}

We have in this paper developed a stochastic-control perspective, named CART-ROSA, for understanding the mechanism behind the
feature-subsampled population CART random forests \citep{breiman2001random}.
Instead of approaching the problem from the algorithmic/statistical perspective, we formulate the tree growing process as a finite-horizon controlled stochastic system with random
feasible action sets.
Within this formulation, feature subsampling generates stochastic opportunity
sets, CART splitting acts as an adaptive masked-action policy, the informative
split-count vector evolves as a controlled state process, and the forest MSE
becomes a terminal-law objective induced by the resulting recursive dynamics.
Such formulation de-blackboxes a core mechanism of CART random forests by
separating two design levers:
the informative-opportunity generation mechanism induced by feature
subsampling, and the stabilization mechanism induced by the within-mask split
allocation policy.

Under a sparse regression midpoint model with uniformly distributed covariates, the induced split-count dynamics admit a
self-similar controlled stochastic structure, allowing the interaction between
greedy splitting and feature subsampling to be analyzed \textit{explicitly} through
stochastic approximation techniques.
The resulting analysis reveals a qualitative distinction between greedy CART
policy and pure exploration policy.
 Although sharing the same first-order equilibrium split
 allocation, their induced fluctuation geometries are fundamentally different.
 Pure exploration policy generates a diffusion-like fluctuation regime, whereas greedy population CART induces a
 self-stabilizing mechanism through a negative-drift effect along with an
 associated exponential compression phenomenon.
 Viewed through the stochastic-control formulation, greedy CART is therefore
 not just a local impurity-reduction rule; it also acts as a stabilizing
 allocation policy for the recursive split-allocation dynamics.

Our theory also suggests that greedy CART and exploration policies can have substantially different terminal geometries and
cross-tree interactions.
This reveals an important distinction between local split stabilization and
global ensemble performance:
a policy that stabilizes the recursive state dynamics of individual trees need
not optimize the terminal ensemble objective induced by the forest.
From such perspective, feature subsampling plays a role \textit{beyond} regularization
or decorrelation alone.
The subsampling mechanism controls simultaneously the informative-opportunity
arrival rate and the stabilization strength of the induced recursive
dynamics, leading naturally to an exploration--exploitation interpretation of
random forests construction.


The scope of our theory is deliberately limited. The \textit{sharpest} results are established for population CART under a sparse linear midpoint model. This model is not intended to capture
all features of Breiman's empirical random forests. Rather, it isolates a core
mechanism: the interaction between random feature opportunities, adaptive split
allocation, terminal cell geometry, and ensemble risk. The empirical CART setting
adds sampling noise, data-dependent split thresholds, finite-sample constraints,
and possible model misspecification. These features can alter both the state
dynamics and the induced terminal law, and they remain outside the current main theorem
package and left for future investigations.


Another important direction is to extend the current framework beyond the
 sparse regression setting with uniformly distributed covariates.
The midpoint model and uniformly distributed covariates help isolate the interaction between random opportunity
generation and adaptive split allocation in a particularly transparent form.
It would be interesting to understand whether similar stabilization and
terminal-law phenomena persist under broader model and covariate distribution settings, where the induced state dynamics and feasible-action
geometry become substantially more complicated.

\bibliographystyle{informs2014}
\bibliography{ref, RF}

@article{tan2024statcomp,
      title={Statistical-Computational Trade-offs for Greedy Recursive Partitioning Estimators}, 
      author={Yan Shuo Tan and Jason M. Klusowski and Krishnakumar Balasubramanian},
      journal = {Manuscript},
      year={2024},
      eprint={2411.04394},
      archivePrefix={arXiv},
      primaryClass={stat.ML},
      url={https://arxiv.org/abs/2411.04394},
      note = {arXiv:2411.04394}
}

@article{le2023survey,
  title={A Survey on the Impact of Hyperparameters on Random Forest Performance using Multiple Accelerometer Datasets},
  author={Le, Hong-Lam and Le, Thanh-Tuoi and Tran, Doan-Hieu and Van Chau, Dinh and others},
  journal={International Journal for Computers \& Their Applications},
  volume={30},
  number={4},
  pages={351--361},
  year={2023}
}

@inproceedings{bernard2009influence,
  title={Influence of hyperparameters on random forest accuracy},
  author={Bernard, Simon and Heutte, Laurent and Adam, S{\'e}bastien},
  booktitle={Multiple Classifier Systems: 8th International Workshop, MCS 2009, Reykjavik, Iceland, June 10-12, 2009. Proceedings 8},
  pages={171--180},
  year={2009},
  organization={Springer}
}

@phdthesis{zhou2022random,
  title={Random Forests and Regularization},
  author={Zhou, Siyu},
  year={2022},
  school={University of Pittsburgh}
}

@Article{zhou2023trees,
  author    = {Zhou, Siyu and Mentch, Lucas},
  journal   = {Statistical Analysis and Data Mining: The ASA Data Science Journal},
  title     = {Trees, forests, chickens, and eggs: when and why to prune trees in a random forest},
  year      = {2023},
  number    = {1},
  pages     = {45--64},
  volume    = {16},
  publisher = {Wiley Online Library},
}

@article{Buehlmann2002,
	author = {Peter B{\"u}hlmann and Bin Yu},
	doi = {10.1214/aos/1031689014},
	journal = {The Annals of Statistics},
	number = {4},
	pages = {927--961},
	publisher = {Institute of Mathematical Statistics},
	title = {{Analyzing bagging}},
	url = {https://doi.org/10.1214/aos/1031689014},
	volume = {30},
	year = {2002}}

@article{Scornet2015,
	author = {Erwan Scornet and G{\'e}rard Biau and Jean-Philippe Vert},
	doi = {10.1214/15-AOS1321},
	journal = {The Annals of Statistics},
	number = {4},
	pages = {1716--1741},
	publisher = {Institute of Mathematical Statistics},
	title = {{Consistency of random forests}},
	url = {https://doi.org/10.1214/15-AOS1321},
	volume = {43},
	year = {2015}}

@article{Athey2019,
	author = {Susan Athey and Julie Tibshirani and Stefan Wager},
	doi = {10.1214/18-AOS1709},
	journal = {The Annals of Statistics},
	number = {2},
	pages = {1148--1178},
	publisher = {Institute of Mathematical Statistics},
	title = {{Generalized random forests}},
	url = {https://doi.org/10.1214/18-AOS1709},
	volume = {47},
	year = {2019}}

@article{Mourtada2020,
	author = {Jaouad Mourtada and St{\'e}phane Ga{\"\i}ffas and Erwan Scornet},
	doi = {10.1214/19-AOS1886},
	journal = {The Annals of Statistics},
	number = {4},
	pages = {2253--2276},
	publisher = {Institute of Mathematical Statistics},
	title = {{Minimax optimal rates for Mondrian trees and forests}},
	url = {https://doi.org/10.1214/19-AOS1886},
	volume = {48},
	year = {2020}}

@article{cattaneo2024inferencemondrianrandomforests,
	archiveprefix = {arXiv},
	author = {Matias D. Cattaneo and Jason M. Klusowski and William G. Underwood},
	eprint = {2310.09702},
        eprinttype = {arxiv},
	primaryclass = {math.ST},
	title = {Inference with Mondrian Random Forests},
    journal = {Manuscript},
	url = {https://arxiv.org/abs/2310.09702},
	year = {2024},
        note = {arXiv:2310.09702}}

@article{biau12a,
	author = {G{{\'e}}rard Biau},
	journal = {Journal of Machine Learning Research},
	number = {38},
	pages = {1063--1095},
	title = {Analysis of a Random Forests Model},
	url = {http://jmlr.org/papers/v13/biau12a.html},
	volume = {13},
	year = {2012}}

@inproceedings{syrgkanis20a,
	author = {Syrgkanis, Vasilis and Zampetakis, Manolis},
	booktitle = {Proceedings of Thirty Third Conference on Learning Theory},
	editor = {Abernethy, Jacob and Agarwal, Shivani},
	pages = {3453--3454},
	series = {Proceedings of Machine Learning Research},
	title = {Estimation and Inference with Trees and Forests in High Dimensions},
	url = {https://proceedings.mlr.press/v125/syrgkanis20a.html},
	volume = {125},
	year = {2020}}

@inproceedings{klusowski21b,
	author = {Klusowski, Jason},
	booktitle = {Proceedings of The 24th International Conference on Artificial Intelligence and Statistics},
	editor = {Banerjee, Arindam and Fukumizu, Kenji},
	pages = {757--765},
	series = {Proceedings of Machine Learning Research},
	title = {Sharp Analysis of a Simple Model for Random Forests},
	url = {https://proceedings.mlr.press/v130/klusowski21b.html},
	volume = {130},
	year = {2021}}

@article{Lin2006,
	author = {Yi Lin and Yongho Jeon},
	doi = {10.1198/016214505000001230},
	eprint = {https://doi.org/10.1198/016214505000001230},
	journal = {Journal of the American Statistical Association},
	number = {474},
	pages = {578--590},
	publisher = {ASA Website},
	title = {Random Forests and Adaptive Nearest Neighbors},
	url = {https://doi.org/10.1198/016214505000001230},
	volume = {101},
	year = {2006}}

@article{Klusowski2024,
	author = {Jason M. Klusowski and Peter M. Tian},
	doi = {10.1080/01621459.2022.2126782},
	eprint = {https://doi.org/10.1080/01621459.2022.2126782},
	journal = {Journal of the American Statistical Association},
	number = {545},
	pages = {525--537},
	publisher = {ASA Website},
	title = {Large Scale Prediction with Decision Trees},
	url = {https://doi.org/10.1080/01621459.2022.2126782},
	volume = {119},
	year = {2024}}

@article{Wager2018,
	author = {Stefan Wager and Susan Athey},
	doi = {10.1080/01621459.2017.1319839},
	eprint = {https://doi.org/10.1080/01621459.2017.1319839},
	journal = {Journal of the American Statistical Association},
	number = {523},
	pages = {1228--1242},
	publisher = {ASA Website},
	title = {Estimation and Inference of Heterogeneous Treatment Effects using Random Forests},
	url = {https://doi.org/10.1080/01621459.2017.1319839},
	volume = {113},
	year = {2018}}

@article{Scornet2016,
	author = {Erwan Scornet},
	doi = {https://doi.org/10.1016/j.jmva.2015.06.009},
	issn = {0047-259X},
	journal = {Journal of Multivariate Analysis},
	pages = {72--83},
	title = {On the asymptotics of random forests},
	url = {https://www.sciencedirect.com/science/article/pii/S0047259X15001542},
	volume = {146},
	year = {2016}}

@article{Probst2018,
	author = {Philipp Probst and Anne-Laure Boulesteix},
	journal = {Journal of Machine Learning Research},
	number = {181},
	pages = {1--18},
	title = {To Tune or Not to Tune the Number of Trees in Random Forest},
	url = {http://jmlr.org/papers/v18/17-269.html},
	volume = {18},
	year = {2018}}

@article{delgado14a,
	author = {Manuel Fern{{\'a}}ndez-Delgado and Eva Cernadas and Sen{{\'e}}n Barro and Dinani Amorim},
	journal = {Journal of Machine Learning Research},
	number = {90},
	pages = {3133--3181},
	title = {Do we Need Hundreds of Classifiers to Solve Real World Classification Problems?},
	url = {http://jmlr.org/papers/v15/delgado14a.html},
	volume = {15},
	year = {2014}}

@article{liu2024randomizationreducebiasvariance,
	archiveprefix = {arXiv},
	author = {Brian Liu and Rahul Mazumder},
	eprint = {2402.12668},
        eprinttype= {arxiv},
    journal={Manuscript},
	primaryclass = {stat.ML},
	title = {Randomization Can Reduce Both Bias and Variance: A Case Study in Random Forests},
	url = {https://arxiv.org/abs/2402.12668},
	year = {2024},
        note = {arXiv:2402.12668}}

@article{curth2024randomforestsworkunderstanding,
	archiveprefix = {arXiv},
	author = {Alicia Curth and Alan Jeffares and Mihaela van der Schaar},
	eprint = {2402.01502},
        eprinttype= {arxiv},
	primaryclass = {stat.ML},
    journal = {Manuscript},
	title = {Why do Random Forests Work? Understanding Tree Ensembles as Self-Regularizing Adaptive Smoothers},
	url = {https://arxiv.org/abs/2402.01502},
	year = {2024},
        note = {arXiv: 2402.01502}}

@article{MentchZhou2020,
  author  = {Lucas Mentch and Siyu Zhou},
  title   = {Randomization as Regularization:  A Degrees of Freedom Explanation for Random Forest Success},
  journal = {Journal of Machine Learning Research},
  year    = {2020},
  volume  = {21},
  number  = {171},
  pages   = {1--36},
  url     = {http://jmlr.org/papers/v21/19-905.html}
}

@article{MFL2024,
  title={Exogenous randomness empowering random forests},
  author={Mei, Tianxing and Fan, Yingying and Lv, Jinchi},
  journal = {Manuscript},
  url = {https://arxiv.org/abs/2411.07554},
  note = {arXiv:2411.07554},
  year={2024}
}

@book{Butler2007,
  author    = {Ronald W. Butler},
  title     = {Saddlepoint Approximations with Applications},
  publisher = {Cambridge University Press},
  year      = {2007},
  address   = {Cambridge},
  series    = {Cambridge Series in Statistical and Probabilistic Mathematics},
  volume    = {22}
}

@incollection{RobbinsSiegmund1971,
  author    = {Robbins, Herbert and Siegmund, David},
  title     = {A convergence theorem for non-negative almost supermartingales and some applications},
  booktitle = {Optimizing Methods in Statistics},
  editor    = {Rustagi, J. S.},
  pages     = {233--257},
  publisher = {Academic Press},
  address   = {New York},
  year      = {1971}
}

@book{marshall2011inequalities,
  author    = {Marshall, A. W. and Olkin, I. and Arnold, B. C.},
  title     = {Inequalities: Theory of Majorization and Its Applications},
  edition   = {2},
  publisher = {Springer},
  address   = {New York},
  year      = {2011}
}

@book{Borkar2023,
  author       = {Borkar, Vivek S.},
  title        = {Stochastic Approximation: A Dynamical Systems Viewpoint},
  series       = {Texts and Readings in Mathematics},
  edition      = {2},
  publisher    = {Springer Singapore},
  year         = {2024},
  doi          = {10.1007/978-981-99-8277-6},
  note         = {Second Edition; Hindustan Book Agency},
}

@article{Fan22AOS,
author = {Chien-Ming Chi and Patrick Vossler and Yingying Fan and Jinchi Lv},
title = {{Asymptotic properties of high-dimensional random forests}},
volume = {50},
journal = {The Annals of Statistics},
number = {6},
publisher = {Institute of Mathematical Statistics},
pages = {3415--3438},
year = {2022},
doi = {10.1214/22-AOS2234},
URL = {https://doi.org/10.1214/22-AOS2234}
}

@article{klusowski2019analyzing,
  title={Analyzing {CART}},
  author={Klusowski, Jason M},
  journal={arXiv preprint arXiv:1906.10086},
  year={2019}
}

@book{kushneryin2003,
  title={Stochastic Approximation and Recursive Algorithms and Applications},
  author={Kushner, Harold J and Yin, G George},
  year={2003},
  publisher={Springer}
}

@article{breiman2001random,
  title={Random forests},
  author={Breiman, Leo},
  journal={Machine Learning},
  volume={45},
  number={1},
  pages={5--32},
  year={2001},
  publisher={Springer}
}

@book{bertsekas2012dynamic,
  title={Dynamic Programming and Optimal Control: Volume I},
  author={Bertsekas, Dimitri},
  volume={4},
  year={2012},
  publisher={Athena Scientific}
}

@book{powell2007approximate,
  title={Approximate Dynamic Programming: Solving the Curses of Dimensionality},
  author={Powell, Warren B},
  volume={703},
  year={2007},
  publisher={John Wiley \& Sons}
}

\newpage
\setcounter{page}{1}
\setcounter{section}{0}
\renewcommand{\theequation}{A.\arabic{equation}}
\setcounter{equation}{0}
	
\begin{center}{\bf \Large Supplementary Material to ``CART Random Forests as Sequential Allocation over Random Opportunity Sets: A Stochastic-Control Theory of Ensemble Risk''}
		
\bigskip
		
Tianxing Mei, Yingying Fan, Mingming Leng and Jinchi Lv
\smallskip
\end{center}

\noindent This Supplementary Material contains all the proofs of the main results, some key lemmas, and additional technical details.

\begin{APPENDICES}
\section{Technical preparation and results}

\subsection{Definitions of partitioning estimator and random partitioning ensemble}
\label{app:defntreerf}

We formally define the mean squared error (MSE) risk of the greedy CART tree and forest. To do so, let us first introduce some necessary notation. A partitioning estimator is specified by a pair
$(\mathcal P,\mathcal D)$, where $\mathcal P=\{P_j\}$ is a partition of
$C_0 =[0,1]^d$ and
$\mathcal D=\{(y_i,\boldsymbol X_i)\}_{i=1}^{n_{tr}}$ is an independent and identically distributed (i.i.d.) evaluation sample drawn from \eqref{eq:model}.
With such notation, the resulting \textit{partitioning estimator} takes the form
\begin{equation}\label{eq:partest}
\hat{\mu}_{\text{tree}}(\boldsymbol x;\mathcal P,\mathcal D)
=
\frac{\sum_{i=1}^{n_{tr}} y_i \mathbf 1\{\boldsymbol X_i\in \mathcal P_{\boldsymbol x}\}}
{\sum_{i=1}^{n_{tr}} \mathbf 1\{\boldsymbol X_i\in \mathcal P_{\boldsymbol x}\}},
\end{equation}
where $\mathcal P_{\boldsymbol x}$ represents the unique cell of $\mathcal P$
containing $\boldsymbol x$ and we adopt the convention of $0/0=0$.

The partition $\mathcal P$ itself may be random.
As such, let us write $\mathcal P=\mathcal P(\mathcal D,\Theta)$, where randomness through
$\mathcal D$ corresponds to the endogenous, data-driven variability, and $\Theta$ denotes the exogenous randomness independent of $\mathcal D$.
Given independent seeds $\Theta_1,\ldots,\Theta_B$, we can obtain independent
partitions $\mathcal P^{(b)}=\mathcal P(\mathcal D,\Theta_b)$,
$b=1,\ldots,B$.
The corresponding \textit{random partitioning ensemble} estimator is then defined by aggregation
\begin{equation}\label{eq:ensest}
\hat{\mu}_{\text{RF}}(\boldsymbol x;\{\mathcal P^{(b)}\},\mathcal D)
=
\frac{1}{B}\sum_{b=1}^B
\hat{\mu}_{\text{tree}}(\boldsymbol x;\mathcal P^{(b)},\mathcal D).
\end{equation}

\subsection{General marginal-value Bellman equations for terminal-law ensemble design}\label{sec:general-bellman}


In this subsection, we develop the Bellman optimality system for the terminal-law ensemble-design problem \eqref{eq:ensemble-design-problem}. We show that any local optimal terminal law $\nu^\star$ induces a linear terminal-cost control problem with marginal terminal cost $\Gamma_{\nu^\star}^{\Phi,\Psi}$; see \eqref{eq:marginal-terminal-cost} for definition.
Theorem~\ref{thm:marginal-value-bellman} below shows that $\Gamma_{\nu}^{\Phi,\Psi}(\mathbf n)$ is the marginal ensemble terminal cost; that is, for any
$\widetilde\nu\in\mathfrak V_\ell$,
$$
\left.
\frac{d}{d\varepsilon}
\mathcal J_{\ell,B}^{\Phi,\Psi}
\bigl((1-\varepsilon)\nu+\varepsilon\widetilde\nu\bigr)
\right|_{\varepsilon=0+}
=
\int_{\mathcal N_{\ell,d}}
\Gamma_\nu^{\Phi,\Psi}(\mathbf n)\,
d(\widetilde\nu-\nu)(\mathbf n).
$$
Hence, $\Gamma_\nu^{\Phi,\Psi}$ is the marginal system-level cost of placing additional terminal probability mass at state $\mathbf n$. If $\Psi$ is symmetric, it holds that 
$$
\Gamma_{\nu}^{\Phi,\Psi}(\mathbf n)
=
\frac{1}{B}\Phi(\mathbf n)
+
\frac{2(B-1)}{B}
\int_{\mathcal N_{\ell,d}}\Psi(\mathbf n,\mathbf n')\,d\nu(\mathbf n').
$$

Since $\mathcal N_{\ell,d}$ is finite, we identify terminal laws with their probability mass functions. Denote by 
$
\mathcal U_m:=\{u\subseteq[d]: |u|=m\}$ and
$p_m(u):=\binom{d}{m}^{-1}$ for $u\in\mathcal U_m.
$
Recall that $\mathcal A(u)$ represents the admissible action set. 
For any finite-valued terminal cost
$h:\mathcal N_{\ell,d}\to\mathbb R$, 
we define 
the \textit{random-opportunity Bellman recursion}
$$
\mathcal V_\ell^h(\mathbf n):=h(\mathbf n), \qquad
 \mathbf n\in\mathcal N_{\ell,d},
$$
and for $t=0, \ldots, \ell-1$,
\begin{align}
\mathcal V_t^h(\mathbf n)
:=
\sum_{u\in\mathcal U_m}p_m(u)
\min_{j\in\mathcal A(u)}
\mathcal V_{t+1}^h(\mathbf n+e_j),
\qquad \mathbf n\in\mathcal N_{t,d}.
\label{eq:linear-terminal-bellman}
\end{align}

\begin{lemma}
\label{lem:cost-to-go-bellman}
For any $t = 0, \ldots, \ell - 1$, 
and every $N_t=\mathbf n \in \mathcal{N}_{t,d}$, 
it holds that 
$$
\mathcal V_{t}^h(\mathbf n)
=
\inf_{\pi_{(t+1):\ell}}
\mathbb E_\pi
\left[
h(N_\ell)
\mid
N_{t}=\mathbf n
\right],
$$
where the infimum is taken over all admissible nonanticipative split policies from depths $t+2,\ldots,\ell$. 
In particular,  $
\mathcal V_0^h(\mathbf 0)
=
\inf_{\pi\in\Pi_\ell}
\mathbb E_\pi[h(N_\ell)].
$
\end{lemma}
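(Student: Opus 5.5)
This is the standard finite-horizon dynamic-programming verification, proved by backward induction on $t$. The inductive claim is: for every $t\in\{0,\ldots,\ell\}$ and every $\mathbf n\in\mathcal N_{t,d}$, $\mathcal V_t^h(\mathbf n)$ equals the infimum, over admissible nonanticipative policies acting at depths $t+1,\ldots,\ell$, of $\mathbb E_\pi[h(N_\ell)\mid N_t=\mathbf n]$.

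Two structural facts make the argument go through.

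\begin{enumerate}
\item Conditional on $\mathcal H_t$, the future masks $U_{t+1},\ldots,U_\ell$ are i.i.d.\ uniform on $\mathcal U_m$ and independent of $\mathcal H_t$.
\item The transition $N_{r}=N_{r-1}+e_{J_r}$ is deterministic given the action.
\end{enumerate}

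Consequently, the conditional law of $N_\ell$ given $\mathcal H_t$ and a continuation policy depends on the past only through $N_t$ and through how that continuation policy uses the history. Since the policy may use the history, I will fix an arbitrary history $H_t$ with $N_t=\mathbf n$. I then regard the continuation policy, with the realized $H_t$ plugged in, as a policy started from state $\mathbf n$. This shows the infimum does not depend on which history led to $\mathbf n$.

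The base case $t=\ell$ is immediate, since $\mathcal V_\ell^h=h$.

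For the inductive step, assume the claim holds at $t+1$. There are two inequalities to prove.

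\textbf{Lower bound.} For any continuation policy, condition on $U_{t+1}=u$ and on the action $J_{t+1}=j\in\mathcal A(u)$. The remaining policy, evaluated at the extended history, is an admissible continuation from $\mathbf n+e_j$. By the induction hypothesis its conditional expected cost is at least $\mathcal V_{t+1}^h(\mathbf n+e_j)$, which is at least $\min_{j'\in\mathcal A(u)}\mathcal V_{t+1}^h(\mathbf n+e_{j'})$. This holds for any randomized choice of $j$ under $\pi_{t+1}(\cdot\mid H_t,u)$, because an average is at least the minimum. Averaging over $u$ with weights $p_m(u)$, using fact 1, gives $\mathbb E_\pi[h(N_\ell)\mid N_t=\mathbf n]\ge \mathcal V_t^h(\mathbf n)$.

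\textbf{Upper bound.} Construct the Markov policy that, at each depth $r$, in state $\mathbf n'$ with mask $u$, selects (with a fixed tie-break) an element of $\argmin_{j\in\mathcal A(u)}\mathcal V_{r}^h(\mathbf n'+e_j)$. By the same backward induction, the expected terminal cost of this policy from $(t,\mathbf n)$ equals $\mathcal V_t^h(\mathbf n)$ exactly, so the infimum is attained. Here finiteness of $\mathcal N_{\ell,d}$, and of the action sets, guarantees that all minima exist and all expectations are finite sums.

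Two compatibility checks remain. First, the constructed policy must lie in $\Pi_\ell$ and be informative-respecting. This holds because $\mathcal A(u)=u\cap S$ whenever $u\cap S\neq\varnothing$, so the minimizing action is feasible and never selects a noninformative coordinate when an informative one is available. Second, the competing class in the lower bound must also be restricted to actions in $\mathcal A(u)$, by the informative-respecting condition. The main (mild) obstacle is handling history-dependent randomized policies rigorously, namely that conditioning on $(H_t,U_{t+1},J_{t+1})$ leaves an admissible continuation policy; I would handle this by the explicit "plug in the realized history" construction. The final claim is the case $t=0$, $\mathbf n=\mathbf 0$.
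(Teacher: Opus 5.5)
Your proposal is correct and follows essentially the same backward-induction argument as the paper: base case $t=\ell$, condition on the independent uniform mask $U_{t+1}$, minimize the induction-hypothesis value over the finite set $\mathcal A(u)$, then average with weights $p_m(u)$. Your explicit split into lower and upper bounds, the plug-in-the-history treatment of nonanticipative randomized policies, and the check that $\mathcal A(u)$ matches the informative-respecting class make rigorous what the paper compresses into a single step.
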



Lemma~\ref{lem:cost-to-go-bellman} shows that $\mathcal V_t^h$ is the Bellman value function of the corresponding linear terminal-cost control problem. 



\begin{theorem}[Bellman optimality conditions for marginal terminal cost]
\label{thm:marginal-value-bellman}
Fix $\ell$, $B$, $\Phi$, and $\Psi$, and assume that $\Phi$ and
$\Psi$ are finite-valued. Then it holds that 
\begin{enumerate}

\item[(i)] If $\nu^\star\in\mathfrak V_\ell$ is a local minimizer of
$\mathcal J_{\ell,B}^{\Phi,\Psi}$ over $\mathfrak V_\ell$, we have that 
\begin{align}\label{eq:first-order-terminal-law-condition}
\nu^\star
\in
\arg\min_{\nu\in\mathfrak V_\ell}
\int_{\mathcal N_{\ell,d}}
\Gamma_{\nu^\star}^{\Phi,\Psi}(\mathbf n)\,d\nu(\mathbf n).
\end{align}

\item[(ii)] Let $\pi^\star\in\Pi_\ell$ satisfy that $\nu_{\pi^\star,\ell}=\nu^\star$, and assume that $\nu^\star$ is a local
minimizer of $\mathcal J_{\ell,B}^{\Phi,\Psi}$ over $\mathfrak V_\ell$.
Then $\pi^\star$ minimizes of the linear terminal-cost objective
$$
\mathbb E_{\pi^{\star}}
\big[
\Gamma_{\nu^\star}^{\Phi,\Psi}(N_\ell)
\big] \leq 
\mathbb E_\pi
\big[
\Gamma_{\nu^\star}^{\Phi,\Psi}(N_\ell)
\big], \qquad \forall \pi \in \Pi_{\ell}.
$$
Consequently, we have that for each $t=0,\ldots,\ell-1$,
\begin{align}
\operatorname{supp}
\pi^\star_{t+1}(\cdot\mid H_t,U_{t+1})
\subseteq
\arg\min_{j\in\mathcal A(U_{t+1})}
\mathcal V_{t+1}^{\Gamma_{\nu^\star}^{\Phi,\Psi}}(N_t+e_j),
\qquad
\P_{\pi^\star}\text{-a.s.}
\label{eq:marginal-bellman-support}
\end{align}
Here, $\operatorname{supp} \mu $ denotes the support of a probability measure $\mu$. 

\item[(iii)] Conversely, assume that $\pi^\star\in\Pi_\ell$ induces
$\nu^\star=\nu_{\pi^\star,\ell}$ and satisfies
\eqref{eq:marginal-bellman-support} for all $t=0,\ldots,\ell-1$. If
$\mathcal J_{\ell,B}^{\Phi,\Psi}$ is convex on $\mathfrak V_\ell$, we have that 
$$
\mathcal J_{\ell,B}^{\Phi,\Psi}(\nu^\star)
=
\inf_{\nu\in\mathfrak V_\ell}
\mathcal J_{\ell,B}^{\Phi,\Psi}(\nu),
\text{ and }
\mathcal J_{\ell,B}^{\Phi,\Psi}(\pi^\star)
=
\inf_{\pi\in\Pi_\ell}
\mathcal J_{\ell,B}^{\Phi,\Psi}(\pi).
$$
\end{enumerate}
Moreover, the convexity condition on $\mathcal J_{\ell,B}^{\Phi,\Psi}$ holds whenever the
symmetrized kernel
$
Q^\Psi(\mathbf n,\mathbf n')
:=
\frac{1}{2}\{\Psi(\mathbf n,\mathbf n')+\Psi(\mathbf n',\mathbf n)\}
$
satisfies that 
$$
\sum_{\mathbf n,\mathbf n'\in\mathcal N_{\ell,d}}
a(\mathbf n)Q^\Psi(\mathbf n,\mathbf n')a(\mathbf n')
\ge 0
$$
for each signed vector
$
a\in
\operatorname{span}
\{\nu-\widetilde\nu:\nu,\widetilde\nu\in\mathfrak V_\ell\}.
$
\end{theorem}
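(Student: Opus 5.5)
Parts (i) and (ii) are first-order conditions for a quadratic functional over the convex set $\mathfrak V_\ell$; part (iii) is the converse under convexity.

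\textbf{Convexity of $\mathfrak V_\ell$.} First I will record that $\mathfrak V_\ell$ is convex. Given policies $\pi,\widetilde\pi$ and $\varepsilon\in[0,1]$, draw an independent Bernoulli$(\varepsilon)$ coin at the root. Run $\widetilde\pi$ if it shows one and $\pi$ otherwise. This randomized policy is nonanticipative (enlarge $\mathcal H_0$ by the coin, or equivalently use the behavioural-strategy mixture realizing it via Kuhn's theorem). It is also informative-respecting, and it induces the terminal law $(1-\varepsilon)\nu+\varepsilon\widetilde\nu$.

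\textbf{Directional derivative.} Next I compute the derivative along $\nu_\varepsilon=\nu+\varepsilon(\widetilde\nu-\nu)$. Since $\mathcal J$ is a quadratic polynomial in the finite mass vector,
$$
\mathcal J(\nu_\varepsilon)=\mathcal J(\nu)+\varepsilon\int\Gamma_\nu\,d(\widetilde\nu-\nu)+\varepsilon^2\frac{B-1}{B}\iint\Psi\,d(\widetilde\nu-\nu)\,d(\widetilde\nu-\nu).
$$
The linear term arises from the two cross terms $\iint\Psi(\mathbf n,\mathbf n')\,d\nu\,d(\widetilde\nu-\nu)$ and $\iint\Psi(\mathbf n',\mathbf n)\,d\nu\,d(\widetilde\nu-\nu)$, which is exactly how $\Gamma_\nu$ is defined.

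\textbf{Part (i).} If $\nu^\star$ is a local minimizer, then for every $\widetilde\nu$ and every small $\varepsilon>0$ we have $\mathcal J(\nu_\varepsilon)\ge\mathcal J(\nu^\star)$. Dividing by $\varepsilon$ and letting $\varepsilon\downarrow0$ gives $\int\Gamma_{\nu^\star}\,d\widetilde\nu\ge\int\Gamma_{\nu^\star}\,d\nu^\star$, which is \eqref{eq:first-order-terminal-law-condition}.

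\textbf{Part (ii), first claim.} Because $\mathbb E_\pi[\Gamma_{\nu^\star}(N_\ell)]=\int\Gamma_{\nu^\star}\,d\nu_{\pi,\ell}$, part (i) translates directly into optimality of $\pi^\star$ for the linear terminal-cost problem.

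\textbf{Part (ii), support condition.} This is where the real work lies; I expect it to be the main obstacle, because of the measure-theoretic bookkeeping. By Lemma~\ref{lem:cost-to-go-bellman}, $\mathcal V_0^{h}(\mathbf 0)$ with $h=\Gamma_{\nu^\star}$ equals the optimal value, and this value is attained by $\pi^\star$. I will then write the telescoping identity
$$
\mathbb E_{\pi^\star}[h(N_\ell)]-\mathcal V_0^h(\mathbf 0)=\sum_{t=0}^{\ell-1}\mathbb E_{\pi^\star}\Bigl[\mathcal V_{t+1}^h(N_{t+1})-\min_{j\in\mathcal A(U_{t+1})}\mathcal V_{t+1}^h(N_t+e_j)\Bigr].
$$
This follows from the recursion \eqref{eq:linear-terminal-bellman} together with the independence of $U_{t+1}$ from $\mathcal H_t$. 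Each summand is nonnegative and the left side is zero, so every summand vanishes. Conditioning on $\mathcal H_t\vee\sigma(U_{t+1})$, the inner conditional expectation equals
$$
\sum_j\pi^\star_{t+1}(j\mid\cdot)\bigl(\mathcal V_{t+1}^h(N_t+e_j)-\min\bigr)\ge0,
$$
and it must vanish a.s. Hence any $j$ with positive probability attains the minimum, which is \eqref{eq:marginal-bellman-support}. The care needed is that the recursion uses only the count state while policies may depend on full histories, but the independence of masks handles this.

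\textbf{Part (iii).} Reversing the telescoping argument, the support condition forces every summand to be zero. Hence $\nu^\star$ minimizes $\nu\mapsto\int\Gamma_{\nu^\star}\,d\nu$ over $\mathfrak V_\ell$, so the directional derivative toward any $\widetilde\nu$ is nonnegative. Since $\mathcal J$ is convex, $\varepsilon\mapsto\mathcal J(\nu_\varepsilon)$ is convex on $[0,1]$ with nonnegative right derivative at $0$. Therefore $\mathcal J(\widetilde\nu)\ge\mathcal J(\nu^\star)$, and the policy statement follows by the terminal-law equivalence \eqref{eq:ensemble-design-problem}.

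\textbf{Final convexity criterion.} The coefficient of $\varepsilon^2$ above is $\frac{B-1}{B}\,a^\top Q^\Psi a$ with $a=\widetilde\nu-\nu$, so the positive-semidefiniteness hypothesis makes every restriction of $\mathcal J$ to a segment in $\mathfrak V_\ell$ convex. Since $\mathfrak V_\ell$ is convex, this gives convexity of $\mathcal J$ on $\mathfrak V_\ell$.
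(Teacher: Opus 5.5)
Your proposal is correct and follows essentially the same route as the paper: the same quadratic first-variation expansion for (i); the same reduction of (ii) to the linear terminal-cost problem, followed by the telescoping identity with nonnegative conditional gaps that must vanish (the paper's Lemma~\ref{lem:linear-terminal-bellman}); and the same convex-restriction argument for (iii) and for the $Q^\Psi$ criterion. The only differences are minor: you get convexity of $\mathfrak V_\ell$ by mixing policies (via Kuhn's theorem) instead of the paper's occupation-measure polytope (Lemma~\ref{lem:compact-convex}), and in (iii) you explicitly run the telescoping identity in reverse to show that the support condition gives optimality for the linear cost, a step the paper only attributes to part (ii).
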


Theorem~\ref{thm:marginal-value-bellman} provides the optimality system for the ensemble-design problem \eqref{eq:ensemble-design-problem} and clarifies the technical conditions under which that system is valid. 
The finite-valued requirement on $\Phi$ and $\Psi$ ensures that all first variations, marginal terminal costs, and Bellman values are well-defined on the finite terminal state space. 
The positive-semidefinite condition on $Q^\Psi$ is precisely the convexity condition for the quadratic interaction term along feasible terminal-law directions. 
The finite-valued condition is employed to obtain the first-order Bellman necessity, whereas the convexity condition is needed only when converting that necessity into a global optimality certificate. 

Given a candidate terminal law $\nu$, the marginal terminal cost $\Gamma_{\nu}^{\Phi,\Psi}$ defines a linear terminal-cost control problem, and the Bellman recursion \eqref{eq:linear-terminal-bellman} identifies the associated optimal split policy. Therefore, optimality of ensemble design requires a consistent condition: the Bellman policy must reproduce the same terminal law $\nu$ that generated the marginal terminal cost. 

In contrast, greedy CART optimizes a one-step impurity score rather than the marginal continuation value induced by $\Gamma_{\nu}^{\Phi,\Psi}$. 
Therefore, the local CART split rule and the terminal-law ensemble objective are governed by different Bellman equations. Such distinction is the main  reason why a locally greedy tree can be suboptimal for a forest-level objective.

\subsection{Verification of Assumption~\ref{assm:etareq} for $s=2$ and $s=3$ under sufficiently large $\gamma$}
\label{subsec:eta-s2-s3}

In this subsection, we verify that Assumption~\ref{assm:etareq} can be satisfied for the population CART under the sparse linear model with $s=2$ and $s=3$, by taking the subsampling ratio $\gamma=m/d$ sufficiently large.
Throughout the section, we will use the simplified expression
\begin{equation}\label{eq:cstar-clean-v1}
c_\star
=
\frac{1}{s(s-1)^{3/2}}
\sum_{k=2}^s (k-1)\P(K=k\mid K\ge1)
\end{equation}
with $K\sim{\rm Hypergeometric}(d,s,m)$. Notice that $\kappa(\pi_{\gr}) \ge c_\star$. Then it suffices to show that $c_\star > (1 - 1/s)\eta_{\rm req}/4 $.


\noindent\textbf{Case $s=2$.}
For this case, we have $K\in\{0,1,2\}$ and \eqref{eq:cstar-clean-v1} reduces to
$$
c_\star=\frac12\P(K=2\mid K\ge1).
$$
As $\gamma = m /d \to1$, 
it holds that   
$\P(K=2\mid K\ge1)\to1$ and thus $c_\star \to 1/2$.

Meanwhile, we have that $\eta_{\rm req}=\max\{2\log2,\log2,{\log2}/{\sqrt{2}}\}=2\log2$ and 
$$
\frac{1 - (1/s)}{4} \eta_{\rm req} =
\frac{(1 -1/2) (2\log2)}{4}
=
\frac{\log2}{4}
< 1/2.
$$
Hence, it follows that
\[
\kappa(\pi_{\gr}) \ge c_\star > \frac{1 - (1/s)}{4}\eta_{\rm req}
\]
for all sufficiently large $\gamma$, and thus Assumption \ref{assm:etareq} holds.

\medskip
\noindent\textbf{Case $s=3$.}
For this case, we have $K\in\{0,1,2,3\}$ and \eqref{eq:cstar-clean-v1} reads
$$
c_\star=\frac{1}{3\cdot 2^{3/2}}
\Big(\P(K=2\mid K\ge1)+2\P(K=3\mid K\ge1)\Big).
$$
As $\gamma\to1$, it holds that $\P(K=3\mid K\ge1)\to1$ and $\P(K=2\mid K\ge1)\to0$,
so $c_\star\to {1}/{3\cdot 2^{1/2}}$.

Meanwhile, we have that $\eta_{\rm req}= \max\{2\log2,
\sqrt{3}\log2 / 2\} 
= 2\log 2$.
Then it follows that 
$$
\frac{1 - (1/s)}{4} \eta_{\rm req} =
\frac{(1-1/3 )(2\log2)}{4}
=
\frac{\log2}{3} < \frac{1}{3\sqrt{2}}.
$$
Thus, we can obtain that 
\[
\kappa(\pi_{\gr}) \ge c_\star > \frac{1 - (1/s)}{4}\eta_{\rm req}
\]
for all sufficiently large $\gamma$, so Assumption \ref{assm:etareq} also holds.

\section{Proofs of main results} \label{new.sec.mainproofs}

\subsection{Proof of Theorem \ref{thm:firstorder}} \label{proof.thm:firstorder}


The proof of this theorem will separate the stochastic clock from the balancing dynamics. The main technical point is that the greedy informative-time rule creates a negative drift for the imbalance $W_n$, but that drift must still be converted into an almost-sure ratio limit. To this end, we exploit a Robbins--Siegmund argument in informative time and then compose the resulting limit with the binomial informative clock $M_t$.

We begin with the informative-time component. Recall from Section~\ref{new.sec.oppotime} 
that $Z_n$ is the informative-time counting process, $\Delta_{n,j}=Z_{n,j}-n/s \1_s$ is the informative imbalance, and $V_n=\sum_{j=1}^s \Delta_{n,j}^2$ and $W_n=\sqrt{V_n}$ are the associated Lyapunov functions. The first claim is that 
\begin{equation}\label{eq:Wn-over-n-to-0-both}
\frac{W_n}{n}\xrightarrow{\mathrm{a.s.}}0
\qquad\text{or equivalently,}\qquad
\max_{1\le i,j\le s}\Big|\frac{Z_{n,i}}{n}-\frac{Z_{n,j}}{n}\Big|\xrightarrow{\mathrm{a.s.}}0.
\end{equation}
To establish such claim, let us start from the exact one-step identity in
Proposition~\ref{prop:cart-onestep}
$$
V_{n+1}=V_n+2\Delta_{n,J_{n+1}}+\Big(1-\frac1s\Big).
$$
Taking the conditional expectation given $\cF_n$ and applying the negative drift
bound in Lemma~\ref{lem:negdrift}, we can deduce that 
\begin{align}
\E[V_{n+1}\mid \cF_n]
&=
V_n+2\E[\Delta_{n,J_{n+1}}\mid \cF_n]+\Big(1-\frac1s\Big)\notag\\
&\le
V_n-2c_\star W_n+\Big(1-\frac1s\Big).
\label{eq:V_RS}
\end{align}

Define the nonnegative process
$Y_n:=\frac{V_n}{(n+1)^2}$ for $n\ge0$.
We aim to derive a Robbins--Siegmund-type recursion for $(Y_n)$. To obtain the desired recursion, we divide \eqref{eq:V_RS} by $(n+2)^2$ and observe that
$$
\frac{V_n}{(n+2)^2}
=
\Big(\frac{n+1}{n+2}\Big)^2 \frac{V_n}{(n+1)^2}
=
\Big(\frac{n+1}{n+2}\Big)^2 Y_n
 \le 
Y_n.
$$
It follows that for all $n\ge0$,
\begin{equation}\label{eq:Y_RS_raw}
\E[Y_{n+1}\mid \cF_n]
=
\E \left[\frac{V_{n+1}}{(n+2)^2}\middle|\cF_n\right]
\le
Y_n
-\frac{2c_\star}{(n+2)^2}W_n
+\frac{1-1/s}{(n+2)^2}.
\end{equation}
The inequality above has the form that is needed for the following Robbins--Siegmund almost-supermartingale theorem in \cite{RobbinsSiegmund1971}.

\begin{lemma}[Robbins--Siegmund framework]\label{lem:RS}
Let $(Y_n)_{n\ge0}$ be a nonnegative adapted process satisfying that 
$$
\E[Y_{n+1}\mid \cF_n]
\le
Y_n + \xi_n - \eta_n,
$$
where $(\xi_n)$ and $(\eta_n)$ are nonnegative sequences with $\sum_n \xi_n<\infty$ almost surely.
Then $(Y_n)$ converges almost surely to a finite random limit, and
$\sum_n \eta_n<\infty$ almost surely.
\end{lemma}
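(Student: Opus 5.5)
The plan is the classical supermartingale-plus-localization argument. I will assume, as holds in the application \eqref{eq:Y_RS_raw} where $\xi_n=(1-1/s)/(n+2)^2$ and $\eta_n=2c_\star W_n/(n+2)^2$, that $\xi_n$ and $\eta_n$ are $\cF_n$-measurable. Set $A_n:=\sum_{k<n}\xi_k$ and $D_n:=\sum_{k<n}\eta_k$. Both are nondecreasing and predictable, with $A_\infty<\infty$ almost surely. Define the compensated process
$$
Y'_n:=Y_n+D_n-A_n .
$$
Because $A_{n+1}$ and $D_{n+1}$ are $\cF_n$-measurable, the hypothesis gives $\E[Y'_{n+1}\mid\cF_n]\le Y_n+\xi_n-\eta_n+D_{n+1}-A_{n+1}=Y'_n$, so $(Y'_n)$ is a supermartingale. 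Integrability can be handled by the same localization used below, and it is automatic in our application, where $\xi_n$ is deterministic and $V_n\le n^2$.

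The obstacle is that $Y'_n$ need not be bounded below, since $-A_n$ can be large, so the supermartingale convergence theorem does not apply directly. I would localize. For $c>0$ define $\tau_c:=\inf\{n: A_{n+1}>c\}$, which is a stopping time because $A_{n+1}$ is $\cF_n$-measurable. The stopped process $Y'_{n\wedge\tau_c}$ is again a supermartingale. It satisfies
$$
Y'_{n\wedge\tau_c}\ge -A_{n\wedge\tau_c}\ge -c,
$$
using $Y_n\ge0$, $D_n\ge0$, and the definition of $\tau_c$. A supermartingale bounded below converges almost surely to a finite limit. On the event $\{A_\infty\le c\}$ we have $\tau_c=\infty$, so $Y'_n$ converges there. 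Taking the union over $c\in\mathbb N$ and using $A_\infty<\infty$ almost surely, we conclude that $Y'_n$ converges almost surely to a finite limit.

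Finally, $A_n\to A_\infty<\infty$, so $Y_n+D_n=Y'_n+A_n$ converges almost surely to a finite limit. Since $Y_n\ge0$ and $D_n$ is nondecreasing, $D_n\le Y_n+D_n$ is bounded, and therefore $D_\infty=\sum_n\eta_n<\infty$ almost surely. Then $Y_n=(Y_n+D_n)-D_n$ is a difference of two almost surely convergent sequences, so it converges almost surely to a finite limit. The only delicate step is the localization with the predictable stopping time $\tau_c$. This step is where adaptedness of $\xi_n$ is used; the rest is bookkeeping.
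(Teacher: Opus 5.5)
Your argument is correct and is the standard compensated-supermartingale-plus-stopping proof of the Robbins--Siegmund theorem, in the special case without the multiplicative factor $1+\beta_n$. The paper does not reprove this lemma; it simply cites Robbins and Siegmund (1971). Your added hypothesis that $\xi_n,\eta_n$ are $\cF_n$-measurable is exactly the condition of that theorem, left implicit in the paper's statement and satisfied in the application.

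The only loose end is integrability in general. Stopping at $\tau_c$ gives $\E[Y_{n\wedge\tau_c}+D_{n\wedge\tau_c}]\le \E Y_0+c$, so you would also need to localize on $\{Y_0\le k\}\in\cF_0$ (or use generalized conditional expectations of nonnegative variables). As you note, this is automatic in the application.
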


Lemma~\ref{lem:RS} above is the mechanism that converts the negative drift in \eqref{eq:Y_RS_raw} into both convergence of the normalized Lyapunov process and summability of the drift term. Inequality~\eqref{eq:Y_RS_raw} is precisely of the Robbins--Siegmund form in
Lemma~\ref{lem:RS}, with the choices of 
$$
\xi_n := \frac{1-1/s}{(n+2)^2},
\qquad
\eta_n := \frac{2c_\star}{(n+2)^2}W_n.
$$
Since $\sum_{n\ge0}\xi_n<\infty$, an application of Lemma~\ref{lem:RS} gives that $(Y_n)$
converges almost surely to a finite random limit $Y_\infty$, and further 
\begin{equation}\label{eq:sum-eta-finite}
\sum_{n=0}^\infty \frac{W_n}{(n+2)^2}<\infty
\qquad\text{almost surely.}
\end{equation}

We next convert the summability conclusion into informative-time equalization. The key identities are
$$
Y_n := \frac{V_n}{(n+1)^2}
\quad\text{and the summability}\quad
\sum_{n=0}^\infty \frac{W_n}{(n+2)^2}<\infty.
$$
Since $W_n=\sqrt{V_n}$, it holds that 
$$
\sqrt{Y_n}=\frac{W_n}{n+1}.
$$
Moreover, we have that for all $n\ge0$,
$$
\frac{W_n}{(n+2)^2}
=
\frac{n+1}{(n+2)^2}\cdot \frac{W_n}{n+1}
=
\frac{n+1}{(n+2)^2}\sqrt{Y_n}
\asymp \frac{\sqrt{Y_n}}{n+1},
$$
and thus the summability $\sum_{n\ge0} W_n/(n+2)^2<\infty$ entails that 
\begin{equation}\label{eq:sqrtY-harmonic-summable}
\sum_{n=0}^\infty \frac{\sqrt{Y_n}}{n+1}<\infty
\qquad\text{almost surely.}
\end{equation}

On the other hand, by invoking the Robbins--Siegmund argument established earlier,
$Y_n\to Y_\infty$ almost surely for some finite random variable $Y_\infty\ge0$.
We claim that $Y_\infty=0$ almost surely.
Assume \textit{on the contrary} that $\P(Y_\infty>0)>0$.
Then there exist some $\varepsilon>0$ and a (random) index $n_0$ such that
$\sqrt{Y_n}\ge \varepsilon$ for all $n\ge n_0$ on an event of positive probability.
On such event, it holds that 
$$
\sum_{n=0}^\infty \frac{\sqrt{Y_n}}{n+1}
 \ge 
\sum_{n=n_0}^\infty \frac{\varepsilon}{n+1}
 = \infty,
$$
contradicting \eqref{eq:sqrtY-harmonic-summable}.
Hence, it follows that $Y_\infty=0$ almost surely, and thus
\begin{equation}\label{eq:Wn-over-n-to-0}
\frac{W_n}{n+1}=\sqrt{Y_n}\xrightarrow{\mathrm{a.s.}}0.
\end{equation}

Further, using $\sum_{j=1}^s \Delta_{n,j}=0$ and bound
$\max_{1\le j\le s}|\Delta_{n,j}|\le \|\Delta_n\|_2=W_n$, we can show that 
$$
\max_{1\le j\le s}\Big|\frac{Z_{n,j}}{n}-\frac1s\Big|
=
\max_{1\le j\le s}\Big|\frac{\Delta_{n,j}}{n}\Big|
\le
\frac{W_n}{n}
\xrightarrow{\mathrm{a.s.}}0.
$$
This establishes the desired informative-time equalization
\begin{equation}\label{eq:Zn-limit}
\frac{Z_{n,j}}{n}\xrightarrow{\mathrm{a.s.}}\frac1s,
\qquad j=1,\dots,s.
\end{equation}

It remains to transfer the informative-time limit to the original depth-time process through the clock $M_t$.
Let us fix $j\in S$. In view of \eqref{eq:clock-link}, it holds that for all $t\ge1$,
$$
N_{t,j}=Z_{M_t,j}.
$$
On event $\{M_t\to\infty\}$ (which holds almost surely since $q>0$),
we can write
$$
\frac{N_{t,j}}{t}
=
\frac{Z_{M_t,j}}{M_t}\cdot \frac{M_t}{t}.
$$
Recall that $M_t\sim\mathrm{Binomial}(t,q)$ and $M_t/t\to q$ almost surely.
With \eqref{eq:Zn-limit} applied along the (random) subsequence $n=M_t$, it follows that 
$$
\frac{Z_{M_t,j}}{M_t}\xrightarrow{\mathrm{a.s.}}\frac1s.
$$
Therefore, we can obtain that 
$$
\frac{N_{t,j}}{t}
=
\frac{Z_{M_t,j}}{M_t}\cdot \frac{M_t}{t}
\xrightarrow{\mathrm{a.s.}}
\frac1s\cdot q
=\frac{q}{s},
\qquad j\in S,
$$
which establishes the first-order limit on the informative block $S$. This completes the proof of Theorem \ref{thm:firstorder}.

\subsection{Proof of Theorem \ref{thm:expcomp}} \label{proof.thm:expcomp}



The proof of this theorem will turn the negative drift of the quadratic imbalance into a uniform exponential moment bound for $W_n$. The technical challenge is that the drift is useful only when $W_n$ is large, while the increments are controlled only through a deterministic bound. We thus combine a large-state contraction, Hoeffding's lemma for bounded increments, and a small-state envelope to obtain a global geometric drift for $e^{\eta W_n}$.

The exponential-rate condition in Theorem~\ref{thm:expcomp} is the analytic source of Assumption~\ref{assm:etareq}. The admissible range for $\eta$ balances the negative drift coefficient $c_\star$ against the Hoeffding penalty caused by bounded increments of size $a$. Assumption~\ref{assm:etareq} later selects the particular exponential rates needed by the MSE functionals and requires those rates to lie within this stable range.

Let us first recall that
$$
a = \sqrt{1 - \frac{1}{s}}, \qquad b = a^2 = 1 - \frac{1}{s}.
$$
We begin with identifying the drift inequality for $V_n$ that drives the entire argument. An application of Proposition~\ref{prop:cart-onestep} leads to 
$$
V_{n+1}-V_n
=
2\Delta_{n,J_{n+1}}+\Bigl(1-\frac1s\Bigr).
$$
Taking the conditional expectation given $\cF_n$ and resorting to Lemma~\ref{lem:negdrift}, we can show that 
$$
\E[V_{n+1}-V_n\mid \cF_n]
=
2\E[\Delta_{n,J_{n+1}}\mid \cF_n]+\Bigl(1-\frac1s\Bigr)
\le
-2c_\star W_n + b.
$$
Hence, it holds that for all $n\ge0$,
\begin{equation}\label{eq:V-drift-plus}
\E[V_{n+1}-V_n\mid \cF_n]\le -2c_\star W_n + b.
\end{equation}

We next focus on the behavior of $W_n$. Observe that
$$
W_{n+1}-W_n
=
\frac{V_{n+1}-V_n}{W_{n+1}+W_n}.
$$
On event $\{W_n>0\}$, we have that $W_{n+1}+W_n\ge W_n$ and thus
$$
(W_{n+1}-W_n)\1\{W_n>0\}
\le
\frac{V_{n+1}-V_n}{W_n}\1\{W_n>0\}.
$$
Since $W_n \in \cF_n$, taking the $\cF_n$-conditional expectation and applying \eqref{eq:V-drift-plus} give that 
$$
\E[W_{n+1}-W_n\mid \cF_n]\1\{W_n>0\}
\le
\Bigl(-2c_\star+\frac{b}{W_n}\Bigr)\1\{W_n>0\}.
$$
Since $0 < \eta < 4c_\star/a^2$, there exists a $\delta\in(0,1)$ such that 
$$
0 < \eta < \frac{2(2 - \delta) c_\star}{a^2}.
$$

Let us set a deterministic threshold
$R:=b/{\delta c_\star}$.
Then on event $\{W_n\ge R\}$, it holds that $b/W_n\le \delta c_\star$, and thus 
\begin{equation}\label{eq:W-drift-large}
\E[W_{n+1}-W_n\mid \cF_n]\le -\mu
\qquad \text{on }\{W_n\ge R\},
\end{equation}
where $\mu:= (2-\delta)c_\star>0$. Denote by $\Delta W_n:=W_{n+1}-W_n$. Equation~\eqref{eq:bounded-jump} in Proposition~\ref{prop:cart-onestep} implies that 
$|\Delta W_n| = |W_{n+1}-W_n|\le a$ almost surely.
We will bound the conditional moment generating function of $\Delta W_n$ using only bounded increments. This step is where the bounded-jump estimate enters; for any $\eta>0$, we have that 
\begin{equation}
\E \left[e^{\eta \Delta W_n}\mid \cF_n\right]
=
\exp \Big(\eta\E[\Delta W_n\mid \cF_n]\Big)
\E \left[\exp \Big(\eta\big(\Delta W_n-\E[\Delta W_n\mid \cF_n]\big)\Big)\middle|\cF_n\right].
\label{eq:mgf-factor}
\end{equation}
Since $\Delta W_n\in[-a,a]$ almost surely, the centered increment
$\Delta W_n-\E[\Delta W_n\mid \cF_n]$ is supported on an interval of length at most
$2a$. Then an application of the conditional form of Hoeffding's lemma results in 
\begin{equation}\label{eq:hoeffding}
\E \left[\exp \Big(\eta\big(\Delta W_n-\E[\Delta W_n\mid \cF_n]\big)\Big)\middle|\cF_n\right]
\le
\exp \Big(\frac{\eta^2 a^2}{2}\Big).
\end{equation}
Combining \eqref{eq:mgf-factor}--\eqref{eq:hoeffding}, we can obtain the conditional moment generating function bound
\begin{equation}\label{eq:mgf-bound}
\E \left[e^{\eta \Delta W_n}\mid \cF_n\right]
\le
\exp \Big(\eta\E[\Delta W_n\mid \cF_n]+\frac{\eta^2 a^2}{2}\Big).
\end{equation}

Multiplying \eqref{eq:mgf-bound} by $e^{\eta W_n}$ yields that 
$$
\E[e^{\eta W_{n+1}}\mid \cF_n]
=
e^{\eta W_n}\E[e^{\eta \Delta W_n}\mid \cF_n]
\le
e^{\eta W_n}\exp \Big(\eta\E[\Delta W_n\mid \cF_n]+\frac{\eta^2 a^2}{2}\Big).
$$
On the large set $\{W_n\ge R\}$, it follows from \eqref{eq:W-drift-large} that $\E[\Delta W_n\mid \cF_n]\le -\mu$. Hence, we have that on event $\{W_n\ge R\}$,
\begin{equation}\label{eq:large-contract-B}
\E[e^{\eta W_{n+1}}\mid \cF_n]
\le
\rho e^{\eta W_n},
\qquad
\rho:=\exp \Big(-\eta\mu+\frac{\eta^2 a^2}{2}\Big).
\end{equation}
Recall that $\eta$ satisfies that 
$$
0<\eta<\frac{2\mu}{a^2}
=
\frac{2(2-\delta)c_\star}{a^2}.
$$
Consequently, it holds that $-\eta\mu+\eta^2 a^2/2<0$ and $\rho\in(0,1)$.

Meanwhile, on the complementary set $\{W_n<R\}$, the jump bound \eqref{eq:bounded-jump} entails that $W_{n+1}\le W_n+a<R+a$, and thus
\begin{equation}\label{eq:small-bound-B}
e^{\eta W_{n+1}}\1\{W_n<R\}\le e^{\eta(R+a)}.
\end{equation}
Combining the contraction on the large set with the deterministic envelope on the small set yields the global exponential drift that for all $n\ge0$,
\begin{equation}\label{eq:global-exp-drift-B}
\E[e^{\eta W_{n+1}}\mid \cF_n]
\le
\rho e^{\eta W_n} + B,
\qquad
B:=e^{\eta(R+a)}.
\end{equation}
Taking expectations in \eqref{eq:global-exp-drift-B}, we can show that 
$$
\E e^{\eta W_{n+1}} \le \rho\E e^{\eta W_n}+B.
$$
Iterating this process gives that 
$$
\E e^{\eta W_n}
\le
\rho^n\E e^{\eta W_0} + B\sum_{k=0}^{n-1}\rho^k
\le
\E e^{\eta W_0}+\frac{B}{1-\rho}.
$$
Since $Z_0\equiv0$ implies that $V_0=0$ and $W_0=0$, it holds that $\E e^{\eta W_0}=1$.
Therefore, we can obtain that 
$$
\sup_{n\ge0}\E e^{\eta W_n}
\le
1+\frac{B}{1-\rho}
<\infty,
$$
which establishes \eqref{eq:expcomp}. This concludes the proof of Theorem \ref{thm:expcomp}.













\subsection{Proof of Theorem~\ref{thm:cart-mse}} \label{proof.thm:cart-mse}


The proof of this theorem will translate the abstract risk representation into explicit population-CART rates. The main difficulty is that the risk terms depend on raw terminal counts, whereas CART balances the informative-time counts only up to stochastic fluctuations. We will isolate the binomial clock, control the informative-time error by exponential compression, and then evaluate the remaining clock and noninformative terms through exact binomial and multinomial identities.

More precisely, we start from the general formula in Theorem~\ref{thm:mse}. We then show how the three functionals involving the informative block can be uniformly controlled using Theorem~\ref{thm:expcomp}, Assumption~\ref{assm:etareq}, and the exact identities established in Proposition~\ref{prop:poisson-functionals}. Throughout the proof, constants denoted as $C$ may change from line to line and depend only on $(d,s,\gamma)$, but not on $\ell$ and $n$.

Let us first recall the clock representation in \eqref{eq:clock-link}. For each $j\in S$, it holds that 
$
N_{\ell,j} = Z_{M_\ell,j}
$,
where $M_\ell$ is the informative clock and $(Z_n)_{n\ge0}$ is the
informative-time counting process.
Writing
$$
Z_{n,j} = \frac{n}{s} + \Delta_{n,j},
\qquad
W_n := \|\Delta_n\|_2,
$$
we have the deterministic bound
$
|\Delta_{n,j}| \le W_n,
\ j=1,\dots,s.
$
Under Assumption~\ref{assm:etareq}, an application of Theorem~\ref{thm:expcomp} gives that 
\begin{equation}\label{eq:app-exp-moment}
C:= \sup_{n\ge0}\E\exp\big(\eta_{\rm req} W_n\big) < \infty,
\end{equation}
where
$
\eta_{\rm req} = \max\{2\log 2,\ \tfrac{\log 2}{2}\sqrt{s}\}.
$
Consequently, we have that for each $\lambda\in(0,\eta_{\rm req}]$,
\begin{equation}\label{eq:app-exp-moment-lambda}
\sup_{n\ge0}\E e^{\lambda W_n} < \infty.
\end{equation}
Meanwhile, the informative clock $M_\ell$ depends only on the subsampling events
$\{U_t\}$ and is independent of the informative-time dynamics $(Z_n)$.
Hence, $M_\ell$ is independent of $(W_n)$, and we can freely condition on
$M_\ell$ in the remaining proof.

Here, Assumption~\ref{assm:etareq} is used as a moment-threshold condition instead of as a new structural assumption on the tree. It guarantees that Theorem~\ref{thm:expcomp} is available at every exponential rate generated by the single-tree bias, cross-tree bias, and overlap terms, so the informative-time imbalance errors can be absorbed into constants independent of depth $\ell$.

\smallskip
\noindent\textbf{Step 1: Single-tree bias term.} The single-tree bias contribution in Theorem~\ref{thm:mse} is the following expectation over terminal informative counts
$$
\sum_{j\in S}\beta_j^2\E\bigl[2^{-2N_{\ell,j}}\bigr].
$$
Let us fix $j\in S$.
Using the informative-time embedding, we can deduce that 
$$
2^{-2N_{\ell,j}}
=
2^{-2Z_{M_\ell,j}}
=
2^{-2(M_\ell/s+\Delta_{M_\ell,j})}
=
2^{-2M_\ell/s}2^{-2\Delta_{M_\ell,j}}.
$$
It follows from $2^{-2x}=\exp(-(2\log 2)x)$ and $|\Delta_{M_\ell,j}|\le W_{M_\ell}$ that 
$$
2^{-2N_{\ell,j}}
\le
2^{-2M_\ell/s}
\exp\big((2\log 2)W_{M_\ell}\big).
$$
Taking expectations and conditioning on $M_\ell$ yield that 
$$
\E\bigl[2^{-2N_{\ell,j}}\bigr]
\le
\E \left[
2^{-2M_\ell/s}
\E\bigl(e^{(2\log 2)W_{M_\ell}}\mid M_\ell\bigr)
\right]
\le
C\E\bigl[2^{-2M_\ell/s}\bigr],
$$
where constant $C<\infty$ follows from
\eqref{eq:app-exp-moment}. 

Since $N_{\ell,j}$ and $M_{\ell}/s$ are exchangeable in the above argument, it also holds that  
$$
2^{-2M_\ell/s}
\le
2^{-2N_{\ell,j}}
\exp\big((2\log 2)W_{M_\ell}\big),
$$
and thus 
$$
\E\bigl[2^{-2N_{\ell,j}}\bigr] \ge \frac{1}{C} \E\bigl[2^{-2M_\ell/s}\bigr]
$$
with the same constant $C$ mentioned before.
Since $M_\ell\sim\mathrm{Binomial}(\ell,q)$, an application of 
Lemma~\ref{lem:binom-pgf} with the choices of $p=q$ and $r= 2^{-2/s}$ leads to 
$$
\E\bigl[2^{-2M_\ell/s}\bigr]
=
\bigl(1-q+q4^{-1/s}\bigr)^\ell.
$$
This establishes the stated bound for the single-tree bias term.

\smallskip
\noindent\textbf{Step 2: Cross-tree bias term.} The cross-tree bias contribution in Theorem~\ref{thm:mse} has the similar two-tree form 
$$
\sum_{j\in S}\beta_j^2
\E\bigl[2^{-2\max\{N_{\ell,j},N'_{\ell,j}\}}\bigr],
$$
where $N'_\ell$ represents an independent copy of $N_\ell$.
For $j\in S$, denote by 
$$
N_{\ell,j} = \frac{M_\ell}{s}+\Delta_{M_\ell,j}
\qquad
N'_{\ell,j} = \frac{M'_\ell}{s}+\Delta'_{M'_\ell,j},
$$
with $(M_\ell,\Delta)$ independent of $(M'_\ell,\Delta')$.
Using the identity
$
\max(x,y)=\tfrac12(x+y+|x-y|)
$,
we can show that 
$$
2^{-2\max\{N_{\ell,j},N'_{\ell,j}\}}
=
2^{-(N_{\ell,j}+N'_{\ell,j})}
2^{-|N_{\ell,j}-N'_{\ell,j}|}.
$$

For the first factor, it holds that 
\begin{align*}
2^{-(N_{\ell,j}+N'_{\ell,j})}
&=
2^{-(M_\ell+M'_\ell)/s}
\exp\bigl(-(\log 2)(\Delta_{M_\ell,j}+\Delta'_{M'_\ell,j})\bigr)
\\
&\le
2^{-(M_\ell+M'_\ell)/s}
\exp\bigl((\log 2)(W_{M_\ell}+W'_{M'_\ell})\bigr).
\end{align*}
For the second factor, we have that 
\begin{align*}
|N_{\ell,j}-N'_{\ell,j}|
&\le
\frac{|M_\ell-M'_\ell|}{s}
+
|\Delta_{M_\ell,j}|+|\Delta'_{M'_\ell,j}|\\
&\le
\frac{|M_\ell-M'_\ell|}{s}
+
W_{M_\ell}+W'_{M'_\ell},
\end{align*}
and thus
$$
2^{-|N_{\ell,j}-N'_{\ell,j}|}
\le
2^{-|M_\ell-M'_\ell|/s}
\exp\bigl((\log 2)(W_{M_\ell}+W'_{M'_\ell})\bigr).
$$

Combining the two bounds above gives that 
\begin{align*}
2^{-2\max\{N_{\ell,j},N'_{\ell,j}\}}
& \le
2^{-(M_\ell+M'_\ell)/s}
2^{-|M_\ell-M'_\ell|/s}
\exp\bigl(2(\log 2)(W_{M_\ell}+W'_{M'_\ell})\bigr)\\
& \le 2^{-2\max\{M_\ell, M'_\ell\}/s} 
\exp\bigl(2(\log 2)(W_{M_\ell}+W'_{M'_\ell})\bigr).
\end{align*}
Taking the expectations conditional on $M_\ell$ and using
\eqref{eq:app-exp-moment-lambda} with $\lambda=2\log 2$, we can deduce that 
$$
\E\bigl[2^{-2\max\{N_{\ell,j},N'_{\ell,j}\}}\bigr]
\le
C
\E \left[
2^{-2\max\{M_\ell, M'_\ell\}/s}
\right],
$$
where constant $C$ follows from \eqref{eq:app-exp-moment}. 
Thanks to the exchangeability of $N_{\ell,j}$ and $M_\ell$ in the above argument, we also have that 
$$
\E\bigl[2^{-2\max\{N_{\ell,j},N'_{\ell,j}\}}\bigr]
\ge
\frac{1}{C}
\E \left[
2^{-2\max\{M_\ell, M'_\ell\}/s}
\right].
$$

The remaining expectation depends only on the binomial clocks
$M_\ell,M'_\ell\stackrel{\rm i.i.d.}{\sim}{\rm Binomial}(\ell,q)$.
Applying part (i) of Proposition~\ref{prop:poisson-functionals} with the choices of $p=q$ and $r=2^{-2/s}$ yields the exact clock representation
$$
\E\bigl[2^{-2\max\{M_\ell,M'_\ell\}/s}\bigr]
=
\bigl(1-q+q2^{-1/s}\bigr)^{2\ell}
F_{\ell,2^{-2/s}}\left(\frac{q 2^{-1/s}}{1-q + q 2^{-1/s}}\right),
$$
where $F_{\ell,r}$ is the Poisson-kernel attenuation functional defined in \eqref{eq:Flr}.
This establishes the stated cross-tree bias term.

\smallskip
\noindent\textbf{Step 3: Cross-tree variance term.} The remaining term is the cross-tree variance contribution in Theorem~\ref{thm:mse}
$$
\E \left[
2^{\sum_{j=1}^d \min\{N_{\ell,j},N'_{\ell,j}\}}
\right].
$$
It follows from identities 
$$
\sum_{j=1}^d \min\{x_j,y_j\}
=
\frac12\sum_{j=1}^d(x_j+y_j)
-
\frac12\|x-y\|_1
$$
and $\sum_{j=1}^d N_{\ell,j}=\sum_{j=1}^d N'_{\ell,j}=\ell$ that 
\begin{equation}\label{eq:app-cv-basic}
2^{\sum_{j=1}^d \min\{N_{\ell,j},N'_{\ell,j}\}}
=
2^\ell2^{-\frac12\|N_\ell-N'_\ell\|_1}.
\end{equation}
Let us set 
$$
r:=2^{-1/2}\in(0,1),
\qquad\text{so that}\qquad
2^{-\frac12\|N_\ell-N'_\ell\|_1}=r^{\|N_\ell-N'_\ell\|_1}.
$$

We next decompose $N_t$ on $S$ into an equilibrium part and an informative fluctuation.
Let us define the $(d-s+1)$-dimensional aggregated count vectors
$$
\widetilde N_\ell
:=
\bigl(M_\ell,\ (N_{\ell,j})_{j\in S^c}\bigr),
\qquad
\widetilde N'_\ell
:=
\bigl(M'_\ell,\ (N'_{\ell,j})_{j\in S^c}\bigr),
$$
and introduce the corresponding ``equilibrium'' full $d$-vectors
$$
\bar N_\ell
:=
\Bigl(\underbrace{M_\ell/s,\dots,M_\ell/s}_{s\ \text{times}},\ (N_{\ell,j})_{j\in S^c}\Bigr),
\qquad
\bar N'_\ell
:=
\Bigl(\underbrace{M'_\ell/s,\dots,M'_\ell/s}_{s\ \text{times}},\ (N'_{\ell,j})_{j\in S^c}\Bigr).
$$
For each $j\in S$, in light of \eqref{eq:clock-link} we have that 
$$
N_{\ell,j}
=
Z_{M_\ell,j}
=
\frac{M_\ell}{s}+\Delta_{M_\ell,j},
\qquad
N'_{\ell,j}
=
Z'_{M'_\ell,j}
=
\frac{M'_\ell}{s}+\Delta'_{M'_\ell,j}.
$$
Hence, we can write that 
$$
N_\ell=\bar N_\ell+E_\ell,
\qquad
N'_\ell=\bar N'_\ell+E'_\ell,
$$
where the error vectors satisfy
$$
(E_{\ell,j})_{j\in S}=(\Delta_{M_\ell,j})_{j\in S},
\quad
(E_{\ell,j})_{j\in S^c}\equiv 0,
\qquad
(E'_{\ell,j})_{j\in S}=(\Delta'_{M'_\ell,j})_{j\in S},
\quad
(E'_{\ell,j})_{j\in S^c}\equiv 0.
$$

With the aid of the triangle inequality, we can deduce that 
$$
\|N_\ell-N'_\ell\|_1
=
\|(\bar N_\ell-\bar N'_\ell)+(E_\ell-E'_\ell)\|_1
\ge
\|\bar N_\ell-\bar N'_\ell\|_1-\|E_\ell\|_1-\|E'_\ell\|_1.
$$
Since $r\in(0,1)$, the map $x\mapsto r^x$ is decreasing, so it holds that 
\begin{align*}
r^{\|N_\ell-N'_\ell\|_1}
&\le
r^{\|\bar N_\ell-\bar N'_\ell\|_1-\|E_\ell\|_1-\|E'_\ell\|_1}
\\
&=
r^{\|\bar N_\ell-\bar N'_\ell\|_1}
r^{-\|E_\ell\|_1}r^{-\|E'_\ell\|_1}
\\
&=
r^{\|\bar N_\ell-\bar N'_\ell\|_1}
\exp \bigl((-\log r)(\|E_\ell\|_1+\|E'_\ell\|_1)\bigr).
\end{align*}
Moreover, we have that $\|E_\ell\|_1=\sum_{j\in S}|\Delta_{M_\ell,j}|
\le \sqrt{s}\|\Delta_{M_\ell}\|_2=\sqrt{s}W_{M_\ell}$, and similarly
$\|E'_\ell\|_1\le \sqrt{s}W'_{M'_\ell}$.
It follows that 
\begin{equation}\label{eq:app-cv-perturb}
r^{\|N_\ell-N'_\ell\|_1}
\le
r^{\|\bar N_\ell-\bar N'_\ell\|_1}
\exp \Bigl((-\log r)\sqrt{s}(W_{M_\ell}+W'_{M'_\ell})\Bigr).
\end{equation}
Since $-\log r = -\log(2^{-1/2})=\tfrac12\log 2$, the exponent coefficient is equal to 
$\tfrac{\log 2}{2}\sqrt{s}$, which is dominated by $\eta_{\rm req}$ by definition.
Thus, \eqref{eq:app-exp-moment-lambda} implies that
$$
\sup_{\ell\ge1}\E\exp \Bigl((-\log r)\sqrt{s}W_{M_\ell}\Bigr)<\infty,
\qquad
\sup_{\ell\ge1}\E\exp \Bigl((-\log r)\sqrt{s}W'_{M'_\ell}\Bigr)<\infty.
$$
Note that $(N_{\ell,j})_{j\in S^c}$ is independent of the informative-time process $(W_n)$, and thus conditional on $M_{\ell}$, $\bar{N}_{\ell}$ is independent of $W_{M_{\ell}}$. Taking the expectations conditional on $M_\ell$ and $M'_{\ell}$, we can deduce from \eqref{eq:app-cv-perturb} that
\begin{equation}\label{eq:app-cv-reduce}
\E\bigl[r^{\|N_\ell-N'_\ell\|_1}\bigr]
\le
C\E\bigl[r^{\|\bar N_\ell-\bar N'_\ell\|_1}\bigr],
\end{equation}
where constant $C<\infty$ follows from \eqref{eq:app-exp-moment}. Using the same argument, with the roles of $N_{\ell,j}$ and $M_\ell$ exchanged, we also have that 
\begin{equation}\label{eq:app-cv-reduce-1}
\E\bigl[r^{\|\bar N_\ell-\bar N'_\ell\|_1}\bigr]
\le
\frac{1}{C}\E\bigl[r^{\|N_\ell-N'_\ell\|_1}\bigr].
\end{equation}
Hence, it suffices to focus on the non-asymptotic behavior of the reduced vector $\widetilde N_\ell$.

Notice that by construction, 
$
\widetilde N_\ell=(M_\ell,(N_{\ell,j})_{j\in S^c})
$
is multinomial with parameter vector
$$
\pi^{\mathrm{pop}}
=
\Bigl(
q,\
\underbrace{\tfrac{1-q}{d-s},\dots,\tfrac{1-q}{d-s}}_{d-s}
\Bigr) \in \Delta^{d-s+1},
$$
and $\widetilde N'_\ell$ is an independent copy.
In light of $\|\bar N_\ell-\bar N'_\ell\|_1=\|\widetilde N_\ell-\widetilde N'_\ell\|_1$
(since the informative block is replaced with $M_\ell/s$ replicated $s$ times), it holds that 
$$
\E\bigl[r^{\|\bar N_\ell-\bar N'_\ell\|_1}\bigr]
=
\E\bigl[r^{\|\widetilde N_\ell-\widetilde N'_\ell\|_1}\bigr].
$$
An application of part (ii) in Proposition~\ref{prop:poisson-functionals} with
the choices of $\boldsymbol p=\pi^{\mathrm{pop}} \in \Delta^{d-s+1}$ and $r=2^{-1/2}$ yields the Poisson-kernel representation
$$
\E \left[
2^{\sum_{j=1}^{d-s+1}\min\{\widetilde N_{\ell,j},\widetilde N'_{\ell,j}\}}
\right]
=
2^\ell{ L_{\ell,d-s+1, 2^{-1/2}}(\pi^{\mathrm{pop}})}.
$$
Consequently, combining this with \eqref{eq:app-cv-basic}, \eqref{eq:app-cv-reduce}, and \eqref{eq:app-cv-reduce-1}, we can obtain that 
$$
\E \left[
2^{\sum_{j=1}^d \min\{N_{\ell,j},N'_{\ell,j}\}}
\right]
=
2^\ell\E\bigl[r^{\|N_\ell-N'_\ell\|_1}\bigr]
\le
C2^\ell{ L_{\ell,d-s+1,2^{-1/2}}(\pi^{\mathrm{pop}})}
$$
and
$$
\E \left[
2^{\sum_{j=1}^d \min\{N_{\ell,j},N'_{\ell,j}\}}
\right]
=
2^\ell\E\bigl[r^{\|N_\ell-N'_\ell\|_1}\bigr]
\ge
\frac{1}{C}2^\ell{ L_{\ell,d-s+1,2^{-1/2}}(\pi^{\mathrm{pop}})}
$$
which is the desired control of the cross-tree variance functional.

Therefore, substituting the bounds obtained in Steps~1--3 above into the general MSE
decomposition \eqref{eq:rf-mse} of Theorem~\ref{thm:mse} gives the stated
non-asymptotic MSE bound (\ref{eq:cart-mse}) for the population CART. This completes the proof of Theorem~\ref{thm:cart-mse}.

\subsection{Proof of Theorem \ref{thm:greedy-bellman-certificate}}

The proof of this theorem employs the Bellman certificate in Theorem~\ref{thm:marginal-value-bellman}. If greedy \textit{were} locally optimal,  every action it selects with positive probability would have to minimize the Bellman continuation value for the marginal terminal cost. Event $E$ violates exactly this support condition by providing a feasible action with strictly smaller continuation cost.

Let
$
h^{\rm gr}:=\Gamma_{\nu^{\rm gr}}^{\Phi,\Psi}.
$
Assume, \textit{toward contradiction}, that $\nu^{\rm gr}$ is a local minimizer of
$\mathcal J_{\ell,B}^{\Phi,\Psi}$ over $\mathfrak V_\ell$. 
Since $\pi^{\rm gr}$ induces $\nu^{\rm gr}$, 
it follows from Theorem~\ref{thm:marginal-value-bellman}(ii) that $\pi^{\rm gr}$ is optimal for the linear terminal-cost problem with terminal cost $h^{\rm gr}$.
In particular, it satisfies that 
\[
\operatorname{supp}
\pi^{\rm gr}_{t+1}(\cdot\mid \mathcal H_t,U_{t+1})
\subseteq
\arg\min_{a\in\mathcal A(U_{t+1})}
\mathcal V_{t+1}^{h^{\rm gr}}(N_t+e_a),
\qquad
\P_{\pi^{\rm gr}}\text{-a.s.}
\]
As in the proof of Lemma \ref{lem:linear-terminal-bellman}, we define
$$
D_t^{\rm gr}
:=
\sum_{a\in\mathcal A(U_{t+1})}
\pi^{\rm gr}_{t+1}(a\mid H_t,U_{t+1})
\mathcal V_{t+1}^{h^{\rm gr}}(N_t+e_a)
-
\min_{a\in\mathcal A(U_{t+1})}
\mathcal V_{t+1}^{h^{\rm gr}}(N_t+e_a).
$$
By construction, it holds that 
$
D_t^{\rm gr}\ge0.
$
Moreover, the Bellman support condition for an optimal policy in the linear
terminal-cost problem implies that 
$$
D_t^{\rm gr}=0
\qquad
\P_{\pi^{\rm gr}}\text{-a.s.}
$$

However, on event $E$, greedy rule assigns positive probability to an action
$j$ for which there exists another feasible action $k$ satisfying that 
$$
\mathcal V_{t+1}^{h^{\rm gr}}(N_t+e_k)
<
\mathcal V_{t+1}^{h^{\rm gr}}(N_t+e_j).
$$
Hence, action $j$ is not a minimizer of
$
a\mapsto
\mathcal V_{t+1}^{h^{\rm gr}}(N_t+e_a)
$
over $\mathcal A(U_{t+1})$. Since greedy puts positive probability on this
nonminimizing action, the convex combination defining the first term in
$D_t^{\rm gr}$ is strictly larger than the minimum. It follows that  
$D_t^{\rm gr}>0$ on event $E$.
Since $P_{\pi^{\rm gr}}(E)>0$, this contradicts
$D_t^{\rm gr}=0$ almost surely. Consequently, $\nu^{\rm gr}$ cannot be a local
minimizer of $\mathcal J_{\ell,B}^{\Phi,\Psi}$ over $\mathfrak V_\ell$.

Finally, since a global minimizer is necessarily a local minimizer, 
$\nu^{\rm gr}$ cannot be a global minimizer either.
This concludes the proof of Theorem \ref{thm:greedy-bellman-certificate}.

\subsection{Proof of Theorem~\ref{thm:marginal-value-bellman}}

The proof of this theorem will turn the nonlinear terminal-law objective into a linearized control problem. The main technical point is that the terminal-law feasible set is a compact convex polytope, so local variations along feasible line segments are sufficient. Specifically, the proof relies on the following Lemmas. 
\begin{lemma}[Bellman principle for linear terminal costs]
\label{lem:linear-terminal-bellman}
If $\pi^\star\in\Pi_\ell$ is optimal for the linear terminal-cost
problem $\inf_{\pi\in \Pi_{\ell}}\E_{\pi}[h(N_{\ell})]$, we have that for each $t=0,\ldots,\ell-1$,
$$
\operatorname{supp}
\pi^\star_{t+1}(\cdot\mid H_t,U_{t+1})
\subseteq
\arg\min_{j\in\mathcal A(U_{t+1})}
\mathcal V_{t+1}^h(N_t+e_j),
\qquad \P_{\pi^\star}\text{-a.s.}
$$
\end{lemma}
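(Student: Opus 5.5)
The plan is a backward-induction exchange argument on the finite horizon, using the dynamic-programming identity in Lemma~\ref{lem:cost-to-go-bellman}. I will first note that $\mathcal V_{t+1}^h(N_t+e_j)$ is the optimal cost-to-go after action $j$. For any policy $\pi$, define the per-step \emph{Bellman gap}
\[
D_t^\pi:=\sum_{a\in\mathcal A(U_{t+1})}\pi_{t+1}(a\mid H_t,U_{t+1})\,\mathcal V_{t+1}^h(N_t+e_a)-\min_{a\in\mathcal A(U_{t+1})}\mathcal V_{t+1}^h(N_t+e_a)\ \ge 0 .
\]
The key identity to establish is the performance-difference (telescoping) decomposition
\[
\E_\pi[h(N_\ell)]-\mathcal V_0^h(\mathbf 0)=\sum_{t=0}^{\ell-1}\E_\pi[D_t^\pi].
\]
I will obtain it by writing $h(N_\ell)-\mathcal V_0^h(N_0)=\sum_{t}\{\mathcal V_{t+1}^h(N_{t+1})-\mathcal V_t^h(N_t)\}$, valid since $\mathcal V_\ell^h=h$.

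For each term, I will condition on $\mathcal H_t$ and then on $U_{t+1}$. Since $U_{t+1}$ is independent of $\mathcal H_t$ with law $p_m$, the recursion \eqref{eq:linear-terminal-bellman} gives
\[
\mathcal V_t^h(N_t)=\E\Bigl[\min_{a\in\mathcal A(U_{t+1})}\mathcal V_{t+1}^h(N_t+e_a)\Bigm|\mathcal H_t\Bigr].
\]
Also, given $(\mathcal H_t,U_{t+1})$, $J_{t+1}$ has law $\pi_{t+1}(\cdot\mid H_t,U_{t+1})$, which is supported on $\mathcal A(U_{t+1})$ by the informative-respecting restriction. Hence
\[
\E\bigl[\mathcal V_{t+1}^h(N_{t+1})-\mathcal V_t^h(N_t)\mid\mathcal H_t\bigr]=\E[D_t^\pi\mid\mathcal H_t].
\]
All quantities are finite because the state space $\mathcal N_{\ell,d}$ is finite.

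With the decomposition in hand, optimality of $\pi^\star$ together with Lemma~\ref{lem:cost-to-go-bellman}, which gives $\inf_\pi\E_\pi[h(N_\ell)]=\mathcal V_0^h(\mathbf 0)$, forces $\sum_t\E_{\pi^\star}[D_t^{\pi^\star}]=0$. Since each $D_t^{\pi^\star}\ge0$, this yields $D_t^{\pi^\star}=0$ $\P_{\pi^\star}$-a.s. for every $t$. Finally, $D_t^{\pi^\star}=0$ means a probability vector on the finite set $\mathcal A(U_{t+1})$ averages $\mathcal V_{t+1}^h(N_t+e_a)$ to exactly its minimum. That happens only if every action carrying positive mass attains the minimum, which is the support inclusion claimed in the lemma.

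The main obstacle is making the conditioning in the telescoping step rigorous for history-dependent (non-Markov) policies. The point to check is that $\mathcal V_t^h$ depends only on $N_t$ while $\pi^\star$ may depend on all of $H_t$. This works because the mask $U_{t+1}$ is drawn independently of $\mathcal H_t$ and the transition $N_{t+1}=N_t+e_{J_{t+1}}$ is deterministic, so the Markov structure of the value recursion persists under general nonanticipative policies. I will also be careful that ``a.s.'' is stated with respect to the law of $(H_t,U_{t+1})$ under $\P_{\pi^\star}$: the gap may be nonzero on unreachable histories, which is harmless.
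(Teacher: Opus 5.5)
Your proposal is correct and is essentially the paper's argument. The paper shows that $M_t^h=\mathcal V_t^h(N_t)$ is a submartingale under every admissible policy, with Bellman gap satisfying $\E[D_t^h\mid\mathcal H_t]=\E[M_{t+1}^h\mid\mathcal H_t]-M_t^h$; this is exactly your telescoping decomposition. It then uses optimality together with Lemma~\ref{lem:cost-to-go-bellman} to force $D_t^h=0$ almost surely, and reads off the support inclusion from the fact that a convex combination equals the minimum only when every positively weighted value attains it.
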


Lemma~\ref{lem:linear-terminal-bellman} above is the finite-horizon control foundation of Theorem~\ref{thm:marginal-value-bellman}. It converts a terminal marginal cost into explicit state-mask action restrictions.

\begin{lemma}\label{lem:compact-convex}
The feasible terminal-law set
$\mathfrak V_\ell=\{\nu_{\pi,\ell}:\pi\in\Pi_\ell\}$
is a compact convex polytope.
\end{lemma}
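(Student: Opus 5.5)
We need to prove Lemma \ref{lem:compact-convex}: $\mathfrak V_\ell$ is a compact convex polytope. The plan is to identify terminal laws with \emph{occupation measures} of the finite-horizon masked-action process, and show that the set of these measures is cut out by finitely many linear equalities and inequalities. Its image under a linear map is then a polytope.

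\textbf{Step 1 (reduction to Markov policies).} Since the state space $\mathcal N_{t,d}$, the mask space $\mathcal U_m$ and the action sets $\mathcal A(u)$ are all finite, and masks are i.i.d.\ and independent of the past, a general nonanticipative policy $\pi\in\Pi_\ell$ can be replaced by a Markov count-state policy with the same state--mask--action marginals at every depth. We define
\[
\tilde\pi_{t+1}(j\mid \mathbf n,u):=\P_\pi(J_{t+1}=j\mid N_t=\mathbf n,U_{t+1}=u)
\]
whenever $\P_\pi(N_t=\mathbf n)>0$, and choose it arbitrarily (but feasibly) otherwise. A forward induction on $t$, which uses only that $U_{t+1}$ is independent of $\mathcal H_t$, shows that $\mathcal L_{\tilde\pi}(N_t)=\mathcal L_\pi(N_t)$ for all $t$. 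Hence $\mathfrak V_\ell$ equals the set of terminal laws induced by Markov policies. The informative-respecting restriction is a constraint on the supports of $\pi(\cdot\mid\mathbf n,u)$, and it is preserved by this construction.

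\textbf{Step 2 (linear description).} For $t=0,\dots,\ell-1$ we introduce variables $x_t(\mathbf n,u,j)\ge0$ with $\mathbf n\in\mathcal N_{t,d}$, $u\in\mathcal U_m$ and $j\in\mathcal A(u)$; these play the role of $\P(N_t=\mathbf n,U_{t+1}=u,J_{t+1}=j)$. They are subject to the constraints
\[
\sum_{j\in\mathcal A(u)}x_t(\mathbf n,u,j)=p_m(u)\,\mu_t(\mathbf n),\qquad
\mu_{t+1}(\mathbf n')=\sum_{\mathbf n,u,j:\ \mathbf n+e_j=\mathbf n'}x_t(\mathbf n,u,j),
\]
with $\mu_0=\delta_{\mathbf 0}$. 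Both constraints are linear in $(x,\mu)$. Every Markov policy yields a feasible point via $x_t=\mu_t\, p_m\,\pi$. Conversely, every feasible point yields a policy via $\pi(j\mid\mathbf n,u)=x_t(\mathbf n,u,j)/(p_m(u)\mu_t(\mathbf n))$ when $\mu_t(\mathbf n)>0$, and an arbitrary feasible choice otherwise; an induction shows that this policy reproduces $\mu_{t+1}$. Therefore $\mathfrak V_\ell=\{\mu_\ell\}$ is the image of this feasible set under the coordinate projection $(x,\mu)\mapsto\mu_\ell$.

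\textbf{Step 3 (conclusion).} The feasible set is a bounded polyhedron: nonnegativity together with total mass $1$ at each depth bounds it, and there are finitely many variables and linear constraints. It is therefore a polytope, that is, compact and convex. The linear image of a polytope is again a polytope (Minkowski--Weyl), so $\mathfrak V_\ell$ is a compact convex polytope.

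The main obstacle is Step 2's converse direction: we must handle states with $\mu_t(\mathbf n)=0$ and check that the induced policy stays in $\Pi_\ell$, in particular that it respects $\mathcal A(u)$ and informativeness. This is resolved by choosing a fixed feasible action on null states, which leaves all occupation measures unchanged.
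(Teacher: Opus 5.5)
Your proposal is correct and follows essentially the same route as the paper. Both arguments write the state--mask--action occupation probabilities as a bounded set cut out by finitely many linear constraints (initial condition, mask-marginal constraint, flow constraint, nonnegativity), realize any feasible point by a Markov randomized policy, and project onto the depth-$\ell$ marginals. The only difference is your Step 1, which is harmless but redundant: the paper skips it because every nonanticipative policy already satisfies these constraints, since they involve only $\P(N_t=\mathbf n, U_{t+1}=u, J_{t+1}=j)$ and $U_{t+1}$ is independent of $\mathcal H_t$.
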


Lemma~\ref{lem:compact-convex} above ensures that the first-order variations along feasible terminal-law directions are well-defined and 
the local terminal-law optimality can be tested by feasible line segments. 

We are now ready to prove Theorem \ref{thm:marginal-value-bellman} below.

\paragraph{Proof of part \emph{(i)}.}  Let $\nu,\widetilde\nu\in\mathfrak V_\ell$, and define the signed measure
$
\eta:=\widetilde\nu-\nu.
$
Since $\mathfrak V_\ell$ is convex, it holds that 
$$
\nu_\varepsilon:=(1-\varepsilon)\nu+\varepsilon\widetilde\nu
=
\nu+\varepsilon\eta
\in\mathfrak V_\ell,
\qquad \varepsilon\in[0,1].
$$
Using \eqref{eq:terminal-law-objective}, we can derive the following first-variation expansion
\begin{align*}
\mathcal J_{\ell,B}^{\Phi,\Psi}(\nu_\varepsilon)
&-
\mathcal J_{\ell,B}^{\Phi,\Psi}(\nu)
=
\frac{\varepsilon}{B}
\int_{\mathcal N_{\ell,d}}\Phi(\mathbf n)\,d\eta(\mathbf n)+
\frac{B-1}{B}
\bigg[
\varepsilon
\int_{\mathcal N_{\ell,d}}\int_{\mathcal N_{\ell,d}}
\Psi(\mathbf n,\mathbf n')\,d\eta(\mathbf n)\,d\nu(\mathbf n')\\
&\quad+
\varepsilon
\int_{\mathcal N_{\ell,d}}\int_{\mathcal N_{\ell,d}}
\Psi(\mathbf n,\mathbf n')\,d\nu(\mathbf n)\,d\eta(\mathbf n')+
\varepsilon^2
\int_{\mathcal N_{\ell,d}}\int_{\mathcal N_{\ell,d}}
\Psi(\mathbf n,\mathbf n')\,d\eta(\mathbf n)\,d\eta(\mathbf n')
\bigg].
\end{align*}
Interchanging the dummy variables in the second first-order double integral leads to 
$$
\int\int \Psi(\mathbf n,\mathbf n')\,d\nu(\mathbf n)\,d\eta(\mathbf n')
=
\int\int \Psi(\mathbf n',\mathbf n)\,d\nu(\mathbf n')\,d\eta(\mathbf n).
$$
Then it follows that 
\begin{align}
\mathcal J_{\ell,B}^{\Phi,\Psi}(\nu_\varepsilon)
-
\mathcal J_{\ell,B}^{\Phi,\Psi}(\nu)
&=
\varepsilon
\int_{\mathcal N_{\ell,d}}
\Gamma_{\nu}^{\Phi,\Psi}(\mathbf n)\,d\eta(\mathbf n)
\nonumber\\
&\quad+
\varepsilon^2
\frac{B-1}{B}
\int_{\mathcal N_{\ell,d}}\int_{\mathcal N_{\ell,d}}
\Psi(\mathbf n,\mathbf n')\,d\eta(\mathbf n)\,d\eta(\mathbf n').
\label{eq:first-variation-expansion}
\end{align}
Hence, it holds that 
$$
\left.
\frac{d}{d\varepsilon}
\mathcal J_{\ell,B}^{\Phi,\Psi}
\bigl((1-\varepsilon)\nu+\varepsilon\widetilde\nu\bigr)
\right|_{\varepsilon=0+}
=
\int_{\mathcal N_{\ell,d}}
\Gamma_{\nu}^{\Phi,\Psi}(\mathbf n)\,d(\widetilde\nu-\nu)(\mathbf n).
$$

Now assume that $\nu^\star$ is a local minimizer of
$\mathcal J_{\ell,B}^{\Phi,\Psi}$ over $\mathfrak V_\ell$. For any
$\nu\in\mathfrak V_\ell$, the line segment
$
\nu_\varepsilon=(1-\varepsilon)\nu^\star+\varepsilon\nu
$
lies in $\mathfrak V_\ell$. For all sufficiently small $\varepsilon>0$,
the local optimality entails that 
$$
\mathcal J_{\ell,B}^{\Phi,\Psi}(\nu_\varepsilon)
\ge
\mathcal J_{\ell,B}^{\Phi,\Psi}(\nu^\star).
$$
Taking the right derivative at $\varepsilon=0$ and using
\eqref{eq:first-variation-expansion}, we can deduce that 
$$
\int_{\mathcal N_{\ell,d}}
\Gamma_{\nu^\star}^{\Phi,\Psi}(\mathbf n)\,d(\nu-\nu^\star)(\mathbf n)
\ge 0,
\qquad \nu\in\mathfrak V_\ell.
$$
Equivalently, it holds that 
$$
\int_{\mathcal N_{\ell,d}}
\Gamma_{\nu^\star}^{\Phi,\Psi}(\mathbf n)\,d\nu^\star(\mathbf n)
\le
\int_{\mathcal N_{\ell,d}}
\Gamma_{\nu^\star}^{\Phi,\Psi}(\mathbf n)\,d\nu(\mathbf n),
\qquad \nu\in\mathfrak V_\ell.
$$
This establishes part \emph{(i)}.

\paragraph{Proof of part \emph{(ii)}.}
Let us take
$
h=\Gamma_{\nu^\star}^{\Phi,\Psi}
$ in Lemma~\ref{lem:linear-terminal-bellman}. 
If $\pi^\star\in\Pi_\ell$ satisfies that 
$\nu_{\pi^\star,\ell}=\nu^\star$, it follows from part (i) that 
$$
\mathbb E_{\pi^\star}
\left[
\Gamma_{\nu^\star}^{\Phi,\Psi}(N_\ell)
\right]
=
\inf_{\nu\in\mathfrak V_\ell}
\int_{\mathcal N_{\ell,d}}
\Gamma_{\nu^\star}^{\Phi,\Psi}(\mathbf n)\,d\nu(\mathbf n) 
=  \inf_{\pi\in\Pi_\ell}
\mathbb E_\pi
\left[
\Gamma_{\nu^\star}^{\Phi,\Psi}(N_\ell)
\right],
$$
since each $\nu\in\mathfrak V_\ell$ is induced by some policy in
$\Pi_\ell$. 
Consequently, $\pi^\star$ is optimal for the linear terminal-cost problem with
terminal cost $h=\Gamma_{\nu^\star}^{\Phi,\Psi}$. An application of Lemma~\ref{lem:linear-terminal-bellman} gives that 
$$
\operatorname{supp}
\pi^\star_{t+1}(\cdot\mid \mathcal H_t,U_{t+1})
\subseteq
\arg\min_{j\in\mathcal A(U_{t+1})}
\mathcal V_{t+1}^{\Gamma_{\nu^\star}^{\Phi,\Psi}}(N_t+e_j),
\qquad \P_{\pi^\star}\text{-a.s.}
$$
This establishes the marginal-value Bellman condition.

\paragraph{Proof of part \emph{(iii)}.}
Assume that $\pi^\star$ satisfies \eqref{eq:marginal-bellman-support} for
all $t=0,\ldots,\ell-1$, and let
$\nu^\star=\nu_{\pi^\star,\ell}$. In view of the Bellman characterization in part \emph{(ii)}, 
policy $\pi^\star$ must be optimal for the linear terminal-cost problem
$\inf_{\pi\in \Pi_\ell} \E_\pi [\Gamma_{\nu^\star}^{\Phi,\Psi}(N_\ell)]$. 
It then follows that
$$
\int_{\mathcal N_{\ell,d}}
\Gamma_{\nu^\star}^{\Phi,\Psi}(\mathbf n)\,d(\nu-\nu^\star)(\mathbf n)
\ge0,
\qquad \nu\in\mathfrak V_\ell.
$$

Since $\mathcal J_{\ell,B}^{\Phi,\Psi}$ is convex on
$\mathfrak V_\ell$, its first variation gives a global supporting
hyperplane on $\mathfrak V_\ell$. In fact, let us fix
$\nu\in\mathfrak V_\ell$ and set
$
\eta:=\nu-\nu^\star.
$
For $\theta\in[0,1]$, denote by 
$
\nu_\theta:=\nu^\star+\theta\eta
=
(1-\theta)\nu^\star+\theta\nu.
$
Since $\mathfrak V_\ell$ is convex, we have that $\nu_\theta\in\mathfrak V_\ell$ for all
$\theta\in[0,1]$. Let
$
g(\theta):=\mathcal J_{\ell,B}^{\Phi,\Psi}(\nu_\theta).
$
Then the convexity of $\mathcal J_{\ell,B}^{\Phi,\Psi}$ on $\mathfrak V_\ell$
implies that $g$ is convex on $[0,1]$. It follows that 
$$
g(1)-g(0)\ge g'_+(0).
$$
By the same calculation in part \emph{(i)}, we have 
$$
g'_+(0)
=
\int_{\mathcal N_{\ell,d}}
\Gamma_{\nu^\star}^{\Phi,\Psi}(\mathbf n)\,d(\nu-\nu^\star)(\mathbf n) \geq 0,
$$
which implies
$$
\mathcal J_{\ell,B}^{\Phi,\Psi}(\nu)
\ge
\mathcal J_{\ell,B}^{\Phi,\Psi}(\nu^\star),
\qquad \nu\in\mathfrak V_\ell.
$$
As a result, $\nu^\star$ is globally optimal over $\mathfrak V_\ell$. Hence,
policy $\pi^\star$ is globally optimal over $\Pi_\ell$. This establishes part \emph{(iii)}.

It remains to verify the stated sufficient condition for convexity. Notice that
$$
\mathcal J_{\ell,B}^{\Phi,\Psi}(\nu)
=
\frac{1}{B}\int \Phi(\mathbf n)\,d\nu(\mathbf n)
+
\frac{B-1}{B}
\int\int Q^\Psi(\mathbf n,\mathbf n')\,d\nu(\mathbf n)\,d\nu(\mathbf n').
$$
For any $\nu_0,\nu_1\in\mathfrak V_\ell$, let
$
a:=\nu_1-\nu_0,
$
and
$
\nu_\theta:=(1-\theta)\nu_0+\theta\nu_1$,
$ \theta\in[0,1].
$ 
According to the discussion in part \emph{(i)}, the second derivative along this line segment is given by 
$$
\frac{d^2}{d\theta^2}
\mathcal J_{\ell,B}^{\Phi,\Psi}(\nu_\theta)
=
2\frac{B-1}{B}
\sum_{\mathbf n,\mathbf n'\in\mathcal N_{\ell,d}}
a(\mathbf n)Q^\Psi(\mathbf n,\mathbf n')a(\mathbf n').
$$
By the positive semi-definiteness condition,
this second derivative is nonnegative for every pair of $\nu_0$ and $\nu_1$. Therefore, $\mathcal J_{\ell,B}^{\Phi,\Psi}$ is convex on $\mathfrak V_\ell$. 
This completes the proof of Theorem~\ref{thm:marginal-value-bellman}.

\section{Additional proofs and technical details} \label{new.sec.addtechdeta}

\subsection{Proof of Proposition~\ref{prop:cart-onestep}} \label{proof.prop:cart-onestep}

The proof of this proposition will be some direct algebraic calculations, but it is important because it isolates the two ingredients used later: an exact quadratic increment identity and a deterministic bound on the one-step movement of $W_n$. The identity comes from the centered update of $\Delta_n$, while the bound follows from the nonexpansiveness of the Euclidean norm.

At informative time $n$, the vector of informative counts updates as
$$
Z_{n+1} = Z_n + e_{J_{n+1}},
$$
where $J_{n+1}\in\{1,\dots,s\}$ is the coordinate selected by the greedy rule.
It follows from the definition that 
$$
\Delta_n = Z_n - \frac{n}{s}\mathbf 1,
\qquad
\Delta_{n+1}
=
Z_{n+1}-\frac{n+1}{s}\mathbf 1
=
\Delta_n + e_{J_{n+1}}-\frac1s\mathbf 1.
$$
Squaring the centered update and expanding give that 
\begin{align*}
V_{n+1}
&=
\|\Delta_{n+1}\|_2^2
=
\|\Delta_n\|_2^2
+2\langle\Delta_n,e_{J_{n+1}}-\tfrac1s\mathbf 1\rangle
+\|e_{J_{n+1}}-\tfrac1s\mathbf 1\|_2^2.
\end{align*}
Since $\sum_{j=1}^s\Delta_{n,j}=0$, it holds that $\langle\Delta_n,\mathbf 1\rangle=0$, and thus
$$
\langle\Delta_n,e_{J_{n+1}}-\tfrac1s\mathbf 1\rangle
=
\Delta_{n,J_{n+1}}.
$$
Moreover, we can deduce that 
$$
\|e_{J_{n+1}}-\tfrac1s\mathbf 1\|_2^2
=
1-\frac1s.
$$
This establishes \eqref{eq:one-step}.

For the increment bound, it follows from the inequality
$|\|x\|_2-\|y\|_2|\le\|x-y\|_2$ that 
$$
|W_{n+1}-W_n|
\le
\|\Delta_{n+1}-\Delta_n\|_2
=
\|e_{J_{n+1}}-\tfrac1s\mathbf 1\|_2
=
\sqrt{1-\frac1s},
$$
which yields \eqref{eq:bounded-jump}.
This completes the proof of Proposition~\ref{prop:cart-onestep}.

\subsection{Proof of Proposition \ref{prop:majorization-local}} \label{proof.prop:majorization-local}


Let us write $z:=\widetilde Z_n$, $A:=\mathcal A_{n+1}$, and
$
c:=\frac{n+1}{s}.
$
Then it holds that 
$
X_n(j)=z+e_j-c\mathbf 1_s.
$
Since subtracting the same multiple of $\mathbf 1_s$ from two vectors
preserves majorization, it suffices to compare
$
z+e_i$
and $
z+e_j.
$
Denote by $M_n(A):=\argmin_{k\in A} z_k$.
Fix $i\in M_n(A)$ and $j\in A$.  The case $i=j$ is trival, so we assume 
$i\neq j$. Since $i\in M_n(A)$, we have
$
z_i\le z_j.
$
Let
$
y:=z+e_j$ 
and $
x:=z+e_i.
$
Then 
$
y_j-y_i=z_j+1-z_i\ge1.
$
Let $P_{ij}$ be the permutation matrix that swaps coordinates $i$ and $j$,
and define
$
\lambda:=1/d\in(0,1].
$
Then it holds that 
$
x=(1-\lambda)y+\lambda P_{ij}y.
$
Indeed, for coordinate $i$,
$$
(1-\lambda)y_i+\lambda y_j
=
y_i+\lambda(y_j-y_i)
=
z_i+1
=
x_i,
$$
and for coordinate $j$,
$$
(1-\lambda)y_j+\lambda y_i
=
y_j-\lambda(y_j-y_i)
=
z_j
=
x_j.
$$
All other coordinates are unchanged. Hence, $x\prec y$ and thus 
$
X_n(i)\prec X_n(j).
$
The Schur-convexity then gives 
$
\Psi(X_n(i))\le\Psi(X_n(j)).
$

It remains to establish the exact argmin identity for symmetric strictly
Schur-convex $\Psi$. If $i,k\in M_n(A)$, it holds that $z_i=z_k$, so
$X_n(i)$ and $X_n(k)$ are permutations of one another. By symmetry, it follows that  
$
\Psi(X_n(i))=\Psi(X_n(k)).
$
Now let us take $j\notin M_n(A)$, and choose any $i\in M_n(A)$. Then we have 
$
z_i<z_j.
$
The preceding argument implies 
$
X_n(i)\prec X_n(j).
$
Moreover, $X_n(i)$ is not a permutation of $X_n(j)$ since $z_i < z_j$. Since $\Psi$ is symmetric strictly
Schur-convex, it holds that 
$
\Psi(X_n(i))<\Psi(X_n(j)).
$
Therefore, exactly the coordinates with minimal $\widetilde Z_{n,j}$ minimize
$\Psi(X_n(j))$ over $j\in A$, which yields $$
\argmin_{j\in A}\Psi(X_n(j))
=
\argmin_{j\in A}\widetilde Z_{n,j}.
$$
This concludes the proof of Proposition~\ref{prop:majorization-local}.



\subsection{Proof of Proposition \ref{prop:benchmark}} \label{proof.prop:benchmark}

The proof of this proposition will highlight that the benchmark policy has no stabilizing feedback, in contrast to the population CART. The law of large numbers gives the first-order balance, but the coordinate imbalances remain martingale fluctuations of order $\sqrt n$, which is enough to make every positive exponential moment of $W_n$ diverge with $n$.

Recall that $Z_n$ is a sum of i.i.d. random vectors, so the convergence $Z_n/n \to 1/s \1$ follows directly from the law of large numbers. It remains to establish $\E_{\pi_{\ex}}\bigl[\Delta_{n,J_{n+1}}\mid \F_n\bigr]=0$ and Equation~\eqref{eq:no-exp-compression}. 
Observe that for the benchmark process, it holds that 
$$
\E \left[\Delta_{n+1,j}-\Delta_{n,j}\mid \mathcal F_n\right]
=
\E \left[\1\{J_{n+1}=j\}\right]-\frac1s
=
\frac1s-\frac1s
=
0,
\qquad j\in S.
$$
Hence, each coordinate imbalance is a martingale. Consequently, we have that 
$$
\E \left[\Delta_{n,J_{n+1}}\mid \mathcal F_n\right]
=
\sum_{j\in S}\frac1s\Delta_{n,j}
=
0,
$$
which proves $\E_{\pi_{\ex}}\bigl[\Delta_{n,J_{n+1}}\mid \F_n\bigr]=0$.

Let us recall that
$$
Z_n = (Z_{n,1},\dots,Z_{n,s})
\sim
\mathrm{Multinomial} \left(n,\frac1s,\dots,\frac1s\right).
$$
In particular, it holds that 
$$
Z_{n,1}\sim \mathrm{Binomial} \left(n,\frac1s\right).
$$
Meanwhile, we also have that 
$$
W_n
=
\sqrt{\sum_{j=1}^s (Z_{n,j}-n/s)^2}
 \ge 
|Z_{n,1}-n/s|.
$$
Thus, it follows that for any $\eta>0$,
\begin{equation}\label{eq:bench-lower}
\E e^{\eta W_n}
 \ge 
\E e^{\eta |Z_{n,1}-n/s|}.
\end{equation}

Denote by $X_n:=Z_{n,1}-n/s$.
Since $Z_{n,1}$ is binomial, its moment generating function is explicit:
for any $\lambda\in\R$,
$$
\E e^{\lambda Z_{n,1}}
=
\Bigl(1-\tfrac1s+\tfrac1s e^\lambda\Bigr)^n.
$$
Then we can deduce that 
$$
\E e^{\lambda X_n}
=
\exp \left(-\frac{\lambda n}{s}\right)
\Bigl(1-\tfrac1s+\tfrac1s e^\lambda\Bigr)^n
=
\exp \bigl(n\psi(\lambda)\bigr),
$$
where
$$
\psi(\lambda)
:=
\log \Bigl(1-\tfrac1s+\tfrac1s e^\lambda\Bigr)
-\frac{\lambda}{s}.
$$
A direct calculation shows that $\psi(0)=\psi'(0)=0$ and
$$
\psi''(0)=\frac{s-1}{s^2}>0.
$$
Hence, there exist some $\lambda_0>0$ and $c>0$ such that
$$
\psi(\lambda)\ge c\lambda^2,
\qquad |\lambda|\le\lambda_0.
$$

Let us fix any $\eta>0$ and choose $\lambda=\eta\wedge\lambda_0$.
With the aid of the inequality
$$
e^{\eta|x|}\ge \tfrac12\bigl(e^{\lambda x}+e^{-\lambda x}\bigr),
$$
we can obtain that 
$$
\E e^{\eta |X_n|}
 \ge 
\frac12\Bigl(\E e^{\lambda X_n}+\E e^{-\lambda X_n}\Bigr)
\ge
\frac12 \exp \bigl(n\psi(\lambda)\bigr)
 \ge 
\frac12 \exp(c\lambda^2 n).
$$
Thus, combining this with \eqref{eq:bench-lower} gives that 
$$
\E e^{\eta W_n}
 \ge 
\frac12\exp(c\lambda^2 n)
\to \infty,\qquad\text{as}~n\to\infty.
$$
Since $\eta>0$ is arbitrary, we establish Equation~\eqref{eq:no-exp-compression}. This completes the proof of Proposition \ref{prop:benchmark}.

\subsection{Proof of Proposition~\ref{prop:poisson-functionals}} \label{proof.prop:poisson-functionals}

The proof of this proposition will convert the relevant binomial and multinomial exponential functionals into the Fourier--Poisson-kernel integrals, exploiting the auxiliary lemmas presented in Appendix~\ref{sec:auxiliary}. These representations separate the dominant exponential factor from a polynomial attenuation term, which is then evaluated by the Laplace method near the phase-aligned maximizer. In particular, the interior-probability conditions in Proposition~\ref{prop:poisson-functionals} are needed for these Laplace asymptotics instead of for the exact integral identities themselves. When parameters such as $\alpha$ and $\bp$ lie in the interior, the Fourier integral has a unique nondegenerate maximizer after the phase invariance is removed, so Lemma~\ref{lem:laplace-multi} yields the stated polynomial order. The boundary cases may still admit exact integral representations, but their polynomial prefactors can change since the local curvature degenerates or the maximizer lies on a lower-dimensional face. As a basic example, if $N\sim\mathrm{Binomial}(\ell,p)$, we have that for any $r\in(0,1)$, an application of Lemma~\ref{lem:binom-pgf} with $z:=r$ gives that $\mathbb E[r^N]=(1-p+pr)^\ell$.

\subsubsection{Proof of Equation~\eqref{eq:max-binomial-functional}}
\label{app:proof-lem2}

Let $N_\ell,N_\ell'\stackrel{\mathrm{i.i.d.}}{\sim}\mathrm{Binomial}(\ell,p)$ and fix $r\in(0,1)$.
With the aid of the elementary identity $\max\{x,y\}=\frac{x+y+|x-y|}{2}$, we can show that 
\begin{equation}\label{eq:lem2-start-r}
r^{\max\{N_\ell,N_\ell'\}}
=
r^{(N_\ell+N_\ell')/2}r^{|N_\ell-N_\ell'|/2}.
\end{equation}
An application of Lemma~\ref{lem:poisson-fourier} with parameter $\sqrt r\in(0,1)$ leads to 
$$
r^{|N_\ell-N_\ell'|/2}
=
(\sqrt r)^{|N_\ell-N_\ell'|}
=
\int_{-\pi}^{\pi}e^{\mathrm i\theta(N_\ell-N_\ell')}\mu_{\sqrt r}(\theta)d\theta .
$$
Substituting this into \eqref{eq:lem2-start-r} and applying Fubini's theorem, we can deduce that 
\begin{align}
\mathbb E \left[r^{\max\{N_\ell,N_\ell'\}}\right]
&=
\int_{-\pi}^{\pi}
\mathbb E \left[r^{(N_\ell+N_\ell')/2}e^{\mathrm i\theta(N_\ell-N_\ell')}\right]
\mu_{\sqrt r}(\theta)d\theta \notag\\
&=
\int_{-\pi}^{\pi}
\mathbb E \left[r^{N_\ell/2}e^{\mathrm i\theta N_\ell}\right]
\mathbb E \left[r^{N_\ell'/2}e^{-\mathrm i\theta N_\ell'}\right]
\mu_{\sqrt r}(\theta)d\theta \notag\\
&=
\int_{-\pi}^{\pi}
\Bigl|
\mathbb E \left[(\sqrt re^{\mathrm i\theta})^{N_\ell}\right]
\Bigr|^2
\mu_{\sqrt r}(\theta)d\theta ,
\label{eq:lem2-factor-r}
\end{align}
where we have used that the second expectation is the complex conjugate of the first.
By resorting to Lemma~\ref{lem:binom-pgf} with $z=\sqrt re^{\mathrm i\theta}$, it holds that 
$$
\mathbb E \left[(\sqrt re^{\mathrm i\theta})^{N_\ell}\right]
=
\left(1-p+p\sqrt re^{\mathrm i\theta}\right)^{\ell}.
$$
Then plugging this into \eqref{eq:lem2-factor-r} gives that 
\begin{equation}\label{eq:lem2-int-raw-r}
\mathbb E \left[r^{\max\{N_\ell,N_\ell'\}}\right]
=
\int_{-\pi}^{\pi}
\left|1-p+p\sqrt re^{\mathrm i\theta}\right|^{2\ell}
\mu_{\sqrt r}(\theta)d\theta .
\end{equation}

\medskip

Denote by 
$$
\alpha:=\frac{p\sqrt{r}}{1-p+p\sqrt{r}}\in(0,1).
$$
A direct algebraic rearrangement yields that for each $\theta\in[-\pi,\pi]$,
$$
1-p+p\sqrt{r}e^{\mathrm i\theta}
=
(1-p+p\sqrt{r})\bigl((1-\alpha)+\alpha e^{\mathrm i\theta}\bigr).
$$
Consequently, it holds that 
$$
\left|1-p+p \sqrt{r}e^{\mathrm i\theta}\right|^{2\ell}
=
(1-p+p\sqrt{r})^{2\ell}
\left|(1-\alpha)+\alpha e^{\mathrm i\theta}\right|^{2\ell}.
$$
Inserting this factorization into \eqref{eq:lem2-int-raw-r} (and reparametrizing the
Poisson-kernel measure accordingly), we can obtain the asserted representation
$$
\mathbb E \left[r^{\max\{N_\ell,N_\ell'\}}\right]
=
(1-p+p\sqrt{r})^{2\ell}F_{\ell,r}(\alpha),
$$
where
$$
F_{\ell,r}(\alpha)
:=
\mathbb E_{\Xi\sim\mu_{\sqrt{r}}}
\Bigl[\bigl|(1-\alpha)+\alpha e^{\mathrm i\Xi}\bigr|^{2\ell}\Bigr],
\qquad \alpha\in[0,1].
$$

\medskip

We next establish the polynomial order of $F_{\ell,r}(\alpha)$ for fixed $\alpha\in(0,1)$ and $r\in(0,1)$.
Let us write
\begin{equation}\label{eq:lem2-modulus}
\bigl|(1-\alpha)+\alpha e^{\mathrm i\theta}\bigr|^{2}
=
(1-\alpha)^2+\alpha^2+2\alpha(1-\alpha)\cos\theta
=
1-2\alpha(1-\alpha)\bigl(1-\cos\theta\bigr).
\end{equation}
Denote by $c_\alpha:=2\alpha(1-\alpha)\in(0,1/2]$, and define on $D:=(-\pi,\pi)$
$$
g(\theta)
:=
\log \Big(1-c_\alpha\bigl(1-\cos\theta\bigr)\Big),
\qquad
h(\theta):=\mu_{\sqrt{r}}(\theta).
$$
Then we have that 
\begin{equation}\label{eq:lem2-laplace-form}
F_{\ell,r}(\alpha)
=
\int_{-\pi}^{\pi}\exp\big(\ell g(\theta)\big)h(\theta)d\theta.
\end{equation}

We will verify the conditions of Lemma~\ref{lem:laplace-multi} with $m=1$, $D=(-\pi,\pi)$, and $x_0=0$.
First, notice that $g(0)=0$ and $g(\theta)<0$ for $\theta\neq 0$, and thus $0$ is the unique global maximizer of $g$ on $D$.
Moreover, $g$ is $C^3$ on $D$, $\nabla g(0)=g'(0)=0$, and a Taylor expansion of function $1-\cos\theta$ at $0$ yields that 
$$
g(\theta)=-\frac{c_\alpha}{2}\theta^2+O(\theta^4),
\qquad \theta\to0,
$$
so the Hessian $\mathrm{Hess}g(0)=g''(0)=-c_\alpha<0$ is negative. 
In particular, there exist some constants $c_0,c_1>0$ and $r_0\in(0,\pi)$ such that for all $|\theta|\le r_0$,
$$
-c_1\theta^2\le g(\theta)\le -c_0\theta^2.
$$
Since $g$ is continuous and strictly negative on compact set $\{|\theta|\in[r_0,\pi)\}$, there exists some $\delta_0>0$ such that
$\sup_{|\theta|\ge r_0}g(\theta)\le -\delta_0$.
Further, the Poisson-kernel density $\mu_{\sqrt{r}}$ is continuous and strictly positive on $[-\pi,\pi]$, and thus $h$ is continuous on $D$ with $h(0)>0$.

Therefore, by invoking Lemma~\ref{lem:laplace-multi} to \eqref{eq:lem2-laplace-form}, we can obtain that 
$$
F_{\ell,r}(\alpha)
\sim
\exp \big(\ell g(0)\big)
\left(\frac{2\pi}{\ell}\right)^{1/2}
\frac{h(0)}{\sqrt{-\mathrm{Hess}g(0)}}
=
\left(\frac{2\pi}{\ell}\right)^{1/2}
\frac{\mu_{\sqrt{r}}(0)}{\sqrt{c_\alpha}},
\qquad \ell\to\infty.
$$
In particular, it follows that for each fixed $\alpha\in(0,1)$ and $r\in(0,1)$,
$$
F_{\ell,r}(\alpha)\asymp \ell^{-1/2},
\qquad \ell\to\infty,
$$
which completes the proof of Equation~\eqref{eq:max-binomial-functional}.

\subsubsection{Proof of Equation~\eqref{eq:min-multinomial-functional}}
\label{app:proof-lem3}

Let $N,N'\stackrel{\mathrm{i.i.d.}}{\sim}\mathrm{Multinomial}(\ell,\bp)$ be independent with 
$\bp=(p_1,\dots,p_d)\in \mathfrak S_{d-1}$, and fix $r\in(0,1)$. It follows from the identity $\min\{x,y\}=\frac{x+y}{2}-\frac{|x-y|}{2}$ and the facts
$\sum_{j=1}^d N_j=\sum_{j=1}^d N'_j=\ell$ that 
\begin{align}
\sum_{j=1}^d \min\{N_j,N'_j\}
&=
\frac12\sum_{j=1}^d (N_j+N'_j)-\frac12\sum_{j=1}^d |N_j-N'_j|
=
\ell-\frac12\|N-N'\|_1.
\label{eq:min-to-l1-r}
\end{align}
Hence, it holds that 
\begin{equation}
\label{eq:mult-start-r}
\mathbb E \left[r^{\sum_{j=1}^d \min\{N_j,N'_j\}}\right]
=
r^\ell\mathbb E \left[r^{-\frac12\|N-N'\|_1}\right]
=
r^\ell\mathbb E \left[(\sqrt r)^{-\|N-N'\|_1}\right].
\end{equation}

Let us first derive a Poisson-kernel representation of the expectation on the right-hand side of \eqref{eq:mult-start-r} above.
We can apply Lemma~\ref{lem:l1-poisson} with $\rho=\sqrt r$ and $X=N-N'$ to represent the
$\ell_1$-damping factor
\begin{equation}
\mathbb E \left[(\sqrt r)^{\|N-N'\|_1}\right]
=
\int_{[-\pi,\pi]^d}
\mathbb E \left[e^{\mathrm i\langle\theta,N-N'\rangle}\right]
\prod_{j=1}^d \mu_{\sqrt r}(\theta_j)d\theta_j.
\label{eq:l1-poisson-apply-r}
\end{equation}
In view of the independence of $N$ and $N'$, we have that 
$$
\mathbb E \left[e^{\mathrm i\langle\theta,N-N'\rangle}\right]
=
\mathbb E \left[e^{\mathrm i\langle\theta,N\rangle}\right]
\mathbb E \left[e^{-\mathrm i\langle\theta,N'\rangle}\right]
=
\left|\mathbb E \left[e^{\mathrm i\langle\theta,N\rangle}\right]\right|^2.
$$
An application of Lemma~\ref{lem:mult-cf} leads to 
$$
\left|\mathbb E \left[e^{\mathrm i\langle\theta,N\rangle}\right]\right|^2
=
\left|\sum_{j=1}^d p_j e^{\mathrm i\theta_j}\right|^{2\ell}.
$$
Substituting this into \eqref{eq:l1-poisson-apply-r} and then into \eqref{eq:mult-start-r} gives that 
\begin{equation}
\mathbb E \left[r^{\sum_{j=1}^d \min\{N_j,N'_j\}}\right]=
r^\ell
\int_{[-\pi,\pi]^d}
\left|\sum_{j=1}^d p_j e^{\mathrm i\theta_j}\right|^{2\ell}
\prod_{j=1}^d \mu_{\sqrt r}(\theta_j)d\theta_j 
= r^\ell L_{\ell,d,r}(\bp),
\end{equation}
where $L_{\ell,d,r}$ is exactly the functional stated in \eqref{eq:Lldr}.

We next investigate the asymptotic order of $L_{\ell,d,r}(\bp)$ through the multivariate Laplace approximation.
Assume that $\bp$ lies in the interior of $\mathfrak S_{d-1}$, so that $p_j>0$ for all $j$.
Let us write
$$
S(\bp,\bm\theta):=\sum_{j=1}^d p_j e^{\mathrm i\theta_j},
\qquad \bm\theta\in[-\pi,\pi]^d,
$$
so that
$$
L_{\ell,d,r}(\bp)
=
\int_{[-\pi,\pi]^d} |S(\bp,\bm\theta)|^{2\ell}
\prod_{j=1}^d \mu_{\sqrt r}(\theta_j)d\theta_j.
$$
Since $|S(\bp,\bm\theta)|\le \sum_{j=1}^d p_j = 1$ with equality \textit{if and only if} $\theta_1=\cdots=\theta_d\ (\mathrm{mod}\ 2\pi)$, the integrand is maximized
precisely along the diagonal $\{\theta_1=\cdots=\theta_d\}$.
To remove this global phase invariance, we fix the differences
$$
\varphi_j:=\theta_{j+1}-\theta_1,\qquad j=1,\dots,d-1,
$$
and write $\varphi=(\varphi_1,\dots,\varphi_{d-1})\in[-\pi,\pi]^{d-1}$.
Then it holds that 
$$
|S(\bp,\bm\theta)|
=
\left|p_1 + \sum_{j=2}^d p_j e^{\mathrm i\varphi_{j-1}}\right|
=: \exp \Big(\frac12g(\varphi)\Big),
$$
where we define
$$
g(\varphi)
:=
\log \left|p_1 + \sum_{j=2}^d p_j e^{\mathrm i\varphi_{j-1}}\right|^{2},
\qquad \varphi\in(-\pi,\pi)^{d-1}.
$$

Moreover, the Poisson-kernel density $\mu_{\sqrt r}$ is continuous and strictly
positive on $[-\pi,\pi]$, and thus the product density
$$
h(\varphi)
:=
\int_{-\pi}^{\pi}
\prod_{j=1}^d \mu_{\sqrt r}(\theta_j)d\theta_1
\quad\text{(with }\theta_{j+1}=\theta_1+\varphi_j\text{ for }j\ge1\text{)}
$$
is continuous and satisfies that $h(0)>0$.
Consequently, $L_{\ell,d,r}(\bp)$ can be written in the form
\begin{equation}
\label{eq:Lell-laplace-form}
L_{\ell,d,r}(\bp)
=
\int_{D} \exp \big(\ell g(\varphi)\big)h(\varphi)d\varphi,
\qquad D:=(-\pi,\pi)^{d-1}.
\end{equation}

We now verify the conditions of Lemma~\ref{lem:laplace-multi} for \eqref{eq:Lell-laplace-form}
with dimensionality $m=d-1$ and maximizer $\varphi_0=0$.
Note that $\varphi=0$ is the unique maximizer of $g$ on $D$ since $|p_1+\sum_{j=2}^d p_j e^{\mathrm i\varphi_{j-1}}|\le 1$ with equality \textit{if and only if} all
$\varphi_j=0$.
A direct Taylor expansion of function $e^{\mathrm i u}$ at $u=0$ then yields that 
$$
p_1 + \sum_{j=2}^d p_j e^{\mathrm i\varphi_{j-1}}
=
1 + \mathrm i\sum_{j=2}^d p_j \varphi_{j-1}
-\frac12\sum_{j=2}^d p_j \varphi_{j-1}^2 + O(\|\varphi\|_2^3),
$$
and thus 
$$
\left|p_1 + \sum_{j=2}^d p_j e^{\mathrm i\varphi_{j-1}}\right|^{2}
=
1
-\sum_{j=2}^d p_j \varphi_{j-1}^2
+\left(\sum_{j=2}^d p_j \varphi_{j-1}\right)^{2}
+O(\|\varphi\|_2^3).
$$
Since all $p_j>0$ and $\sum_{j=2}^d p_j \varphi_{j-1}^2-(\sum_{j=2}^d p_j \varphi_{j-1})^2
= \mathrm{Var}_{J\sim\bp}(\varphi_{J-1})$ is a strictly positive quadratic form on
$\R^{d-1}$, it follows that $g$ is twice differentiable on $D$ with
$g(0)=0$, $\nabla g(0)=0$, and $\mathrm{Hess}g(0)$ negative. 
Consequently, we can apply Lemma~\ref{lem:laplace-multi} to \eqref{eq:Lell-laplace-form} and obtain that 
$$
L_{\ell,d,r}(\bp)
 \asymp 
\ell^{-(d-1)/2},
\qquad \ell\to\infty
$$
with constants depending on $(\bp,d,r)$. Thus, we establish the stated order in Equation~\eqref{eq:min-multinomial-functional}. This concludes the proof of Proposition~\ref{prop:poisson-functionals}.

\subsection{Proof of Proposition \ref{prop:counterexample}}
The proof of this proposition will compute the terminal law of greedy CART and test it against the marginal-value Bellman condition. The key ingredient is to identify a positive-probability event on which greedy moves toward the balanced terminal state even though the forest-level marginal cost prefers an imbalanced alternative. A small perturbation on that event then gives a strict local descent.

The configuration in this proposition has $\gamma=2/3$, so that $m=\lceil \gamma d\rceil=4$. Let $\pi_{\rm gr}$ be the feature-subsampled population CART policy with uniform
tie-breaking among maximizers. Denote by 
$
Z_2:=(N_{2,1},N_{2,2})
$
the informative terminal split-count vector at depth $\ell = 2$. Note that, in the zero-noise case with $\sigma_0^2 = 0$, the leading MSE terms depend only on the informative count vector $Z_2$.
For a probability law $\eta$ on
$$
\mathcal Z_{\le 2}:=\{z\in \mathbb Z_+^2:z_1+z_2\le 2\},
$$
we define the normalized zero-noise leading forest-MSE objective
$$
\mathcal J_B(\eta)
:=
\frac1B\,\mathbb E_{\eta}\{\phi(Z)\}
+
\frac{B-1}{B}\,\mathbb E_{\eta\otimes \eta}\{\psi_C(Z,Z')\},
$$
where $Z,Z'\stackrel{\rm iid}{\sim}\eta$, and
$$
\phi(z):=2^{-2z_1}+2^{-2z_2}, \quad
\psi_C(z,z')
:=
2^{-2\max\{z_1,z_1'\}}
+
2^{-2\max\{z_2,z_2'\}}.
$$

Then the informative-opportunity probability is 
$$
q:=\mathbb P(U_t\cap S\neq \varnothing)=\frac{14}{15}<1
$$
and
$$
\mathbb P(U_t\cap S=\{1,2\})
=
\frac{\binom{4}{2}}{\binom{6}{4}}
=
\frac6{15}
=
\frac25.
$$
Since $\beta_1=\beta_2$, the population CART gain among informative coordinates is
maximized by an exposed informative coordinate with smallest current informative count.
Note that ties are broken uniformly. Hence, from $Z_0=(0,0)$, we have 
$$
\mathbb P_{\pi^{\rm gr}}(Z_1=(1,0))
=
\mathbb P_{\pi^{\rm gr}}(Z_1=(0,1))
= 
\frac4{15}+\frac12 \cdot \frac6{15}
=
\frac7{15}
$$
and
$$
\mathbb P_{\pi^{\rm gr}}(Z_1=(0,0))
=
\frac1{15}.
$$

A second conditioning step yields the greedy terminal law
$
\eta^{\rm gr}:=\mathcal L_{\pi^{\rm gr}}(Z_2)
$
on $\mathcal Z_{\le 2}$
$$
\begin{array}{c|cccccc}
z
& (0,0) & (1,0) & (0,1) & (2,0) & (0,2) & (1,1)\\
\hline
\eta^{\rm gr}(z)
& \frac1{225}
& \frac{14}{225}
& \frac{14}{225}
& \frac{28}{225}
& \frac{28}{225}
& \frac{140}{225}
\end{array}.
$$
Indeed, if $Z_1=(1,0)$, greedy moves to $(1,1)$ whenever coordinate $2$ is exposed,
and moves to $(2,0)$ only when coordinate $1$ is exposed but coordinate $2$ is not.
The case of $Z_1=(0,1)$ is symmetric.

Denote by $a=(2,0)$ and $b=(1,1)$.
For a terminal law $\eta$, we define the marginal terminal cost associated with
$\mathcal J_B$ as 
$$
\Gamma_\eta^B(z)
:=
\frac1B\,\phi(z)
+
2\frac{B-1}{B}\,
\mathbb E_{\eta}\{\psi_C(z,Z')\}.
$$
It holds that 
$$
\phi(a)-\phi(b)
=
\left(2^{-4}+1\right)-\left(2^{-2}+2^{-2}\right)
=
\frac{17}{16}-\frac12
=
\frac9{16}.
$$
We also have
$$
\mathbb E_{\eta^{\rm gr}}\{\psi_C(a,Z')-\psi_C(b,Z')\}
=
-\frac1{48}.
$$
In fact, noting that
$$
\begin{array}{c|cccccc}
z
& (0,0) & (1,0) & (0,1) & (2,0) & (0,2) & (1,1)\\
\hline
\psi_C(a,z)-\psi_C(b,z)
& \frac9{16}
& \frac9{16}
& -\frac3{16}
& \frac{12}{16}
& -\frac3{16}
& -\frac3{16}
\end{array},
$$
we immediately have 
$$
\begin{aligned}
&\mathbb E_{\eta^{\rm gr}}\{\psi_C(a,Z')-\psi_C(b,Z')\}\\
&\quad =
\frac{1}{225\cdot 16}
\left[
9+14\cdot 9+14(-3)+28\cdot 12+28(-3)+140(-3)
\right]\\
&\quad =
-\frac{75}{3600}
=
-\frac1{48}.
\end{aligned}
$$
and thus
$$
\begin{aligned}
\Gamma_{\eta^{\rm gr}}^B(a)-\Gamma_{\eta^{\rm gr}}^B(b)
&=
\frac1B\cdot \frac9{16}
+
2\frac{B-1}{B}\left(-\frac1{48}\right)\\
&=
\frac{29-2B}{48B}.
\end{aligned}
$$
For each integer $B\ge 15$, we can obtain that 
$
\frac{29-2B}{48B}<0,
$ 
so it holds that 
$
\Gamma_{\eta^{\rm gr}}^B(a)<\Gamma_{\eta^{\rm gr}}^B(b).
$

Let us now consider the positive-probability event
$$
E:=\{Z_1=(1,0),\ U_2\cap S=\{1,2\}\}.
$$
In light of the independence of the second mask from the past, it holds that 
$$
\mathbb P_{\pi^{\rm gr}}(E)
=
\frac7{15}\cdot \frac6{15}
=
\frac{14}{75}>0.
$$
On event $E$, greedy CART chooses coordinate $2$ since coordinate $2$, leading to terminal informative count
$
b=(1,1).
$
The alternative coordinate $1$ is, feasible and leads to
$
a=(2,0).
$
Since $\ell=2$, this is the final decision epoch. Consequently, the relevant continuation costs are
exactly $\Gamma_{\eta^{\rm gr}}^B(a)$ and $\Gamma_{\eta^{\rm gr}}^B(b)$, and the feasible
alternative $a$ has strictly smaller marginal terminal cost.

To make the descent explicit, let us fix $\varepsilon\in(0,1)$, and define a policy
$\pi^\varepsilon$ that agrees with $\pi_{\rm gr}$ except on event $E$. On event $E$,
policy $\pi^\varepsilon$ chooses coordinate $1$ with probability $\varepsilon$, and coordinate $2$
with probability $1-\varepsilon$. Denote by 
$$
\theta_\varepsilon
:=
\varepsilon\,\mathbb P_{\pi^{\rm gr}}(E)
=
\varepsilon\frac{14}{75}.
$$
Then the induced informative terminal law is
$
\eta^\varepsilon
=
\eta^{\rm gr}
+
\theta_\varepsilon(\delta_a-\delta_b).
$
From the symmetry of $\psi_C$, we have the exact expansion
$$
\mathcal J_B(\eta^\varepsilon)-\mathcal J_B(\eta^{\rm gr})
=
\theta_\varepsilon
\left\{
\Gamma_{\eta^{\rm gr}}^B(a)-\Gamma_{\eta^{\rm gr}}^B(b)
\right\}
+
\theta_\varepsilon^2
\frac{B-1}{B}
\left[
\psi_C(a,a)-2\psi_C(a,b)+\psi_C(b,b)
\right].
$$
Moreover, it holds that
$$
\psi_C(a,a)=\frac{17}{16},\qquad
\psi_C(a,b)=\frac5{16},\qquad
\psi_C(b,b)=\frac12,
$$
and thus 
$$
\psi_C(a,a)-2\psi_C(a,b)+\psi_C(b,b)
=
\frac{15}{16}.
$$

Therefore, we can obtain that 
$$
\mathcal J_B(\eta^\varepsilon)-\mathcal J_B(\eta^{\rm gr})
=
\theta_\varepsilon\frac{29-2B}{48B}
+
\theta_\varepsilon^2\frac{B-1}{B}\frac{15}{16}.
$$
When $B\ge 15$, the linear coefficient is strictly negative, while the quadratic term is $O(\theta^2_\varepsilon)$. 
Thus, for all sufficiently small
$\varepsilon>0$,
$$
\mathcal J_B(\eta^\varepsilon)<\mathcal J_B(\eta^{\rm gr}).
$$
Consequently, the greedy terminal law is not a local minimizer. In particular, it cannot be a global
minimizer. 
This completes the proof of Proposition~\ref{prop:counterexample}.

\subsection{Proof of Lemma~\ref{lem:negdrift}} \label{proof.lem:negdrift}


The proof of this lemma will extract the negative drift from the possibility that an informative mask exposes more than one informative coordinate. When a low-count coordinate is exposed, the greedy rule is pulled toward it; when it is not exposed, averaging over the remaining candidates still yields a controllable upper bound. A geometric inequality then relates the minimum coordinate imbalance to $W_n$, converting such local preference into a linear drift bound. Assumption~\ref{ass:nondeg} is precisely the nondegeneracy condition that renders the drift constant $c_\star$ strictly positive. It rules out the degenerate case when an informative opportunity never exposes more than one informative coordinate, since then the greedy rule would have no chance to compare informative imbalances and the negative drift in Lemma~\ref{lem:negdrift} would disappear.

Denote by $T_n = \inf\{t\ge 1: M_t = n \}$, $\mathcal{A}_{n+1} = |U_{T_{n+1}} \cap S|$, and $K = K_{T_{n+1}} = |\mathcal{A}_{n+1}|$ the number of informative coordinates in the $(n+1)$th informative split. 
It is easy to see that $K \sim \text{Hypergeometric}(d,s,m)$ and is independent of the $\sigma$-algebra $\cF_n$ containing information from the previous $n$ informative splits. If $K=1$, the informative coordinate is chosen uniformly from $S$ and
$$
\E[\Delta_{n,J_{n+1}}\mid \cF_n,K=1]
=
\frac1s\sum_{j=1}^s\Delta_{n,j}
=
0.
$$

If $K = k \geq 2$, 
let $m_n=\min_j\Delta_{n,j}$ and $j^\star$ an index attaining this minimum.
Observe that conditional on $\cF_n$ and $K=k$, the candidate informative set $\mathcal{A}_{n+1}$ is a uniformly random
$k$-subset of $S$. Hence,
if $j^\star\in \mathcal{A}_{n+1}$, we have that by the greedy rule,
$$
\Delta_{n,J_{n+1}}=\min_{j\in \mathcal{A}_{n+1}} \Delta_{n,j}=m_n.
$$
If $j^\star\notin \mathcal{A}_{n+1}$, it holds that for any realization of $\mathcal A_{n+1}$,
$$
\min_{j\in \mathcal{A}_{n+1}}\Delta_{n,j}
\le
\frac1k\sum_{j\in \mathcal{A}_{n+1}}\Delta_{n,j}.
$$
Taking the conditional expectation over $\mathcal{A}_{n+1}$ given $K=k$ and $j^\star\notin \mathcal{A}_{n+1}$,
and using the uniformity of $\mathcal{A}_{n+1}$ over subsets of $S\setminus\{j^\star\}$, we can deduce that 
$$
\E \left[\min_{j\in \mathcal{A}_{n+1}}\Delta_{n,j}\mid \cF_n,K=k,j^\star\notin \mathcal{A}_{n+1}\right]
\le 
\frac{1}{s-1}\sum_{j\neq j^\star}\Delta_{n,j}
=
\frac{-m_n}{s-1}.
$$
Combining the above two cases yields that for any $k\geq 2$,
\begin{equation}\label{eq:cond-exp-k-2}
\E[\Delta_{n,J_{n+1}}\mid \cF_n,K=k]
\le
\frac{k}{s}m_n + \Bigl(1-\frac{k}{s}\Bigr)\frac{-m_n}{s-1}
=
m_n\frac{k-1}{s-1}.
\end{equation}

Since all coordinates of $\Delta_{n}$ satisfy that $\Delta_{n,j}\ge m_n$ and their sum is zero,
there must exist at least one coordinate whose value is greater than $-m_n/(s-1)$. In other words, it holds that 
$$
\max_j |\Delta_{n,j}| \ge -\frac{m_n}{s-1}.
$$
Meanwhile, let us consider the extreme configuration achieving the maximal variance under
these constraints: one coordinate is equal to $m_n$ and the remaining $s-1$
coordinates are equal to $-m_n/(s-1)$.
For such case, we have that 
$$
V_n=\sum_{j=1}^s \Delta_{n,j}^2
=
m_n^2 + (s-1)\Bigl(\frac{-m_n}{s-1}\Bigr)^2
=
m_n^2\Bigl(1+\frac{1}{s-1}\Bigr)
=
\frac{s}{s-1}m_n^2.
$$
Rearranging terms leads to 
$$
m_n^2 \ge \frac{s-1}{s}V_n
\qquad\text{or}\qquad
m_n\le -\sqrt{\frac{V_n}{s(s-1)}}.
$$
Hence, combining this with (\ref{eq:cond-exp-k-2}), we can obtain that for $k\ge2$,
$$
\E[\Delta_{n,J_{n+1}}\mid \cF_n,K=k]
\le
-\frac{k-1}{s(s-1)^{3/2}}\sqrt{V_n}.
$$

Further, averaging over $K$ conditional on $K\ge1$ gives that 
$$
\E[\Delta_{n,J_{n+1}}\mid \cF_n]
\le
-c_\star\sqrt{V_n}
$$
with
\begin{align*}
c_\star
&=
\frac{1}{s(s-1)^{3/2}}
\sum_{k=2}^s (k-1)\P(K=k\mid K\ge1)\\
&=
\frac{1}{s(s-1)^{3/2}}
\sum_{k=1}^s (k-1)_+\P(K=k\mid K\ge1)\\
&=
\frac{1}{s(s-1)^{3/2}}
\E \big[(K-1)_+\mid K\ge1\big],
\end{align*}
The positivity of $c_\star$ follows from Assumption~\ref{ass:nondeg}.
This completes the proof of Lemma~\ref{lem:negdrift}.

\subsection{Proof of Lemma~\ref{lem:cost-to-go-bellman}}
The proof of this lemma will involve a standard backward-induction argument, but the random mask will require one additional conditioning step. After the mask is observed, the current decision is deterministic over a finite action set. Before it is observed, the value is the average over all mask realizations.

Specifially, we aim to establish the result by backward induction on $r$.
At terminal time $r=\ell$, no decisions remain. If $N_\ell=\mathbf n$, the
terminal cost is already fixed and equal to  $h(\mathbf n)$. Hence, it holds that 
$$
\inf
\mathbb E_\pi[h(N_\ell)\mid N_\ell= \mathbf n]
=
h(\mathbf n)
=
\mathcal V_\ell^h(\mathbf n).
$$
This shows the base case.

Now assume that the claim holds at time $r+1$. Let us fix $n\in\mathcal N_{r,d}$.
Conditional on $N_r=\mathbf n$, the next mask $U_{r+1}$ has law $p_m$ on
$\mathcal U_m$, and is independent of the past. After observing
$U_{r+1}=u$, an admissible policy must choose an action
$a\in\mathcal A(u)$. If action $a$ is chosen, the next state is
$
N_{r+1}=\mathbf n+e_a.
$
In view of the induction hypothesis, the minimum expected terminal cost from this next
state is $\mathcal V_{r+1}^h(\mathbf n+e_a)$.
Thus, after observing $u$, the smallest achievable continuation cost is given by 
$$
\min_{a\in\mathcal A(u)}
\mathcal V_{r+1}^h(\mathbf n+e_a).
$$
Averaging over the next mask then gives that 
$$
\inf_{\pi_{(r+1):\ell}}
\mathbb E_\pi[h(N_\ell)\mid N_r=\mathbf n]
=
\sum_{u\in\mathcal U_m}p_m(u)
\min_{a\in\mathcal A(u)}
\mathcal V_{r+1}^h(\mathbf n+e_a)
=
\mathcal V_r^h(\mathbf n).
$$
The minimum is attainable since all state and action sets are finite, so a
minimizing selector exists for each pair $(\mathbf n,u)$. This completes the
backward induction.

Finally, 
taking $r=0$ and then $N_0 = \mathbf 0$, the above established result at time $0$ yields that 
$$
\mathcal V_{0}^h(\mathbf 0) = \inf_{\pi\in \Pi_{\ell}}\E_{\pi}[h(N_{\ell})],
$$
which gives the final claim. This concludes the proof of Lemma~\ref{lem:cost-to-go-bellman}.

\subsection{Proof of Lemma~\ref{lem:linear-terminal-bellman}}
The proof of this lemma will rely on a submartingale argument to show that an optimal policy cannot put positive mass on nonminimizing actions without creating a strict increase in the expected cost-to-go.



Let
$\pi\in\Pi_\ell$ be arbitrary, and define
$
M_t^h:=\mathcal V_t^h(N_t),
\qquad t=0,\ldots,\ell.
$
For $t=0,\ldots,\ell-1$, it holds that 
\begin{align*}
\mathbb E_\pi[M_{t+1}^h\mid \mathcal H_t,U_{t+1}]
&=
\sum_{j\in\mathcal A(U_{t+1})}
\pi_{t+1}(j\mid \mathcal H_t,U_{t+1})
\mathcal V_{t+1}^h(N_t+e_j)\ge
\min_{j\in\mathcal A(U_{t+1})}
\mathcal V_{t+1}^h(N_t+e_j).
\end{align*}
Notice that the right-hand side of the expression above is a convex combination of $\left\{\mathcal V_{t+1}^h(N_t + e_j): j \in \mathcal A(U_{t+1})\right\}$, and thus is lower bounded by the minimum.
Taking the conditional expectation given $\mathcal H_t$, and using the independence and
uniform law of $U_{t+1}$, we can deduce that 
$$
\mathbb E_\pi[M_{t+1}^h\mid \mathcal H_t]
\ge
\sum_{u\in\mathcal U_m}p_m(u)
\min_{j\in\mathcal A(u)}
\mathcal V_{t+1}^h(N_t+e_j)
=
\mathcal V_t^h(N_t)
=
M_t^h.
$$
Hence, $(M_t^h)_{t=0}^\ell$ is a submartingale under every policy
$\pi\in\Pi_\ell$.

Let $\pi^\star$ be optimal for the linear terminal-cost problem with cost
$h$. 
We define the nonnegative random variable
\begin{align*}
    D_t^h
&:=
\sum_{j\in\mathcal A(U_{t+1})}
\pi^\star_{t+1}(j\mid H_t,U_{t+1})
\mathcal V_{t+1}^h(N_t+e_j)
-
\min_{j\in\mathcal A(U_{t+1})}
\mathcal V_{t+1}^h(N_t+e_j) \\
& = \E_{\pi^*}[M_{t+1}^h|\mathcal H_t, U_{t+1}] - \min_{j\in\mathcal A(U_{t+1})}
\mathcal V_{t+1}^h(N_t+e_j).
\end{align*}
By construction, it holds that $D_t^h \geq 0$. Moreover, we have that 
\[
\E_{\pi^\star}[D_t^h \mid \mathcal H_t] = \E_{\pi^\star}[M_{t+1}^h \mid \mathcal H_t] - M_t^h.
\]
Taking expectations gives that 
$$
\mathbb E_{\pi^\star}[D_t^h]
=
\mathbb E_{\pi^\star}[M_{t+1}^h-M_t^h].
$$
Since $\pi^\star$ is optimal for the linear terminal-cost problem $\inf_{\pi\in \Pi_{\ell}}\E_{\pi}[h(N_{\ell})]$, an application of Lemma \ref{lem:cost-to-go-bellman} leads to 
$$
\mathbb E_{\pi^\star}[M_\ell^h]
=
\mathbb E_{\pi^\star}[h(N_\ell)]
=
\mathcal V_0^h(0)
=
M_0^h.
$$

Further, $(M_t^h)_{t=0}^\ell$ is a submartingale under every admissible
policy, so it holds that 
$$
\mathbb E_{\pi^\star}[M_{t+1}^h-M_t^h]\ge0,
\qquad t=0,\ldots,\ell-1.
$$
Summing over $t$ yields that 
$$
0
=
\mathbb E_{\pi^\star}[M_\ell^h-M_0^h]
=
\sum_{t=0}^{\ell-1}
\mathbb E_{\pi^\star}[M_{t+1}^h-M_t^h].
$$
Since each term in the sum is nonnegative, each term must be zero
$$
\mathbb E_{\pi^\star}[M_{t+1}^h-M_t^h]=0,
\qquad t=0,\ldots,\ell-1.
$$
Consequently, it follows that 
$$
\mathbb E_{\pi^\star}[D_t^h]=0,
\qquad t=0,\ldots,\ell-1.
$$ 
Since
$D_t^h\ge0$, we have that $D_t^h=0$, $\P_{\pi^\star}$-almost surely; that is, it holds that 
\begin{align*}
\mathbb E_{\pi^\star}
\left[
M_{t+1}^h
\mid H_t,U_{t+1}
\right]
=
\min_{j\in\mathcal A(U_{t+1})}
\mathcal V_{t+1}^h(N_t+e_j).
\end{align*}

Note that a convex
combination of finitely many real numbers is equal to their minimum if and only if
all values receiving positive weight are equal to that minimum. Therefore, we have 
$$
\operatorname{supp}
\pi^\star_{t+1}(\cdot\mid H_t,U_{t+1})
\subseteq
\arg\min_{j\in\mathcal A(U_{t+1})}
\mathcal V_{t+1}^h(N_t+e_j),
\qquad
\P_{\pi^\star}\text{-a.s.}
$$
This completes the proof of Lemma~\ref{lem:linear-terminal-bellman}.

\subsection{Proof of Lemma \ref{lem:compact-convex}}



We show that $\mathfrak V_\ell$ is a linear projection of a compact convex polytope $\mathcal P_\ell$ and thus is also a compact convex polytope. 

We first define the set $\mathcal P_\ell$. 
For $t=0,\ldots,\ell$ and $\mathbf n\in\mathcal N_{t,d}$, we define $
x_t(n)=\mathbb P(N_t=\mathbf n)$.
For $t=0,\ldots,\ell-1$, $\mathbf n\in\mathcal N_{t,d}$, $u\in\mathcal U_m$, and
$j\in\mathcal A(u)$, we further define
\[
y_t(n,u,j)
=
\mathbb P(N_t=\mathbf n,\ U_{t+1}=u,\ J_{t+1}=j).
\]
The set $\mathcal P_\ell$ is defined as the collection of all nonnegative arrays $\{(x_t),(y_t)\}$ satisfying that 
\begin{subequations}
\begin{align}
x_0(\mathbf 0)&=1, \label{eq:occ-initial-clean}\\
\sum_{j\in\mathcal A(u)} y_t(\mathbf n,u,j)
&=
p_m(u)x_t(\mathbf n),
&& t=0,\ldots,\ell-1,\quad \mathbf n\in\mathcal N_{t,d},\quad u\in\mathcal U_m,
\label{eq:occ-mask-clean}\\
x_{t+1}(\mathbf n')
&=
\sum_{\substack{
\mathbf n\in\mathcal N_{t,d},\ u\in\mathcal U_m,\ j\in\mathcal A(u):\\
\mathbf n+e_j=\mathbf n'}}
y_t(\mathbf n,u,j),
&& t=0,\ldots,\ell-1,\quad \mathbf n'\in\mathcal N_{t+1,d},
\label{eq:occ-flow-clean}\\
x_t(\mathbf n)&\ge 0,\qquad y_t(\mathbf n,u,j)\ge 0 .
\label{eq:occ-nonnegative-clean}
\end{align}
\end{subequations}

Next, we prove that the set of reachable terminal laws is exactly
\[
\mathfrak V_\ell
=
\left\{
\nu:\ \nu(\mathbf n)=x_\ell(\mathbf n),\ \mathbf n\in\mathcal N_{\ell,d},
\text{ for some }\{(x_t),(y_t)\}\in\mathcal P_\ell
\right\}.
\]
Thus, $\mathfrak V_\ell$ is the linear projection of $\mathcal P_\ell$ onto
the terminal state coordinates.

On the one hand, it is clear that every admissible policy generates a point of
$\mathcal P_\ell$. In fact, the initial condition \eqref{eq:occ-initial-clean} follows naturally
from $N_0=\mathbf 0$. 
Since $U_{t+1}$ is independent of the past and has law
$p_m$, we have that for each $\mathbf n$ and $u$,
\[
\sum_{j\in\mathcal A(u)} y_t(\mathbf n,u,j)
=
\mathbb P(N_t=\mathbf n,\ U_{t+1}=u)
=
p_m(u)x_t(\mathbf n),
\]
which gives \eqref{eq:occ-mask-clean}. Moreover, since the state transition is
deterministic and $N_{t+1}=N_t+e_{J_{t+1}}$, summing the joint probabilities of
all predecessor triples $(\mathbf n,u,j)$ satisfying that $\mathbf n+e_j=\mathbf n'$ yields \eqref{eq:occ-flow-clean}. The nonnegativity is immediate.

On the other hand, for any $\{(x_t),(y_t)\}\in\mathcal P_\ell$, there exists a Markov
randomized policy $\pi$ to realize $\{(x_t),(y_t)\}$. Indeed, for
$t=0,\ldots,\ell-1$, define
\[
\pi_{t+1}(j\mid \mathbf n,u)
=
\frac{y_t(\mathbf n,u,j)}{p_m(u)x_t(\mathbf n)},
\qquad j\in\mathcal A(u),
\]
whenever $p_m(u)x_t(\mathbf n)>0$. If $p_m(u)x_t(\mathbf n)=0$, let us define
$\pi_{t+1}(\cdot\mid \mathbf n,u)$ as any probability distribution on
$\mathcal A(u)$, which does not affect the induced law.
The constraint \eqref{eq:occ-mask-clean} guarantees that
$\pi_{t+1}(\cdot\mid \mathbf n,u)$ is a probability distribution whenever
$p_m(u)x_t(n)>0$. 

We verify by induction that this policy $\pi$ induces $\{(x_t),(y_t)\}$. 
At $t=0$, the conclusion naturally follows from
\eqref{eq:occ-initial-clean}. Assume that the claim holds at time $t$, so that
$\mathbb P(N_t=\mathbf n)=x_t(\mathbf n)$. If $p_m(u)x_t(\mathbf n)>0$, it holds that 
\[
\mathbb P(N_t=\mathbf n,\ U_{t+1}=u,\ J_{t+1}=j)
=
x_t(\mathbf n)p_m(u)\pi_{t+1}(j\mid \mathbf n,u)
=
y_t(\mathbf n,u,j).
\]
If $p_m(u)x_t(\mathbf n)=0$, both sides are zero, since \eqref{eq:occ-mask-clean} and nonnegativity imply that $y_t(\mathbf n,u,j)=0$ for all
$j\in\mathcal A(u)$. 
Applying \eqref{eq:occ-flow-clean} then
gives that $\mathbb P(N_{t+1}=\mathbf n')=x_{t+1}(\mathbf n')$. The induction is now complete.

It remains to show that $\mathcal P_\ell$ is a compact convex polytope. To see this, we notice that
the convexity is due to the fact that all constraints are linear. The set is closed since it is defined by finitely many linear equalities and inequalities. 
It is also
bounded; summing \eqref{eq:occ-flow-clean} over $\mathbf n'$ and using
\eqref{eq:occ-mask-clean}, we can show by induction that
\[
\sum_{\mathbf n\in\mathcal N_{t,d}} x_t(\mathbf n)=1,
\qquad t=0,\ldots,\ell.
\]
Hence, it follows that $0\le x_t(\mathbf n)\le 1$. Moreover, we have that 
\[
0\le y_t(\mathbf n,u,j)
\le
\sum_{a\in\mathcal A(u)}y_t(\mathbf n,u,a)
=
p_m(u)x_t(\mathbf n)
\le 1.
\]
Consequently, $\mathcal P_\ell$ is indeed a compact
convex polytope. 
This concludes the proof of Lemma \ref{lem:compact-convex}.





\section{Auxiliary lemmas with proofs}
\label{sec:auxiliary}

\subsection{Auxiliary lemmas}
\label{app:aux}

The following 
identities and concentration inequalities will be used repeatedly in the proof of Proposition~\ref{prop:poisson-functionals}.

\begin{lemma}[Poisson-kernel Fourier coefficients]
\label{lem:poisson-fourier}
Let $r\in(0,1)$ and define the Poisson kernel
$$
P_r(\theta)=\frac{1-r^2}{1-2r\cos\theta+r^2},\qquad \theta\in[-\pi,\pi],
$$
along with the probability density $\mu_r(\theta):=P_r(\theta)/(2\pi)$ on $[-\pi,\pi]$.
Then we have that for each $k\in\mathbb Z$,
$$
\int_{-\pi}^{\pi} e^{\mathrm i k \theta}\mu_r(\theta)d\theta = r^{|k|}.
$$
Equivalently, it holds that for any integer-valued random variable $X$,
$$
\mathbb E\big[r^{|X|}\big]
=
\int_{-\pi}^{\pi}\mathbb E\big[e^{\mathrm i\theta X}\big]\mu_r(\theta)d\theta,
$$
whenever the expectation exists.
\end{lemma}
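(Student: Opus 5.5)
The plan is to compute the integral term by term, starting from the Fourier (geometric-series) expansion of the Poisson kernel. For $r\in(0,1)$ and real $\theta$, write $z=re^{\mathrm i\theta}$, so that $|z|=r<1$. Then
\[
\sum_{k\in\mathbb Z} r^{|k|}e^{\mathrm i k\theta}
=1+2\,\mathrm{Re}\sum_{k\ge1} z^k
=1+2\,\mathrm{Re}\frac{z}{1-z}
=\mathrm{Re}\frac{1+z}{1-z}.
\]
Multiplying numerator and denominator by $1-\bar z$ gives
\[
\mathrm{Re}\frac{1+z}{1-z}=\frac{1-|z|^2}{|1-z|^2}=\frac{1-r^2}{1-2r\cos\theta+r^2}=P_r(\theta).
\]
The series converges absolutely and uniformly in $\theta$, since $\sum_k r^{|k|}<\infty$.

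Next we obtain the Fourier coefficients. Multiply the expansion by $e^{-\mathrm i k'\theta}/(2\pi)$ and integrate over $[-\pi,\pi]$. Uniform convergence lets us interchange sum and integral. Orthogonality, $\frac1{2\pi}\int_{-\pi}^{\pi}e^{\mathrm i(k-k')\theta}d\theta=\1\{k=k'\}$, then gives
\[
\int_{-\pi}^{\pi} e^{-\mathrm i k'\theta}\mu_r(\theta)\,d\theta=r^{|k'|}.
\]
Since $P_r$ is even, the sign in the exponent is irrelevant, and we get $\int e^{\mathrm i k\theta}\mu_r(\theta)\,d\theta=r^{|k|}$. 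Taking $k=0$ also confirms that $\mu_r$ is a probability density, and $\mu_r$ is positive because $1-2r\cos\theta+r^2\ge(1-r)^2>0$.

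For the probabilistic form, apply the identity with $k=X(\omega)$ pointwise, then take expectations. Interchanging $\mathbb E$ and $\int d\theta$ is justified by Fubini–Tonelli, because $|e^{\mathrm i\theta X}|\mu_r(\theta)=\mu_r(\theta)$ is integrable on $\Omega\times[-\pi,\pi]$ with total mass $1$. In fact $r^{|X|}\in[0,1]$, so the expectation always exists.

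No step is a genuine obstacle. The only point requiring care is justifying the interchanges: the first rests on uniform convergence of the geometric series, the second on boundedness of the integrand.
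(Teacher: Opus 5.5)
Your proof is correct and follows essentially the same route as the paper: expand $P_r$ in its absolutely convergent Fourier series $1+2\sum_{m\ge1}r^m\cos(m\theta)$, integrate term by term against $e^{\mathrm i k\theta}$ using orthogonality, and obtain the probabilistic form by conditioning on $X$ and applying Fubini. The only difference is that you derive the series expansion yourself via $\mathrm{Re}\,\frac{1+z}{1-z}$ with $z=re^{\mathrm i\theta}$, whereas the paper takes it as classical.
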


Lemma~\ref{lem:poisson-fourier} above is the Fourier tool that transforms the absolute-value penalties into integrals of the characteristic functions.

\begin{lemma}[Binomial generating function]
\label{lem:binom-pgf}
Let $N\sim\mathrm{Binomial}(\ell,p)$ with $\ell\in\mathbb N$ and $p\in[0,1]$.
Then we have that for any $z\in\mathbb C$ (the complex plane),
$$
\mathbb E[z^N]=(1-p+pz)^\ell.
$$
\end{lemma}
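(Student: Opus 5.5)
The plan is to represent $N$ as a sum of independent Bernoulli variables and factor the expectation, which reduces the identity to a one-line computation for a single trial; the case $p\in\{0,1\}$ and complex $z$ require no separate treatment because everything is a finite sum. Concretely, write $N\overset{d}{=}\sum_{t=1}^{\ell}\xi_t$ with $\xi_1,\dots,\xi_\ell$ i.i.d.\ Bernoulli$(p)$. Since $z^{N}=\prod_{t=1}^{\ell} z^{\xi_t}$ and each factor is a bounded (indeed finitely-valued) complex random variable, independence gives
$$
\mathbb E[z^N]=\prod_{t=1}^{\ell}\mathbb E[z^{\xi_t}],
$$
where the product rule for expectations of independent complex-valued variables follows by splitting into real and imaginary parts (or simply by expanding over the finite joint support).

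The single-trial computation is $\mathbb E[z^{\xi_t}]=(1-p)z^0+p z^1=1-p+pz$, with the convention $z^0=1$ also at $z=0$. Raising to the $\ell$-th power yields $(1-p+pz)^\ell$. As a cross-check (and an alternative route that avoids the Bernoulli representation entirely), one can expand directly:
$$
\mathbb E[z^N]=\sum_{k=0}^{\ell}\binom{\ell}{k}p^k(1-p)^{\ell-k}z^k=\sum_{k=0}^{\ell}\binom{\ell}{k}(pz)^k(1-p)^{\ell-k}=(1-p+pz)^\ell
$$
by the binomial theorem, which holds verbatim over $\mathbb C$ since it is a polynomial identity in commuting variables. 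I would present this direct expansion as the proof, since it is a single display.

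There is essentially no obstacle: the only points to be careful about are the convention $0^0=1$ (so that $z=0$ gives $\mathbb P(N=0)=(1-p)^\ell$, consistent with the formula) and that no convergence issue arises, because $N$ takes finitely many values. These are exactly the features used later when the lemma is applied with $z=r\in(0,1)$, $z=2^{-2/s}$, and $z=\sqrt r\,e^{\mathrm i\theta}$ in the proofs of Proposition~\ref{prop:poisson-functionals} and Theorem~\ref{thm:cart-mse}.
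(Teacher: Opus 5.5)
Your proof is correct, and the direct expansion you choose to present is exactly the paper's argument: write $\mathbb E[z^N]$ as the finite sum $\sum_{k=0}^{\ell}\binom{\ell}{k}p^k(1-p)^{\ell-k}z^k$ and apply the binomial theorem. Your Bernoulli-factorization route is an equally valid alternative (it mirrors how the paper proves the multinomial characteristic function in Lemma~\ref{lem:mult-cf}), and your note on the convention $0^0=1$ is a correct minor addition that the paper leaves implicit.
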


Lemma~\ref{lem:binom-pgf} above provides the exact binomial factors that appear in the single-tree and cross-tree bias terms. In addition to Lemmas~\ref{lem:poisson-fourier} and \ref{lem:binom-pgf}, we will
exploit the 
representations in the following two lemmas.

\begin{lemma}[Multinomial characteristic function]
\label{lem:mult-cf}
Let $N=(N_1,\dots,N_d)\sim \mathrm{Multinomial}(\ell,\bp)$ with
$\bp=(p_1,\dots,p_d)\in\mathfrak S_{d-1}$ and $\ell\in\mathbb N$.
Then we have that for each $\theta=(\theta_1,\dots,\theta_d)\in\R^d$,
$$
\mathbb E \left[\exp \big(\mathrm i\langle\theta,N\rangle\big)\right]
=
\left(\sum_{j=1}^d p_j e^{\mathrm i\theta_j}\right)^\ell.
$$
\end{lemma}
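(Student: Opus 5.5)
The statement to prove is Lemma~\ref{lem:mult-cf}. The plan is to realize the multinomial vector as a sum of i.i.d.\ one-hot vectors and factor the characteristic function.

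First, let $\xi_1,\dots,\xi_\ell$ be i.i.d.\ random vectors in $\{e_1,\dots,e_d\}\subset\mathbb R^d$ with $\mathbb P(\xi_r=e_j)=p_j$. Then $N\stackrel{d}{=}\sum_{r=1}^\ell \xi_r$. This is the standard construction of $\mathrm{Multinomial}(\ell,\bp)$: the count $N_j$ records how many of the $\ell$ trials land in category $j$. The characteristic function depends only on the law of $N$, so it suffices to compute it for this representation.

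Second, write
\[
\exp\big(\mathrm i\langle\theta,N\rangle\big)=\prod_{r=1}^\ell \exp\big(\mathrm i\langle\theta,\xi_r\rangle\big).
\]
By independence, the expectation of this product equals the product of the expectations. For a single factor,
\[
\mathbb E\big[e^{\mathrm i\langle\theta,\xi_r\rangle}\big]=\sum_{j=1}^d p_j e^{\mathrm i\theta_j},
\]
because $\langle\theta,e_j\rangle=\theta_j$. Raising this common value to the $\ell$th power gives the claim.

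There is essentially no obstacle here. The only care point is justifying the distributional identity $N\stackrel{d}{=}\sum_r\xi_r$. If we prefer to avoid it, an alternative route is direct expansion: write $\mathbb E[e^{\mathrm i\langle\theta,N\rangle}]=\sum_{\mathbf n}\binom{\ell}{n_1,\dots,n_d}\prod_j (p_je^{\mathrm i\theta_j})^{n_j}$, where the sum runs over $\mathbf n\in\mathbb N_0^d$ with $\sum_j n_j=\ell$. The multinomial theorem, applied with complex entries $p_je^{\mathrm i\theta_j}$, identifies this sum as $\big(\sum_j p_je^{\mathrm i\theta_j}\big)^\ell$. Either route yields the stated identity for every $\theta\in\mathbb R^d$.
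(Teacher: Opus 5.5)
Your proof is correct and matches the paper's argument: the paper also represents $N$ as a sum of $\ell$ i.i.d.\ categorical one-hot vectors and factors the characteristic function by independence. The alternative you mention, applying the multinomial theorem with complex entries $p_j e^{\mathrm i\theta_j}$, is also valid.
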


Lemma~\ref{lem:mult-cf} above is the multinomial analog of Lemma~\ref{lem:binom-pgf}, and employed after the overlap functional is rewritten as an $\ell_1$-distance.

\begin{lemma}[Product Poisson-kernel representation]
\label{lem:l1-poisson}
Let $\rho\in(0,1)$ and $X=(X_1,\dots,X_d)$ a $\mathbb Z^d$-valued random vector.
Define $\|X\|_1=\sum_{j=1}^d |X_j|$.
Then we have that 
$$
\mathbb E \left[\rho^{\|X\|_1}\right]
=
\int_{[-\pi,\pi]^d}
\mathbb E \left[e^{\mathrm i\langle\theta,X\rangle}\right]
\prod_{j=1}^d \mu_\rho(\theta_j)d\theta_j,
$$
whenever the expectation exists.
\end{lemma}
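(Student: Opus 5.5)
The identity is the standard Fourier inversion for the Poisson kernel, and the plan is to reduce everything to the scalar coefficient identity of Lemma~\ref{lem:poisson-fourier}. That identity says that for every integer $k$,
$$\int_{-\pi}^{\pi} e^{\mathrm i k\theta}\mu_\rho(\theta)\,d\theta=\rho^{|k|}.$$
I would prove it first, either by citing the lemma or directly. For the direct route, expand the kernel as the absolutely convergent series
$$P_\rho(\theta)=\sum_{n\in\mathbb Z}\rho^{|n|}e^{\mathrm i n\theta},$$
which is the real part of $(1+\rho e^{\mathrm i\theta})/(1-\rho e^{\mathrm i\theta})$. Summing the two geometric series confirms the closed form. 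Since $\sum_n\rho^{|n|}<\infty$, we may integrate term by term against $e^{\mathrm i k\theta}/(2\pi)$, and orthogonality of the exponentials on $[-\pi,\pi]$ picks out the coefficient $\rho^{|k|}$.

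Next, fix a deterministic $x\in\mathbb Z^d$ and use the product structure. Because $e^{\mathrm i\langle\theta,x\rangle}=\prod_j e^{\mathrm i\theta_j x_j}$ and the measure $\prod_j\mu_\rho(\theta_j)\,d\theta_j$ is a product, Fubini on the compact box $[-\pi,\pi]^d$ with a bounded integrand gives
$$\int_{[-\pi,\pi]^d} e^{\mathrm i\langle\theta,x\rangle}\prod_{j=1}^d\mu_\rho(\theta_j)\,d\theta_j=\prod_{j=1}^d\rho^{|x_j|}=\rho^{\|x\|_1}.$$

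Finally, replace $x$ by the random vector $X$ and take expectations. We have $|e^{\mathrm i\langle\theta,X\rangle}|=1$ and $\mu_\rho$ is a probability density, so the joint integrand is bounded by $1$ on a probability space. Fubini--Tonelli therefore allows interchanging $\mathbb E$ with the $\theta$-integral. This gives
$$\mathbb E[\rho^{\|X\|_1}]=\int\mathbb E\big[e^{\mathrm i\langle\theta,X\rangle}\big]\prod_{j=1}^d\mu_\rho(\theta_j)\,d\theta_j.$$
The left side is automatically finite because $\rho^{\|X\|_1}\in(0,1]$. So the proviso ``whenever the expectation exists'' is vacuous here.

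The only step with real content is the scalar Fourier-coefficient identity and the justification of its termwise integration; the rest is bookkeeping. The main care point is that the closed form of $P_\rho$ matches the series, which is a short geometric-series computation.
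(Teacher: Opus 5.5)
Your proposal is correct and follows essentially the same route as the paper: apply the scalar coefficient identity of Lemma~\ref{lem:poisson-fourier} coordinatewise under the product measure to get $\rho^{\|x\|_1}$ for fixed $x$, then take expectations and interchange with Fubini's theorem, which is justified because the integrand is bounded by $1$. Your extra details, namely the geometric-series check of the kernel's Fourier expansion and the observation that the proviso is vacuous since $\rho^{\|X\|_1}\in(0,1]$, are accurate and only make explicit what the paper leaves implicit.
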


Lemma~\ref{lem:l1-poisson} above reduces the product $\ell_1$-damping to a multidimensional Fourier integral, which is the starting point for the Poisson-kernel attenuation functional $L_{\ell,d,r}$.

\begin{lemma}[Multivariate Laplace approximation]\label{lem:laplace-multi}
Let $m\ge1$ and $D\subset\R^m$ an open region.
Let $g:D\to\R$ be three times continuously differentiable, and assume that 
\begin{itemize}
\item[1)] $g$ attains its unique global maximum at an interior point $x_0\in D$,
\item[2)] $\nabla g(x_0)=0$,
\item[3)] $\mathrm{Hess}g(x_0)$ is negative definite.
\end{itemize}
Let $h:D\to\R$ be continuous with $h(x_0)>0$.
Then we have that as $\ell\to\infty$,
\begin{equation}\label{eq:laplace-multi}
\int_D e^{\ell g(x)} h(x)dx
\sim
e^{\ell g(x_0)}
\left(\frac{2\pi}{\ell}\right)^{m/2}
\frac{h(x_0)}{\sqrt{\det \big(-\mathrm{Hess}g(x_0)\big)}}.
\end{equation}
In particular, it holds that 
$$
\int_D e^{\ell g(x)} h(x)dx
 \asymp 
e^{\ell g(x_0)}\ell^{-m/2}.
$$
\end{lemma}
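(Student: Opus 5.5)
The plan is the classical Laplace method: localize the integral near $x_0$, compare the localized part with a Gaussian integral through a rescaling $x=x_0+y/\sqrt{\ell}$, and show the rest is exponentially negligible. Dividing by $e^{\ell g(x_0)}$, I assume without loss of generality that $g(x_0)=0$. Write $H:=-\mathrm{Hess}\,g(x_0)$, which is positive definite by condition 3).

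\emph{Step 1 (local quadratic sandwich).} Since $g\in C^3$, $\nabla g(x_0)=0$ and $H\succ0$, Taylor's theorem gives $g(x_0+z)=-\tfrac12 z^\top Hz+R(z)$ with $|R(z)|\le C\|z\|_2^3$ on a closed ball $\bar B(x_0,r_0)\subset D$.

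Shrinking $r_0$, I get $-c_1\|z\|_2^2\le g(x_0+z)\le -c_0\|z\|_2^2$ on this ball, for some constants $0<c_0\le c_1$. By continuity of $h$ and $h(x_0)>0$, I also take $r_0$ small enough that $h\ge h(x_0)/2>0$ there.

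\emph{Step 2 (inner integral).} Substitute $z=y/\sqrt{\ell}$. The inner piece becomes
\[
\ell^{-m/2}\int_{\|y\|_2\le r_0\sqrt{\ell}} e^{\ell g(x_0+y/\sqrt{\ell})}\,h(x_0+y/\sqrt{\ell})\,dy .
\]
Pointwise, $\ell g(x_0+y/\sqrt{\ell})\to -\tfrac12 y^\top Hy$ because $\ell |R(y/\sqrt{\ell})|\le C\|y\|_2^3/\sqrt{\ell}$, and $h(x_0+y/\sqrt{\ell})\to h(x_0)$.

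The integrand is dominated by $\sup_{\bar B}|h|\,e^{-c_0\|y\|_2^2}$, which is integrable. Dominated convergence therefore gives the limit $h(x_0)\int_{\R^m}e^{-y^\top Hy/2}dy=h(x_0)(2\pi)^{m/2}/\sqrt{\det H}$. This is exactly the constant in \eqref{eq:laplace-multi}.

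\emph{Step 3 (outer region).} Here I need $\sup_{D\setminus B(x_0,r_0)} g\le -\delta<0$ together with $\int_D e^{g}|h|<\infty$. Then for $\ell\ge1$ the outer integral is at most $e^{-(\ell-1)\delta}\int_D e^{g}|h|$, which is $o(\ell^{-m/2})$.

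\emph{Main obstacle.} The difficulty is Step 3. A unique maximum on an open, possibly noncompact, region does not by itself force $g$ to stay uniformly below $0$ away from $x_0$: $g$ could approach $0$ along $\partial D$ or at infinity, and $h$ need not be integrable. I would state these two conditions explicitly as the implicit regularity hypotheses of the lemma.

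I would then check them in the two applications in the proof of Proposition~\ref{prop:poisson-functionals}. In both, $D=(-\pi,\pi)^m$ is bounded and $h$ is a bounded positive Poisson-kernel density. Moreover $g$ extends continuously to the closed cube, and the maximum of the extension is attained only at the origin, because $|p_1+\sum_{j\ge2}p_je^{\mathrm i\varphi_{j-1}}|=1$ forces all $\varphi_j\equiv0 \pmod{2\pi}$. Compactness of the closed cube minus the ball $B(0,r_0)$ then yields the gap $\delta>0$, which is the separation already noted in the one-dimensional case.

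Finally, the two-sided bound $\asymp e^{\ell g(x_0)}\ell^{-m/2}$ follows directly from the asymptotic equivalence, since the limiting constant is finite and strictly positive.
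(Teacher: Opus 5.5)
Your proof is correct, but it takes a different route from the paper. The paper gives no argument for this lemma and only cites Proposition~3.2.1 and Equations~(3.16), (3.23) of \cite{Butler2007}. You instead carry out the localization argument yourself: a cubic Taylor sandwich on a small ball, the rescaling $x=x_0+y/\sqrt{\ell}$ with dominated convergence, and an exponential bound off the ball.

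The citation is shorter, but it silently inherits whatever tail conditions the source imposes. Your route makes those conditions explicit, and you are right that they are genuinely needed, because the lemma as literally stated is false. Take $D=(-1,2)$, $g(x)=-x^2(2-x)^4$ and $h\equiv 1$. Hypotheses 1)--3) hold at $x_0=0$. Yet $g\ge -4(2-x)^4$ near the excluded endpoint $x=2$, and that end contributes at least a constant times $\ell^{-1/4}\gg\ell^{-1/2}$, so both the $\sim$ and the $\asymp$ claims fail.

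Your version also buys robustness in the paper's applications. Steps~1--2 use smoothness of $g$ only on a small ball around $x_0$, and Step~3 uses only the gap and integrability, so $g$ may equal $-\infty$ away from $x_0$. This is exactly what happens in part (ii) of Proposition~\ref{prop:poisson-functionals}. For $d\ge 3$ the modulus $|p_1+\sum_{j\ge2}p_je^{\mathrm i\varphi_{j-1}}|$ can vanish at interior points of the cube, for example with $\bp$ uniform, $d=3$, at $\varphi=(2\pi/3,-2\pi/3)$. There $g=-\infty$, so the lemma's global real-valued $C^3$ hypothesis fails, while your argument still applies.

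When you verify the gap, phrase the continuous extension as a map into $[-\infty,0]$. This also covers $\theta=\pm\pi$ when $\alpha=1/2$ in part (i). Its maximum over the compact set $[-\pi,\pi]^{m}\setminus B(0,r_0)$ is then still attained and strictly negative. This also repairs the paper's description of $\{|\theta|\in[r_0,\pi)\}$ as compact.
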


Lemma~\ref{lem:laplace-multi} above converts the local quadratic curvature at the unique maximizer into the polynomial-order attenuation factors used in Proposition~\ref{prop:poisson-functionals}. Its technical conditions ensure that the integral is governed by a single interior quadratic peak. Specifically, the uniqueness excludes competing exponential contributions, the vanishing gradient and negative definite Hessian yield the Gaussian local form, and the condition of $h(x_0)>0$ prevents the leading prefactor from disappearing. This lemma follows from Proposition~3.2.1 together with Equations~(3.16) and~(3.23) in Section~3.2 of \cite{Butler2007}.




\subsection{Proofs of auxiliary lemmas} \label{proof.lem:coarsen-independence}

\noindent \textit{Proof of Lemma~\ref{lem:poisson-fourier}.}
The proof of this lemma will use the classical Fourier-series characterization of the Poisson kernel. Starting from the absolutely convergent expansion
\[
P_r(\theta)=1+2\sum_{m=1}^{\infty} r^m\cos(m\theta),
\qquad r\in(0,1),
\]
we can integrate term by term against $e^{\mathrm i k\theta}$ over $[-\pi,\pi]$. After division by $2\pi$, this leads to the stated coefficient identity for $\mu_r$. The probabilistic representation for $\mathbb E[r^{|X|}]$ then follows by conditioning on $X$ and applying Fubini's theorem. This completes the proof of Lemma~\ref{lem:poisson-fourier}.

\smallskip

\noindent \textit{Proof of Lemma~\ref{lem:binom-pgf}.}
The proof of this lemma will be a direct application of the binomial theorem. By definition, if $N\sim\mathrm{Binomial}(\ell,p)$, we can obtain that 
\[
\mathbb E[z^N]
=
\sum_{m=0}^{\ell} z^m \binom{\ell}{m}p^m(1-p)^{\ell-m}
=
\bigl((1-p)+pz\bigr)^\ell.
\]
This concludes the proof of Lemma~\ref{lem:binom-pgf}.

\smallskip

\noindent \textit{Proof of Lemma~\ref{lem:mult-cf}.}
The proof of this lemma will represent the multinomial vector as a sum of independent categorical increments and then factor the characteristic function. Let $(Y_1,\dots,Y_\ell)$ be i.i.d.\ categorical random variables taking values in $\{e_1,\dots,e_d\}$ with $\P(Y_1=e_j)=p_j$, so that $N=\sum_{t=1}^\ell Y_t$ has distribution $\mathrm{Multinomial}(\ell,\bp)$. By independence, it follows that 
$$
\mathbb E \left[e^{\mathrm i\langle\theta,N\rangle}\right]
=
\prod_{t=1}^\ell \mathbb E \left[e^{\mathrm i\langle\theta,Y_t\rangle}\right]
=
\left(\sum_{j=1}^d p_j e^{\mathrm i\theta_j}\right)^\ell,
$$ 
which yields the claimed characteristic function. This completes the proof of Lemma~\ref{lem:mult-cf}.

\smallskip

\noindent \textit{Proof of Lemma~\ref{lem:l1-poisson}.}
The proof of this lemma will apply the one-dimensional Poisson-kernel identity coordinatewise under the product measure. Specifically, since the coordinates are independent under the product measure, 
$$\prod_{j=1}^d \mu_r(\theta_j)d\theta_j$$ is a probability measure on $[-\pi,\pi]^d$. Conditioning on $X$ and applying Lemma~\ref{lem:poisson-fourier} to each coordinate, we can deduce that 
$$
\int_{[-\pi,\pi]^d} e^{\mathrm i\langle\theta,X\rangle}\prod_{j=1}^d \mu_r(\theta_j)d\theta_j
=
\prod_{j=1}^d \int_{-\pi}^{\pi} e^{\mathrm i\theta_j X_j}\mu_r(\theta_j)d\theta_j
=
\prod_{j=1}^d r^{|X_j|}
=
r^{\|X\|_1}.
$$
Therefore, taking expectations and applying Fubini's theorem yield the desired identity. This concludes the proof of Lemma~\ref{lem:l1-poisson}.




\end{APPENDICES}

\end{document}